\let\Algorithm\algorithm
\renewcommand\algorithm[1][]{\Algorithm[#1]\setstretch{1.5}}
\newcommand{\A}{\mathcal{A}}
\newcommand{\X}{\mathcal{X}}
\newcommand{\F}{\mathcal{F}}
\newcommand{\D}{\mathcal{D}}
\newcommand{\xBound}{S_x}
\newcommand{\xoBound}{S_{x,o}}
\newcommand{\wBound}{S_w}
\newcommand{\woBound}{S_{w,o}}
\newcommand{\R}{\mathbb{R}}
\newcommand{\rank}{{\mathrm{rank}}}
\newcommand{\Olog}{\widetilde{\mathcal{O}}}
\renewcommand{\det}[1]{\text{det}\pth{#1}}
\newcommand{\pth}[1]{\left( #1 \right) }
\newcommand{\indicator}[1]{\mathbbm{1}_{\brk[c]*{#1}}}
\renewcommand{\log}[1]{\mathrm{log}\pth{#1}}
\renewcommand{\lg}[1]{\mathrm{log}_2\pth{#1}}
\newcommand{\inner}[1]{\brk[a]*{#1} }
\newcommand{\pert}[1]{\hat {#1}}
\newcommand{\Regret}[1]{\textit{Regret}\pth{#1}}
\newcommand{\RegretN}[1]{\textit{Regret}_{#1}}
\newcommand\trace[1]{\text{trace}\pth{#1}}
\renewcommand\norm[1]{\left\lVert#1\right\rVert}
\newcommand{\parallelsum}{\mathbin{\|}}
\newcommand{\xk}{x^\mathrm{o}}
\newcommand{\xc}{x^\mathrm{h}}
\newcommand{\HmuC}{\pert{\mu}_{\mathrm{h}\mid a}}
\newcommand{\Qk}{Q^\mathrm{o}}
\newcommand{\Qc}{Q^\mathrm{c}}
\newcommand{\G}{\mathcal{G}}
\DeclarePairedDelimiterXPP\expect[2]{\mathbb{E}^{#1}}[]{}{\setargs{#2}}%
\NewDocumentCommand{\setargs}{>{\SplitArgument{1}{|}}m}
{\setargsaux#1}
\NewDocumentCommand{\setargsaux}{mm}
{\IfNoValueTF{#2}{#1}{\nonscript\,#1\nonscript\;\delimsize\vert\nonscript\:\allowbreak #2\nonscript\,}}
\DeclarePairedDelimiterXPP\expectaux[3]{\mathbb{E}_{#1}}[]{}{#2\nonscript\:\delimsize\vert\nonscript\:#3}%
\newcommand{\E}[1]{ \expect*{}{#1} }
\newcommand{\Eb}[1]{ \expect*{\pi_b}{#1} }
\newtheorem{theorem}{Theorem}
\newtheorem*{theorem*}{Theorem}
\newtheorem{lemma}{Lemma}
\newtheorem{corollary}{Corollary}
\newtheorem{assumption}{Assumption}
\newtheorem{remark}{Remark}
\newtheorem*{remark*}{Remark}
\newtheorem{proposition}{Proposition}
\newtheorem{property}{Property}
\DeclarePairedDelimiter\br{(}{)}
\DeclarePairedDelimiter\brc{\{}{\}}
\let\oldappendix\appendices
\renewcommand{\appendices}{%
  \clearpage
  \renewcommand{\thesection}{\Alph{section}}
  \let\tf@toc\tf@app
  \addtocontents{app}{\protect\setcounter{tocdepth}{2}}
  \immediate\write\@auxout{%
    \string\let\string\tf@toc\string\tf@app^^J
  }
  \oldappendix
}%
\newcommand{\listofappendices}{%
  \begingroup
  \renewcommand{\contentsname}{\appendixtocname}
  \let\@oldstarttoc\@starttoc
  \def\@starttoc##1{\@oldstarttoc{app}}
  \tableofcontents
  \endgroup
}
\title{Bandits with Partially Observable Confounded Data}
\author[1]{Guy Tennenholtz}
\author[1]{Uri Shalit}
\author[1,2]{Shie Mannor}
\author[1]{Yonathan Efroni}
\affil[1]{%
    Technion, Israel Institute of Technology
}
\affil[2]{%
    Nvidia Research
}
\begin{document}
\maketitle

\begin{abstract}
We study linear contextual bandits with access to a large, confounded, offline dataset that was sampled from some fixed policy. We show that this problem is closely related to a variant of the bandit problem with side information. We construct a linear bandit algorithm that takes advantage of the projected information, and prove regret bounds. Our results demonstrate the ability to take advantage of confounded offline data. Particularly, we prove regret bounds that improve current bounds by a factor related to the visible dimensionality of the contexts in the data. Our results indicate that confounded offline data can significantly improve online learning algorithms. Finally, we demonstrate various characteristics of our approach through synthetic simulations.
\end{abstract}

\section{Introduction}
\label{sec: intro}

The use of offline data for online control is of practical interest in fields such as autonomous driving, healthcare, dialogue systems, and recommender systems \citep{mirchevska2017reinforcement,murphy2001marginal,li2016learning,covington2016deep}. There, an abundant amount of data is readily available, potentially encompassing years of logged experience. This data can greatly reduce the need to interact with the real world, as such interactions may be both costly and unsafe \citep{amodei2016concrete}. Nevertheless, as offline data is usually generated in an uncontrolled manner, it poses major challenges, such as unobserved states and actions. Failing to take these into account may result in biased estimates that are confounded by spurious correlation \citep{gottesman2019guidelines}. This work focuses on utilizing partially observable offline data in an online bandit setting.

We consider the stochastic linear contextual bandit setting \citep{auer2002using,chu2011contextual,zhou2019learning}. Here, the context is a vector $x \in \R^d$ encompassing the full state of information. We assume to have additional access to an offline dataset in which only $L < d$ covariates (features) of the context are available. The unobserved covariates in the data are known as unobserved confounding factors in the causal inference literature \citep{bookofwhy}, which may cause spurious associations in the data, rendering the data useless unless further assumptions are made \citep{neuberg2003causality,shpitser2012counterfactuals,bareinboim2015bandits}. In this work we assume that, when interacting with the online environment, the full context is accessible, and search for methods to combine both sources of information (online and offline) to quickly converge to an optimal solution.

We construct an algorithm that is provably superior to an algorithm which does not utilize the (partially observable) information in the data. We recognize the following fundamental observation: \textbf{Confounded offline data can (still) be used to improve online learning}, and specifically, that partially observable offline data can be utilized as linear side information (linear constraints) for the bandit problem.

While the bandit setting with confounded offline data has already been explored, its combination with a fully observable online environment is a new setting with particular challenges and benefits. First, one cannot ensure identification of an optimal policy with confounded offline data (see Section~\ref{sec: from data to linear constraints}). This has implications on safety and applicability of algorithms which are based solely on offline data, e.g., the confounding bias of offline critical care datasets \citep{johnson2016mimic}. Second, in contemporary widespread applications, an abundant of offline data is readily available. These application do not necessarily prevent interactions with the real world. On the contrary, countless real-world applications can access the real world. Still, such interactions may be costly, time consuming, or unsafe. It is thus vital to utilize the enormous amounts of previously collected offline data to reduce as much as possible the need for online interactions. We discuss two concrete examples from the healthcare and traffic management domains below.

\begin{figure*}[t!]
\captionsetup[subfigure]{labelformat=empty}
\centering
\begin{subfigure}{.35\textwidth}
    \centering
    \includegraphics[width=\textwidth]{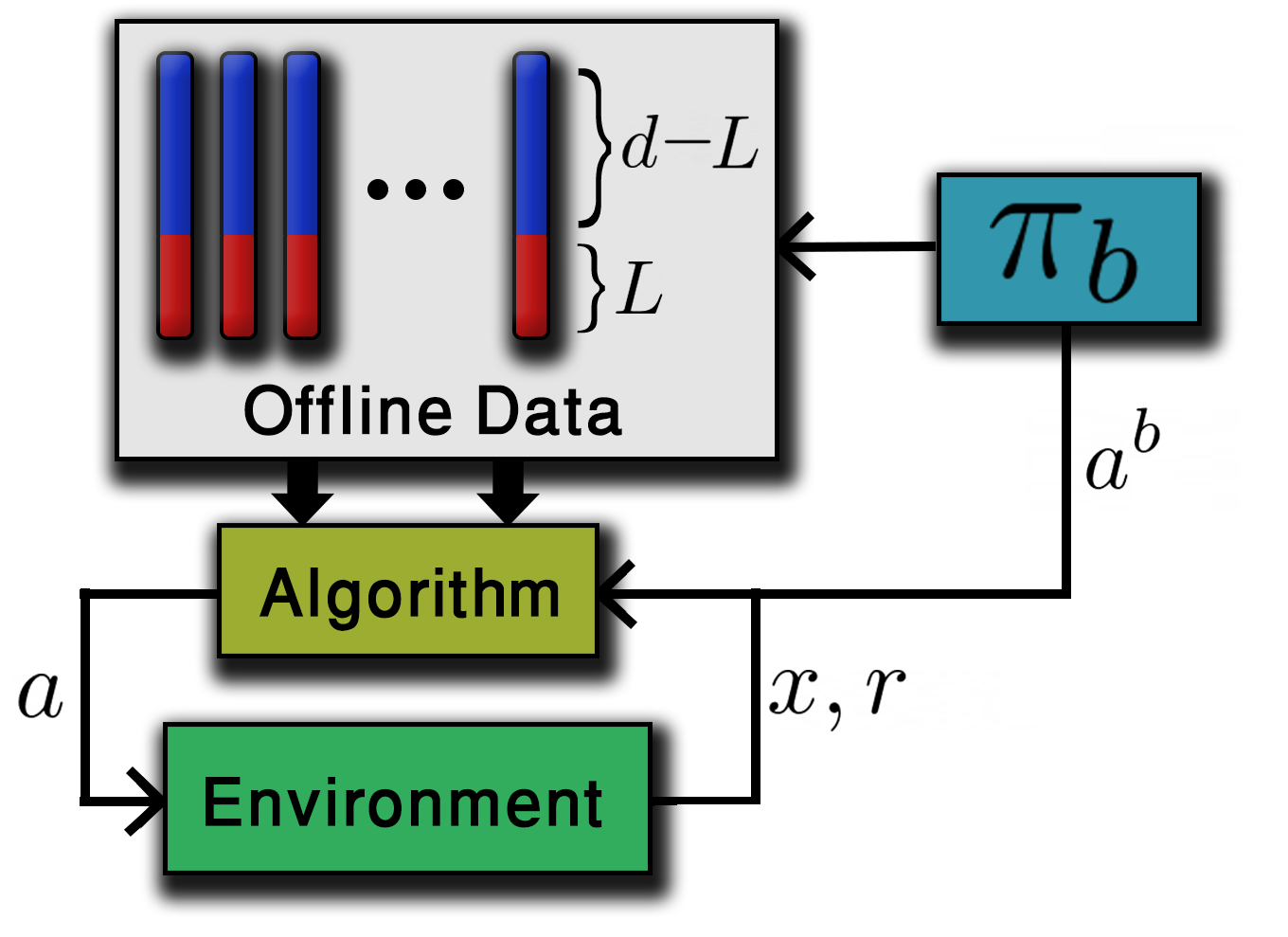}
    \caption{(a)}
\end{subfigure}
\begin{subfigure}{.35\textwidth}
    \centering
    \includegraphics[width=\textwidth]{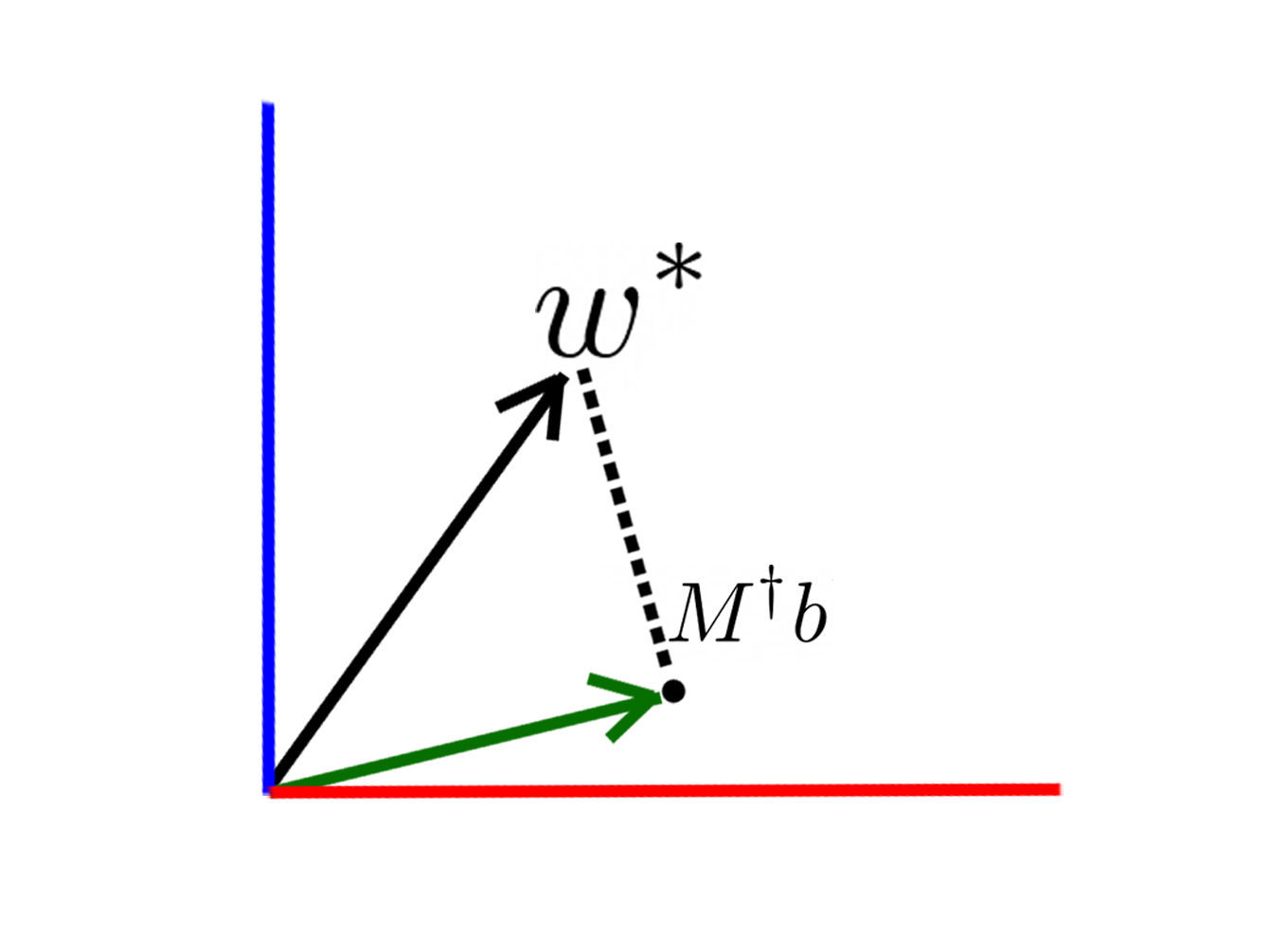}
    \caption{(b)}
\end{subfigure}
\centering
\caption{\label{fig: setup and projections} \footnotesize \textbf{(a)} Block diagram of our setup: an online learner interacting with an environment while utilizing partially observable offline data that was generated by a behavior policy $\pi_b$. \textbf{(b)} This plot depicts the projection of $Mw^* = b$. We show that partially observable offline data can provide us with approximate linear side information of this form. The online learner must then estimate the orthogonal subspace, attempting to reduce the effective dimensionality of the problem. }
\end{figure*} 

\noindent \textbf{Healthcare. } 
Consider the important challenge of cancer chemotherapy control; specifically, optimal drug dosing for cancer chemotherapy \citep{sbeity2015review}. Clinicians usually follow established guidelines for treating each patient, prescribing drug doses according to the stage of the tumor, the weight of the patient, white blood cell levels, concurrent illnesses, and the age of the patient. Suppose we are given access to large amounts of medical records of chemotherapy plans, specifying the frequency and dose of drug administration as well as their effect on the patient. Due to privacy regulations, the patients' socioeconomic characteristics are removed from the data. Nevertheless, these features may have affected the physician's decisions, as well as the outcome of the prescribed treatments.  Next, suppose we are able to interact with the world, where the full state of the patients' information is available to us. How would we efficiently construct an algorithm to automate chemotherapy treatment while also utilizing the partially observable, confounded data?

\noindent \textbf{Smart City Traffic Management. } 
Consider the problem of adjusting traffic signals based on real-time traffic conditions using video footage of cameras located over intersections. The development time of the system consists of continual addition of new labels (classes) for the different types of vehicles and pedestrians based on relevant characteristics that may affect traffic congestion. Due to this recurrent process, data that was gathered in previous times may render itself useless, outdated, and even harmless, unless handled properly. This is due to the fact that some of the new information in the state was not previously collected, yet is needed for training future control strategies. How should one use the partially observable historical data for improving the most recent online system?


In this work we show how the confounded information in the data can be utilized for the online bandit problem. Figures~\ref{fig: setup and projections} and~\ref{fig: block diagram} illustrate our basic setup and approach. We show how confounded offline data can be thought of as linear constraints to the online problem. These linear constraints, are not fully known. They are in fact dependent on the cross-correlation matrix of the context vector induced by policy that generated the data (which we denote as the behavior policy, $\pi_b$). To learn these constraints and utilize them, we approximate the cross-correlation matrix through online interactions and carefully integrate them into our learning algorithm, decreasing the overall regret.

The contributions of our work are as follows. As a fundamental contribution we propose a framework for combining confounded offline data with online learning. This framework is a gateway between fully confounded offline data to online learning, and encompasses a variety of important problems and applications. While this work only considers the linear bandit setting, it sets the building blocks and insights needed for more complex settings (e.g., reinforcement learning). Our second contribution shows that partially observable confounded data can in fact be realized as linear constraints for the online problem (see Section~\ref{sec: from data to linear constraints}). To the best of our knowledge, this work is the first to show this relation. Finally, we prove that the overall regret can indeed be decreased when using the confounded data. 
Our proof, too, consists of technical obstacles related to the approximate constraints, which must be learned simultaneously.

\begin{figure*}[t!]
\centering
\includegraphics[width=\textwidth]{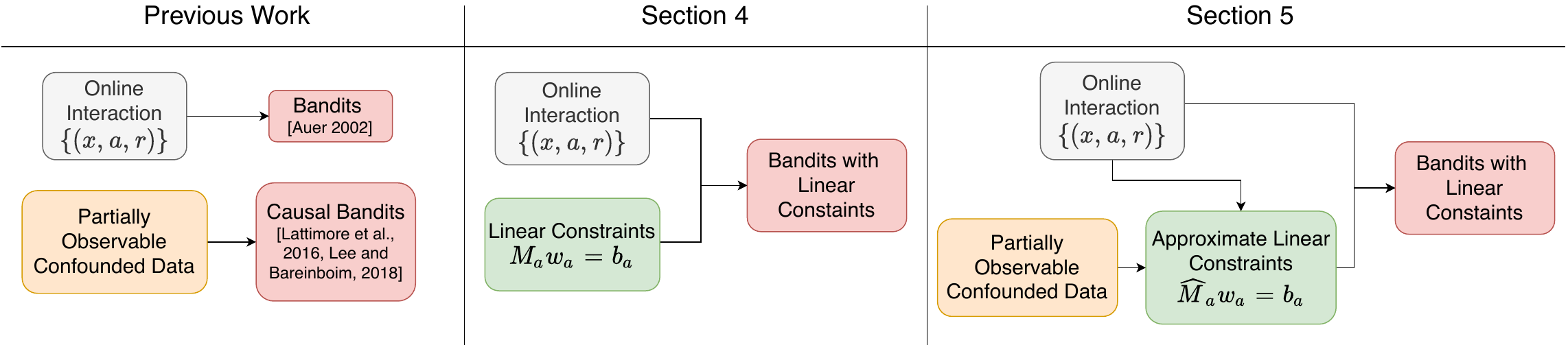}
\centering
\caption{ \label{fig: block diagram}  \footnotesize  Previous work has dealt with bandits in the online setting. Other work integrated causal information to utilize confounded offline data. This work combines the two through constraints on the online problem. In Section~\ref{sec: side information} we show how linear constraints can be leveraged to achieve better regret for the bandit problem (Theorem~\ref{thm: regret bound oful}). Then, in Section~\ref{section: partially observable covariates} partial linear constraints are estimated from online interactions, and then utilized efficiently by our learning algorithm. Note that $b_a$ is not estimated as it is previously computed from the offline data (see Section~\ref{sec: from data to linear constraints}). Finally, due to fast convergence of the linear constraints, improved performance is still achieved (Theorem~\ref{thm: oful with data}).}
\end{figure*}

\section{Problem Setting}
\label{sec: preliminaries}

\textbf{Notations. } 
We use $[n]$ to denote the set $\brk[c]*{1, \hdots, n}$. We denote by $I_m$ the $m \times m$ identity matrix. Let $y, z \in \R^d$ and $A, B \in \R^{d \times d}$. We use $\norm{z}_2$ to denote the $\ell_2$-norm and $z^T$ the transpose of $z$. The inner product is represented as $\inner{z, y}$. For $A$ semi-positive definite, the weighted $\ell_2$-norm is denoted by $\norm{z}_A = \sqrt{z^TAz}$. The minimum and maximum singular values of $A$ are denoted by $\lambda_{min}(A)$ and $\lambda_{max}(A)$ respectively. Furthermore, $A \preceq B$ if $B - A$ is positive semi-definite. The spectral norm of $A$ is denoted by ${\norm{A}_2 = \sup_{\norm{x}_2 = 1}\norm{Ax}_2}$. The Moore-Penrose inverse of $A$ is denoted by $A^\dagger$. 
Finally, we use $\mathcal{O}(x)$ to refer to a quantity that depends on $x$ up to a poly-log expression in $d,T$ and~$\delta$, and $\Olog(x)$ represents the leading dependence of $x$ in $d,T$ and $K$. 

\noindent \textbf{Setup. }
Our basic framework consists of sequential interactions of a learner with an environment. We assume the following protocol, which proceeds in discrete trials $t=1, \hdots, T$. At each round $t \in [T]$ the environment outputs a context $x_t \in \X \subseteq \R^d$ sampled from some unknown distribution $\mathcal{P}_x$. We assume that $x_1, \hdots x_T$ are i.i.d. Based on observed payoffs in previous trials, the learner chooses an action $a_t \in \A$, where $\A = [K]$ is the learner's action space. Subsequently, the learner observes a reward $r_t = \inner{x_t, w^*_{a_t}} + \eta_t$, where $\brk[c]*{w^*_{a} \in \R^d}_{a \in \A}$ are unknown parameter vectors, and $\eta_t$ is some conditionally $\sigma$-subgaussian random noise, i.e., for some $\sigma > 0$
\begin{align*}
\E{e^{\lambda \eta_t} | F_{t-1}} \leq \exp\pth{\frac{\lambda^2 \sigma^2}{2}}.
\end{align*}
Here, $\brk[c]*{F_t}_{t=0}^\infty$ is any filtration of $\sigma$-algebras such that for any $t \geq 1$, $x_t$ is $F_{t-1}$-measurable and $\eta_t$ is $F_t$-measurable, e.g., the natural $\sigma$-algebra $F_{t-1} = \sigma \pth{(x_1, a_1, \eta_1), \hdots ,(x_{t-1}, a_{t-1}, \eta_{t-1}), x_t, a_t}$.

The goal of the learner is to maximize the total reward $\sum_{t=1}^T \inner{x_t, w^*_{a_t}}$ accumulated over the course of $T$ rounds. We evaluate the learner against the optimal strategy, which has knowledge of $\brk[c]*{w^*_a \in \R^d}_{a \in \A}$, namely $\pi^*(x) \in \arg\max_{a \in \A} \inner{x, w^*_{a}}$. The difference between the learner and optimal strategy's total reward is known as the regret, and is given by
\vspace{-0.02cm}
\begin{equation*}{
    \Regret{T} 
    =
    \sum_{t=1}^T \inner{x_t, w^*_{\pi^*(x_t)}}
    -
    \sum_{t=1}^T \inner{x_t, w^*_{a_t}}.}
\end{equation*}
In this work we assume to have additional access to a partially observable offline dataset, consisting of partially observable contexts, actions, and rewards. Specifically, we assume a dataset ${\D = \brk[c]*{Q x_i, a_i, r_i}_{i=1}^N}$, in which $\brk[c]*{x_i}_{i=1}^N$ are i.i.d. samples from $\mathcal{P}_x$, $\brk[c]*{a_i}_{i=1}^N$ were generated by some fixed behavior policy, denoted by $\pi_b$, which is a mapping from contexts $x \in \X$ to a probability over actions, and $\brk[c]*{r_i}_{i=1}^N$ were generated by the same model described above. Here, we used $Q \in \R^{L \times d}$ to denote the rectangular matrix
$
Q
=
\begin{pmatrix}
    I_{L} & 0
\end{pmatrix}
$.
That is, without loss of generality, we assume only the first $L$ features of $x_i$ are visible in the data. Throughout our work we will sometimes use the notation $\xk$ and $\xc$ to denote the observed and unobserved (hidden) covariates of $x$, respectively. That is, $x = \begin{pmatrix} (\xk)^T , (\xc)^T \end{pmatrix}^T$, where $\xk \in \R^L,\ \xc \in \R^{d-L}$. 

Notice that the distribution of $\D=\brk[c]*{\xk_i, a_i, r_i}_{i=1}^N$, the partially observable dataset, depends on $\pi_b$. Any statistic we attempt to draw from the offline data depends on the measure induced by $\pi_b$, which we denote by $P^{\pi_b}$ \footnote{More precisely, we define the measure $P^{\pi_b}$ for all Borel sets $R \subseteq [0,1]$, $X \subseteq \X$ and $A \in \A$
$
    P^{\pi_b}(r \in R, x \in X, a \in A)
    =
$ \\
${
    P(r \in R | x \in X, a \in A)
    P(x \in X)
    \int_{x' \in X, a' \in A}
    \indicator{a = a', x = x'} d\pi_b.}
$}. Figure~\ref{fig: setup and projections} depicts a diagram of our basic setup and approach.

\section{From Partially Observable Offline Data to Linear Side Information}
\label{sec: from data to linear constraints}

Consider only having access to the partially observable offline data $\D$. Having access to such data is mostly useless without further assumptions. Particularly, $w^*_a$ may not be identifiable \footnote{We use the notion of identifiability as defined in Definition~2 of \citet{pearl2009causal}}. In fact, it can be shown that for any behavioral policy $\pi_b$ and induced measure $P^{\pi_b}$, $\brk[c]*{w_a^*}_{a \in \A}$ are not identifiable. More specifically, for all ${w^1 = \brk[c]*{w_a^1}_{a \in \A}}$, exist ${w^2 = \brk[c]*{w_a^2}_{a \in \A} \neq w^2}$ and probability measures $P_1, P_2$ such that ${P_1\pth{\xk, a, r ; w^1, \pi_b} = P_2\pth{\xk, a, r ; w^2, \pi_b} }$ and ${\pi_b\pth{a, x; w^1} = \pi_b\pth{a, x; w^2}}$. This claim is a standard type of result. A proof is provided in the supplementary material. 

To mitigate the identification problem, prior knowledge of characteristics of $\brk[c]*{w_a^*}_{a \in \A}$ can be leveraged \citep{cinelli2019sensitivity}. Instead, here we consider access to an online environment, where the covariates that were unobserved in the data are supplied, i.e., fully observed. This enables us to deconfound the data and identify  $\brk[c]*{w_a^*}_{a \in \A}$. 

Prior to constructing our algorithmic approach, we discuss the relation of confounded offline data to partially known linear constraints. This connection is a principal component of our work which enables us to utilize the (possibly not identifiable) partially observable data.

\subsection{Linear Side Information}
In what follows, we show how partially observable data can be reduced to linear constraints of the form $\brk[c]*{M_a w^*_a = b_a, a \in \A}$. Nevertheless $M_a$ will not be identifiable solely from the offline data. More specifically, we specify a low dimensional least squares problem under a model mismatch, showing it converges to a solution with unique structural properties. This will become beneficial in our analysis later on, allowing us to project the linear bandit problem to an approximate lower dimensional subspace, improving performance guarantees.

Let us first consider the case of fully-observable offline data, i.e., $\xk = x$. Here, one would be able (with large amounts of data) to closely estimate $w_a^*$ for all $a \in \A$, using, for example, the linear regression estimator
\begin{align*}
    \hat{w}_a
    =
    \pth{\frac{1}{N_a}\sum_{i = 1}^{N_a} x_ix_i^T}^{-1} 
    \pth{\frac{1}{N_a}\sum_{i = 1}^{N_a} x_i r_i},
\end{align*}
where we denoted $N_a = \sum_{i=1}^N \indicator{a_i = a}$. With $N \to \infty$, under mild assumptions, this estimator would converge to the true weights $w_a^*$ almost surely. 
It is tempting to try and apply a least square estimator to our partially observable data using a lower dimensional model. Particularly, we might try to solve the optimization problem
\begin{align*}
    \min_{b\in \mathbb{R}^L}\sum_{i = 1}^{N_a}\br*{\inner{\xk_i, b} - r_i}^2\ \quad,\forall a\in \A,
\end{align*}
ignoring the fact that $r_i = x_i^T w^*_a + \eta_i$, i.e., that $r_i$ was generated by a higher dimensional linear model. Solving this problem yields
\begin{align}
    b^{LS}_a = \br*{\frac{1}{N_a}\sum_{i = 1}^{N_a} \pth{\xk_i}\pth{\xk_i}^T}^{-1} \br*{\frac{1}{N_a}\sum_{i = 1}^{N_a} \xk r_i}. \label{eq: omega LS}
\end{align}

The following proposition establishes our first main result -- a relation between the lower-dimension least-square estimator $b^{LS}_a$ and the vector $w^*_a$ in the limit of large data $N\rightarrow \infty$ (We discuss the finite data setting in Section~\ref{sec: discussion}).

\begin{restatable}{proposition}{proprestate}[Confoundness $=$ Linear Constraints]
\label{proposition: omega LS to w}
~\\Let $R_{11}(a) = \Eb{\xk\pth{\xk}^T | a}$,  $R_{12}(a) =  \Eb{\xk\pth{\xc }^T | a}$. Assume $R_{11}(a)$ is invertible for all $a\in \A$ \footnote{The invertibility assumption on $R_{11}$ can be verified, since $R_{11}$ can be estimated by statistics of the observable covariates, $\xk$. If it does not hold, other covariates of $\xk$ can be chosen to satisfy this assumption.}. Then, the following holds almost surely for all $a\in \A$.
\begin{equation*}
    \lim\limits_{N \to \infty} b^{LS}_a 
    =
    \brk[r]3{
        I_{L}, \quad R_{11}^{-1}(a)R_{12}(a)
    }
    w^*_a.
\end{equation*}
\end{restatable}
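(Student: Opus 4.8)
The plan is to compute the large-$N$ limit of $b^{LS}_a$ directly by substituting the data-generating model $r_i = \inner{x_i, w^*_a} + \eta_i = \inner{\xk_i, w^{*,\mathrm{o}}_a} + \inner{\xc_i, w^{*,\mathrm{h}}_a} + \eta_i$ into the closed-form expression \eqref{eq: omega LS}, and then applying a law of large numbers to each empirical average.

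First I would write $\frac{1}{N_a}\sum_{i=1}^{N_a} \xk_i r_i = \frac{1}{N_a}\sum_{i=1}^{N_a} \xk_i (\xk_i)^T w^{*,\mathrm{o}}_a + \frac{1}{N_a}\sum_{i=1}^{N_a} \xk_i (\xc_i)^T w^{*,\mathrm{h}}_a + \frac{1}{N_a}\sum_{i=1}^{N_a} \xk_i \eta_i$. The key observation is that conditioned on $a_i = a$, the samples $\{(\xk_i, \xc_i, \eta_i)\}$ used in these averages are i.i.d.\ draws from the behavior measure $P^{\pi_b}(\cdot \mid a)$, so by the strong law of large numbers the first two empirical second-moment matrices converge almost surely to $R_{11}(a)$ and $R_{12}(a)$ respectively, while the noise cross term converges to $\Eb{\xk \eta \mid a} = 0$ since $\eta_i$ is conditionally mean-zero (it is $\sigma$-subgaussian given $F_{t-1}$, hence has zero conditional mean, and $\xk_i$ is $F_{t-1}$-measurable). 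Likewise $\frac{1}{N_a}\sum_{i=1}^{N_a} \xk_i (\xk_i)^T \to R_{11}(a)$ almost surely. Combining these with the continuity of matrix inversion at the invertible matrix $R_{11}(a)$ gives
\[
\lim_{N\to\infty} b^{LS}_a = R_{11}^{-1}(a)\pth{R_{11}(a) w^{*,\mathrm{o}}_a + R_{12}(a) w^{*,\mathrm{h}}_a} = w^{*,\mathrm{o}}_a + R_{11}^{-1}(a) R_{12}(a) w^{*,\mathrm{h}}_a,
\]
which is exactly $\brk[r]3{I_L,\ R_{11}^{-1}(a)R_{12}(a)} w^*_a$ once one recalls $w^*_a = \pth{(w^{*,\mathrm{o}}_a)^T, (w^{*,\mathrm{h}}_a)^T}^T$.

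A few technical points need care. One must ensure $N_a \to \infty$ almost surely as $N \to \infty$, which holds as long as $\pi_b$ places positive probability on action $a$ on a non-null set of contexts (otherwise $b^{LS}_a$ is undefined anyway); this also justifies indexing the limit by $N$ rather than $N_a$. One should also check integrability so the SLLN applies — i.e.\ that the relevant second moments $\Eb{\norm{\xk}_2^2 \mid a}$, $\Eb{\norm{\xk}_2\norm{\xc}_2 \mid a}$ are finite and $\Eb{\norm{\xk}_2 \abs{\eta} \mid a} < \infty$; these follow from the boundedness/subgaussianity assumptions implicit in the setup (bounded contexts, subgaussian noise). The main obstacle, such as it is, is purely bookkeeping: being careful that the conditioning on $\{a_i = a\}$ genuinely yields i.i.d.\ samples from a well-defined measure $P^{\pi_b}(\cdot\mid a)$ — this is where the footnote defining $P^{\pi_b}$ is used — and that the almost-sure convergence of the pieces can be combined (a finite intersection of probability-one events is probability one) before passing through the continuous map $(A, v) \mapsto A^{-1} v$. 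There is no deep difficulty here; the proposition is essentially an observation that the low-dimensional least-squares fit under model mismatch recovers the orthogonal-projection-type combination of the true high-dimensional parameter.
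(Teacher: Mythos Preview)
Your proposal is correct and follows essentially the same route as the paper: decompose $r_i$ into its observed-feature, hidden-feature, and noise components, apply the strong law of large numbers to each empirical average, and combine limits through the continuous map $(A,v)\mapsto A^{-1}v$. The one technical difference is in how the conditioning on $\{a_i=a\}$ is handled: rather than arguing directly that the selected subsample is i.i.d.\ from $P^{\pi_b}(\cdot\mid a)$ as you do, the paper rewrites each term $\frac{1}{N_a}\sum_{n}\indicator{a_n=a}(\cdot)$ as a ratio of two full-sample averages, $\bigl(\frac{1}{N}\sum_n \indicator{a_n=a}(\cdot)\bigr)\big/\bigl(N_a/N\bigr)$, applies the SLLN separately to numerator and denominator (both being averages of genuinely i.i.d.\ terms), and then invokes the continuous mapping theorem for the quotient --- this cleanly sidesteps the subsampling subtlety you flagged as the main obstacle.
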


The proof of the proposition is related to regression analysis with misspecified models (see e.g., \citet{griliches1957specification})
and is provided in the supplementary material. It states that, with an infinite amount of data, the low-dimensional least squares estimator in Equation~\eqref{eq: omega LS} converges to a linear transformation of $w_a^*$. This linear transformation depends on the auto-correlation matrix of $\xk$, $R_{11}(a)$, and the cross correlation matrix of $\xk$ and $\xc$, $R_{12}(a)$. While $R_{11}(a)$ can be estimated from the data, $R_{12}(a)$ depends on unseen features of $x$, namely $\xc$, as well as the behavior policy $\pi_b$, and can thus not be approximated from the given data. As such, we will later assume access to a monotonically non-increasing bound of $R_{12}(a)$ for all $a \in \A$. As we discuss in Section~\ref{section: partially observable covariates}, such a bound can be achieved, for example, through queries to $\pi_b$ (i.e., samples $a \sim \pi_b)$.

Proposition~\ref{proposition: omega LS to w} provides us with a structural dependency between $w_a^*$ and the low-order least squares estimator $b_a^{LS}$ that can be calculated from the offline data. Specifically, every $w_a^*$ is constrained to a set
$
    \brk[c]*{w \in \R^d : Mw = b},
$
for some full row rank matrix ${M \in \R^{L \times d}}$ and vector $b \in \R^L$. A natural question arises: How can such linear side information be used? In the next section we show that we can decrease the effective dimensionality of our problem using such linear side information whenever $M$ and $b$ are known exactly. Then, in Section~\ref{section: partially observable covariates}, we expand this result using estimates of the linear relation in Proposition~\ref{proposition: omega LS to w}. We provide improved regret bounds on the linear contextual bandit problem, consequently exploiting the confounded information present in the partially observable data.

\section{Linear Contextual Bandits with Linear Side Information}
\label{sec: side information}

In the previous section we showed how partially observable data can be reduced to linear constraints. Before diving into the subtleties of utilizing the specific structural properties of the linear relations in Proposition~\ref{proposition: omega LS to w}, we form a general result for linear bandits under linear side information when both $M$ and $b$ are given. Particularly, we show that linear side information can be used to improve performance by decreasing the effective dimensionality of the underlying problem.

Assume we are given linear side information
\begin{equation}
\label{eq: side information}
    M_{a} w^*_{a} = b_{a} \qquad, a \in \A.
\end{equation}

In this section we assume $M_{a} \in \mathbb{R}^{L\times d}, b_{a}\in\mathbb{R}^L$ are \textit{known}, and don't assume any structural characteristics. Without loss of generality assume that $\{M_a\}_{a \in \A}$ are full row rank \footnote{If $M_a$ is not full row rank, we remove dependent rows. In fact, we assume $L$ to be the rank of $M_a$.}. One way of using the relations in Equation~\eqref{eq: side information} is by constraining an online learning algorithm to a lower dimensional space. Particularly, notice that for all $a \in \A$,
\begin{align}
\label{eq: side information set}
    w^*_a \in \brk[c]*{w \in \R^d : w = M_a^\dagger b_a + P_a w}, 
\end{align}
where $P_a$ is the orthogonal projection onto the kernel of $M_a$, and is given by
$
    P_a = I - M_a^\dagger M_a.
$ Equation~\eqref{eq: side information set} suggests that knowledge of the linear relation in Equation~\eqref{eq: side information} may allow us to reduce the estimation problem to that of the projected vector, $P_a w_a^*$. Indeed, we may attempt to solve the following corrected, low order ridge regression problem
\begin{align}
    \min_{w\in \mathbb{R}^d}\brk[c]*{\sum_{i=1}^{t-1} \pth{\inner{x_i, P_aw}-y_{a,i}}^2 + \lambda\norm{P_aw}_2^2 },
    \label{eq: prr optimization problem}
\end{align}
where $y_{a,i} =  r_i - \inner{x_i, M_{a_i}^\dagger b_{a_i}}$. Taking its smallest norm solution yields
\begin{align}
    \hat{w}_{t,a}^{P_a}
    =
    &\pth{P_a \pth{\lambda I + \sum_{i=1}^{t-1} x_ix_i^T} P_a}^\dagger \times \nonumber \\
    &\pth{\sum_{i=1}^{t-1} r_i x_i
    -
    \sum_{i=1}^{t-1} x_i x_i^T M_a^\dagger b_a}.
    \label{eq: Pw estimator}
\end{align}

\begin{algorithm}[t!]
\caption{{OFUL with Linear Side Information}}
\label{algo: projected oful}
\begin{algorithmic}[1]
\small
\STATE {\bf input:} $\alpha > 0, M_a \in \R^{L\times d}, b_a \in \R^L, \delta>0$
\STATE {\bf init:} $V_a = \lambda I_d, Y_a = 0, \forall a \in \A$
\FOR{$t = 1, \hdots$}
    \STATE  Receive context $x_t$
    \STATE  $\hat{w}_{t,a}^{P_a} = \pth{P_a V_a P_a}^\dagger
    \pth{Y_a - \pth{V_a - \lambda I_d} M_a^\dagger b_a}$
    \STATE ${\hat{y}_{t,a} =
    \inner{x_t, M_a^\dagger b_a} 
    + 
    \inner{x_t, \hat{w}_{t,a}^{P_a}}}$
    \STATE $\text{UCB}_{t,a} = \sqrt{\beta_{t} (\delta)} \norm{x_t}_{(P_aV_aP_a)^\dagger}$
    \STATE $a_t\in \arg\max_{a\in \A}\brk[c]*{\hat{y}_{t,a} + \alpha \text{UCB}_{t,a}}$
    \STATE Play action $a_t$ and receive reward $r_t$
   \STATE $V_{a_t} = V_{a_t} + x_tx_t^T, Y_{a_t} = Y_{a_t} + x_t r_t$
\ENDFOR
\end{algorithmic}
\end{algorithm}

Perhaps intuitively, this least squares estimator is in fact equivalent to one in a lower dimensional space~$\R^m$, the rank of $P_a$. Indeed, letting $P_a = UU^T$, where $U \in \R^{d \times m}$ is a matrix with orthonormal columns\footnote{As orthogonal projection matrices have eigenvalues which are either 0 or 1, any projection matrix can be decomposed into $P = UU^T$, where $U$ is a matrix with $\rank(P)$ orthonormal columns.}, we have that (see supplementary material for full derivation)
\begin{align*}
    U^T \hat{w}_{t,a}^{P_a} 
    = 
    &\pth{\lambda I_m + \sum_{i=1}^{t-1} \pth{U^T x_i} \pth{U^T x_i}^T}^{-1} \times \\
    &\pth{\sum_{i=1}^{t-1} y_{a,i} \pth{U^T x_i}}.
\end{align*}
That is, $U^T\hat{w}_{t,a}^{P_a}$ is a least squares estimator in~$\mathbb{R}^m$. 

We are now ready to construct a least squares variant for $w^*_a$, which utilizes the information in Equation~\eqref{eq: side information}. Having an estimation for $P_a w^*_a$, we make use of the set defined in Equation~\eqref{eq: side information set} to construct our final estimator
$
    \hat w_{a,t} = M^\dagger_a b_a + \hat{w}_{t,a}^{P_a},
$
where $\hat w^{P_a}_{t,a}$ is given by Equation~\eqref{eq: Pw estimator}. Then, estimation of $\hat w_{a,t}$ will depend on the rank of $P_a$, i.e., $\rank(P_a) = d-L$. In what follows we will show how this projected estimator can be integrated into a linear bandit algorithm, reducing its effective dimensionality to that of the rank of $P_a$, i.e., $d-L$.



Algorithm~\ref{algo: projected oful} describes the reduction of the OFUL algorithm \citep{abbasi2011improved} to its projected variant, in which linear side information is leveraged by means of low order ridge regression (Equations~\eqref{eq: prr optimization problem}) to decrease the effective dimensionality of the problem. In Line~5 of the algorithm, the estimator of Equation~\eqref{eq: Pw estimator} for $P_a w^*_a$ is used. This becomes useful in Line~7, as the confidence set around $w^*_a$ is reduced to a lower dimension, i.e., $d-L$.

For all $a \in \A$, assume $\norm{P_a x_i}_2 \leq \xoBound$ almost surely and $\norm{P_a w_a^*}_2\leq \woBound$. Letting ${\sqrt{\beta_t(\delta)} = \lambda^{1/2}\woBound + \sigma \sqrt{(d-L) \log{\frac{K(1+t\xoBound^2/\lambda)}{\delta}}}}$, the following theorem provides the improved regret of Algorithm~\ref{algo: projected oful}. Its proof is given in the supplementary material, and is based on a reduction of the linear bandit problem to a lower dimensional space, based on Equation~\eqref{eq: Pw estimator}.

\begin{theorem}
\label{thm: regret bound oful}
For all $T\geq 0$, with probability at least $1-\delta$, the regret of Algorithm~\ref{algo: projected oful} is bounded by
\begin{align*}
    Regret{T}\leq \Olog\br*{(d-L)\sqrt{KT}}.
\end{align*}
\end{theorem}


Indeed, by Theorem~\ref{thm: regret bound oful}, linear relations of rank $L$ reduce the linear bandit problem to a lower dimensional problem, with regret guarantees that are equivalent to those of a linear bandit problem of dimension $d-L$. However, these results hold only for $M_a, b_a$ that are fully known. When $\brk[c]*{M_a}_{a \in \A}$ are unknown, we must rely on estimations of $M_a$. The accuracy of our estimation as well as its rate of convergence would highly affect the applicability of such constraints. As we will see next, the linear transformation of Proposition~\ref{proposition: omega LS to w} can be efficiently estimated whenever $R_{12}$ can be efficiently estimated. Such an assumption will allow us to achieve similar regret guarantees under mild conditions.

\begin{algorithm}[t!]
\caption{{OFUL with Partially Observable           Offline Data}}
\label{algo: oful with doubling}
\begin{algorithmic}[1]
\small
\STATE {\bf input:} $\alpha\! >\! 0,\delta\!>\!0, T$, $b_a\!\in\! \mathbb{R}^L$ (from dataset)
\FOR{$n = 0, \hdots, \lg{T}-1$}
    \STATE Use $2^n$ previous samples from $\pi_b$ to 
    \STATEx ~~~~update the estimate of $\pert{M}_{2^n,a}, \forall a\in \A$ 
    \STATE Calculate $\pert{M}_{2^n,a}^\dagger,\pert{P}_{2^n,a}$, $\forall a\in \A$
    \STATE Run Algorithm~\ref{algo: projected oful} for $2^n$ time steps with bonus
    \STATEx ~~~~$\sqrt{\beta_{n,t}(\delta)}$ and  $\pert{M}_{2^n,a}, b_a$
\ENDFOR
\end{algorithmic}
\end{algorithm}

\section{Deconfounding Partially Observable Data}
\label{section: partially observable covariates}

This section builds upon the observations collected in the previous sections in order to construct our second main result: an algorithm that leverages large, partially observable, offline data in the online linear bandit setting. While Proposition~\ref{proposition: omega LS to w} seemingly provides us with linear side information in the form of linear equalities $M_a w^* = b_a$, the matrix $M_a$ cannot be obtained from the partially observable offline data, since $R_{12}(a)$ depends on the unobserved covariates $\xc$, as well as the behavior policy $\pi_b$. Nevertheless $M_a = \brk[r]*{
        I_{L}, \quad R_{11}^{-1}(a)R_{12}(a)
}$ can be efficiently estimated whenever $R_{12}(a) = \Eb{\xk\pth{\xc }^T | a}$ can be efficiently estimated. Particularly we make the following assumption.
\begin{assumption}
\label{assumption: query pib}
    We assume for every $t > 0$ we can approximate $R_{12}(a), \forall a \in. \A$ such that 
    \begin{align*}
    \norm{R_{12}(a) - \hat{R}_{12}(a, t)}_2 \leq \frac{g(d, L)}{\sqrt{t}} \quad\text{w.h.p.}
    \end{align*}
\end{assumption}

\subsection{Case Study: Queries to $\pi_b$} Consider the problem of identifying the statistic $R_{12}(a)$. Due to its dependence on $\pi_b$, this may be impossible without access to $\pi_b$ or other information on its induced measure,~$P^{\pi_b}$. As such, we assume that during online interactions, the online learner can query~$\pi_b$, i.e., sample an action $a^b \sim \pi_b(x)$. 

Having access to queries from $\pi_b$, we can construct an online estimator for the cross-correlation matrix $R_{12}(a)$. More specifically, at each round $t \in [T]$, we observe a context $x_t$ and query $\pi_b$ by sampling $a_t^b \sim \pi_b(x_t)$. We then estimate $R_{12}(a)$ using the empirical estimator\footnote{In fact, we can construct a tighter estimator for $R_{12}(a)$ using our knowledge of $\Eb{\xk|a}$, which can be estimated exactly from the offline data. We leave its analysis out for clarity.}
\begin{align*}
    \hat{R}_{12}(a, t)
    =
    \frac{1}{t}
    \sum_{i=1}^t
    \frac{\indicator{a_i = a}}
         {P^{\pi_b}(a)}
    \pth{\xk_i} \pth{\xc_i}^T,
\end{align*}
where $P^{\pi_b}(a)$ is known due to the offline data. Assuming $\norm{\xk}_2 \leq S_1$ and $\norm{\xc}_2\leq S_2$ a.s., it can be shown that with probability at least $1-\delta$ (see supplementary material, Lemma~\ref{thm: masked CC estimation}, for proof)
\begin{align*}
    &\norm{R_{12}(a) - \hat{R}_{12}(a, t)}_2 
    \leq \\
    &
    \mathcal{O}\pth{
    S_1S_2
    \sqrt{
        \frac{1}{t} 
        \pth{\frac{\sqrt{\trace{R_{11}}\trace{R_{22}}}}{S_1S_2}}
        \log{\frac{d}{\delta}}
    }},
\end{align*}
indeed, satisfying Assumption~\ref{assumption: query pib}. We can now naturally construct an estimator for $M_a$. Its estimator is given by
\begin{align}
\label{eq: Ma estimator}
    \hat{M}_{t,a} 
    = 
    \brk[r]3{
        I_{L}, \quad R_{11}^{-1}(a)\hat{R}_{12}(a, t)
    }.
\end{align}

A natural question arises: can the estimated linear constraints $\hat{M}_a w^*_a = b_a$ be used as linear side information while still maintaining the regret guarantees of Theorem~\ref{thm: regret bound oful}, i.e., decrease the effective dimensionality of the problems from $d$ to $d-L$? Specifically, we wish to construct a variant of Algorithm~\ref{algo: projected oful} in which $\hat{M}_{t,a}$ are used as linear side information. In this setting the estimated projection matrix $\hat{P}_{t,a}$ and the estimated Moore-Pensore Inverse $\hat{M}^\dagger_{t,a}$ are directly calculated from $\hat{M}_{t,a}$, i.e., these matrices are approximate. 

\begin{figure*}[t!]
\captionsetup[subfigure]{labelformat=empty}
\begin{subfigure}{.32\textwidth}
    \centering
    \includegraphics[width=\textwidth]{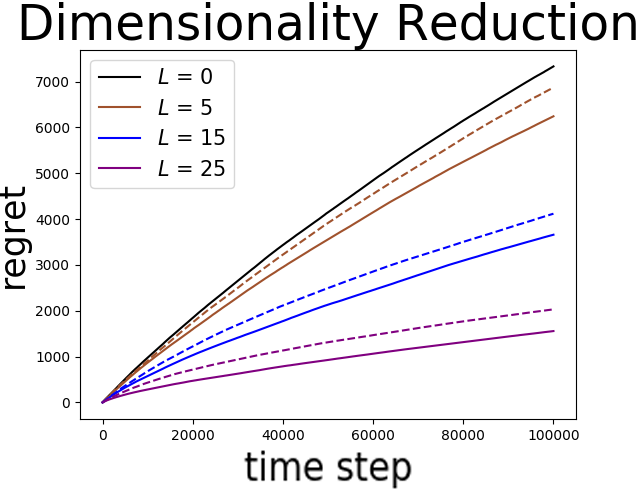}
    \caption{(a)}
\end{subfigure}
\begin{subfigure}{.32\textwidth}
    \centering
    \includegraphics[width=\textwidth]{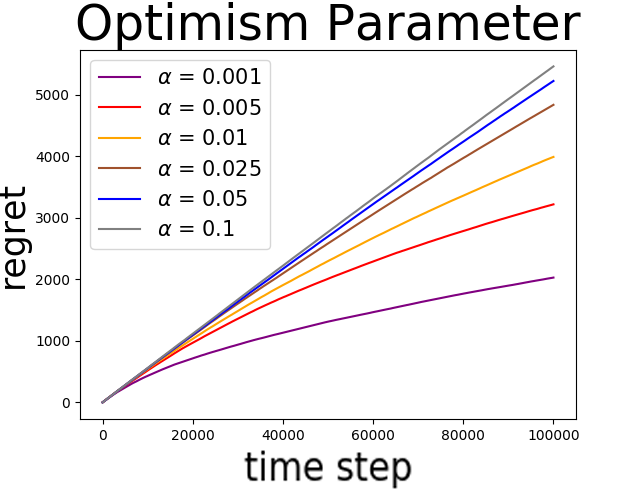}
    \caption{(b)}
\end{subfigure}
\begin{subfigure}{.32\textwidth}
    \centering
    \includegraphics[width=\textwidth]{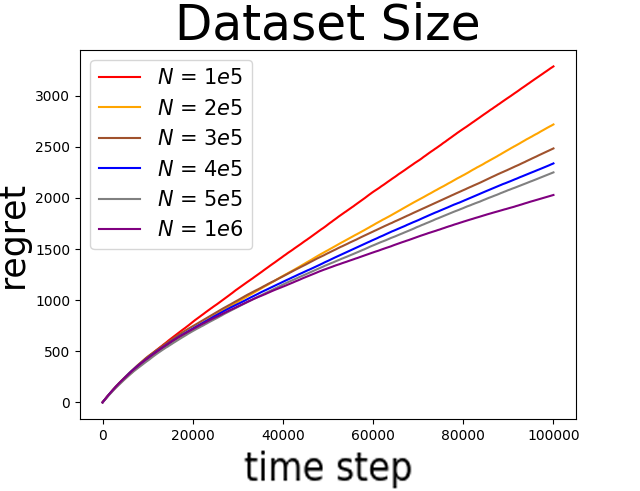}
    \caption{(c)}
\end{subfigure}
\centering
\caption{\label{fig: oful}  \footnotesize  All experiments were conducted with the same vectors $w^*_a$ of dimension $d=30$ and $K=30$ arms. \textbf{(a)} Plot compares effect of $L$ when $R_{12}$ is known (solid lines) vs. estimated (dashed lines). For $L=0$ (i.e., no side information) we executed Algorithm~\ref{algo: projected oful} without using the dataset. \textbf{(b)} Comparison of different values of~$\alpha$ using an offilne dataset and $L=25$. \textbf{(c)} Effect of dataset size on performance for $L=25$. }
\end{figure*} 

Algorithm~\ref{algo: oful with doubling} describes the linear bandit variant with partially observable confounded data. Note that, unlike Algorithm~\ref{algo: projected oful}, Algorithm~\ref{algo: oful with doubling} is not an anytime algorithm, but rather acts knowing the horizon $T$. Assuming $\norm{x_i}_2 \leq \xBound$ a.s. and $\norm{w_a^*}_2\leq \wBound$ for all $a \in \A$, the algorithm uses an augmented confidence, given by ${\sqrt{\beta_{n,t}(\delta)} = \lambda^{1/2}\wBound + \br*{\sigma+\xBound \wBound f_n} \sqrt{(d-L) \log{\frac{1+t\xBound^2/\lambda}{\delta/2\log{T}K}}}}$, where ${f_n= f_{B1} + f_{B2} 2^{-n/2}}$, $f_{B1} = \Olog\br*{ \max_{a}\frac{\lambda_{\mathrm{min}}\br*{R_{11}(a)}^{-1}}{P^{\pi_b}(a)}\xBound\br*{\trace{R_{11}(a)}\trace{R_{22}(a)}}^{1/4}}$ and $f_{B2} = \Olog\br*{\max_{a}\frac{\lambda_{\mathrm{min}}\br*{R_{11}(a)}^{-1}}{P^{\pi_b}(a)}\xBound^2}$. At every time step~$t\in[T]$, the learner uses the estimate $\hat{M}_{t,a}$ and subsequently considers it to be linear side information, as in Algorithm~\ref{algo: projected oful}. The following theorem provides regret guarantees for Algorithm~\ref{algo: oful with doubling}, proving partially observable data can be beneficial for online learning.

\begin{theorem}
\label{thm: oful with data}
    For any $T > 0$, with probability at least $1-\delta$, the regret of Algorithm~\ref{algo: oful with doubling} with the estimator given in Equation~\eqref{eq: Ma estimator} is bounded by
    \begin{align*}
    &\text{Regret}(T) 
    \leq
    3\sqrt{T (d-L)K  \log{\lambda+ \frac{T\xBound^2}{d-L}}} \times \\ 
    &\br*{\sigma_\epsilon \sqrt{(d-L) \log{\frac{1+T\xBound^2/\lambda}{\delta/(2K\log{T})}}} + \lambda^{1/2}\wBound}
    + \\
    &\mathcal{O}\pth{(d-L)\sqrt{K}\xBound\wBound f_{B2}}
    ,
    \end{align*}
    where $\epsilon = \xBound \wBound f_{B1}$ and $\sigma_\epsilon = \sigma+\epsilon$.
    This leads to, ${\Regret{T}\leq \Olog\br*{ (1 + f_{B_1})(d-L)\sqrt{KT} }.}$
\end{theorem}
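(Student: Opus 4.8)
The plan is to replay the analysis of Theorem~\ref{thm: regret bound oful} separately on each of the $\lg{T}$ doubling phases of Algorithm~\ref{algo: oful with doubling}: phase $n$, of length $2^n$, is treated as a fresh run of Algorithm~\ref{algo: projected oful} in which the \emph{estimated} constraint $\hat{M}_{2^n,a}$ (from Equation~\eqref{eq: Ma estimator}) plays the role of the unknown true constraint $M_a = [\,I_L,\; R_{11}^{-1}(a)R_{12}(a)\,]$, and the per-phase regrets are then summed. Two high-probability events must be controlled and intersected by a union bound over the $\lg{T}$ phases and $K$ actions — which is exactly the origin of the $\delta/(2K\lg{T})$ appearing inside $\beta_{n,t}(\delta)$: (i) that $\hat{M}_{2^n,a}$ is close to $M_a$, and (ii) the self-normalized martingale concentration that validates the projected confidence ellipsoid of Algorithm~\ref{algo: projected oful} within a phase.

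\emph{Step 1 (accuracy of $\hat{M}$).} Under Assumption~\ref{assumption: query pib} the queried actions $a_i^b \sim \pi_b(x_i)$ are i.i.d.\ and $P^{\pi_b}(a)$ is known from the data, so the importance-weighted estimator $\hat{R}_{12}(a,t)$ in the text is an unbiased average for $R_{12}(a)$. A matrix concentration inequality (matrix Bernstein/Azuma), with the a.s.\ summand bound $\norm{\xk_i(\xc_i)^T}_2 \le \xBound^2$ and a variance proxy controlled via $\Eb{\norm{\xk}_2^2 \norm{\xc}_2^2 | a} \le \xBound^2 \sqrt{\trace{R_{11}(a)}\trace{R_{22}(a)}}$ by Cauchy--Schwarz, gives on the good event a bound on $\norm{\hat{M}_{2^n,a}-M_a}_2 = \norm{R_{11}^{-1}(a)\pth{\hat{R}_{12}(a,2^n)-R_{12}(a)}}_2 \le \lambda_{\mathrm{min}}(R_{11}(a))^{-1}\norm{\hat{R}_{12}(a,2^n)-R_{12}(a)}_2$ that decays in $2^n$; tracking its dependence on $\xBound$, $P^{\pi_b}(a)$, $\lambda_{\mathrm{min}}(R_{11}(a))$, $\trace{R_{11}(a)}$ and $\trace{R_{22}(a)}$ is precisely what produces the constants $C_{B1},C_{B2}$, and we summarize it as $\xBound\wBound\norm{\hat{M}_{2^n,a}-M_a}_2 \le \xBound\wBound\,C_n$ with $C_n=C_{B1}+C_{B2}2^{-n/2}$. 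One also records the deterministic identity $\hat{M}_{2^n,a}\hat{M}_{2^n,a}^T = I_L + R_{11}^{-1}(a)\hat{R}_{12}(a,2^n)\hat{R}_{12}(a,2^n)^T R_{11}^{-1}(a) \succeq I_L$, hence $\norm{\hat{M}_{2^n,a}^\dagger}_2 \le 1$; this bound is what keeps the induced bias below from blowing up.

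\emph{Steps 2--3 (one phase as misspecified projected OFUL, then summation).} Fix phase $n$ and condition on $\hat{M} := \hat{M}_{2^n,a}$, which is measurable with respect to data from phases $<n$ and hence independent of the within-phase noise $\{\eta_t\}$. With $\hat{P} = I - \hat{M}^\dagger\hat{M}$ (rank $d-L$) and using $M_a w_a^* = b_a$, decompose $w_a^* = \hat{M}^\dagger b_a + \hat{M}^\dagger\delta_a + \hat{P}w_a^*$ where $\delta_a := (\hat{M}-M_a)w_a^*$, so the pseudo-reward $y_{a,i} = r_i - \inner{x_i,\hat{M}^\dagger b_a}$ used in Algorithm~\ref{algo: projected oful} equals $\inner{x_i,\hat{P}w_a^*} + \inner{x_i,\hat{M}^\dagger\delta_a} + \eta_i$: a $(d-L)$-dimensional linear model in the orthonormal basis $\hat{U}$ of the range of $\hat{P}$ (the $P=UU^T$ reduction of Section~\ref{sec: side information}), plus $\sigma$-subgaussian noise, plus a deterministic phase-wise misspecification $\inner{x_i,\hat{M}^\dagger\delta_a}$ of magnitude at most $\xBound\norm{\hat{M}^\dagger}_2\norm{\delta_a}_2 \le \xBound\wBound\norm{\hat{M}-M_a}_2 =: \zeta_n \le \xBound\wBound\,C_n$ by Step~1. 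The standard misspecified-linear-bandit device — bounding $\norm{\sum_i(\hat{U}^Tx_i)\inner{x_i,\hat{M}^\dagger\delta_a}}_{(\hat{U}^TV_a\hat{U})^{-1}}$ through the elliptical-potential lemma and folding it into the confidence radius — then shows the confidence statement of Algorithm~\ref{algo: projected oful} holds inside phase $n$ with radius $\sqrt{\beta_{n,t}(\delta)}$, i.e.\ with $\sigma$ replaced by $\sigma+\zeta_n$; splitting $\zeta_n \le \epsilon + \xBound\wBound C_{B2}2^{-n/2}$ with $\epsilon = \xBound\wBound C_{B1}$, the uniform piece becomes the effective noise $\sigma_\epsilon = \sigma+\epsilon$ and the $2^{-n/2}$ piece is handled separately. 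The regret bookkeeping of Theorem~\ref{thm: regret bound oful} now applies verbatim within phase $n$ at dimension $d-L$; summing over $n = 0,\dots,\lg{T}-1$ and using $\sum_n \sqrt{2^n} = \mathcal{O}(\sqrt{T})$ (the geometric sum over phases is the source of the leading constant $3$) turns the $\sigma_\epsilon$ part into the first term of the claimed bound (Theorem~\ref{thm: regret bound oful} with $\sigma \mapsto \sigma_\epsilon$, $\xoBound \mapsto \xBound$, $\woBound \mapsto \wBound$), while for the remaining part $\sqrt{2^n}\cdot C_{B2}2^{-n/2} = C_{B2}$ is constant in $n$, so its contribution over $\lg{T}$ phases telescopes to $\mathcal{O}\pth{(d-L)\sqrt{K}\,\xBound\wBound C_{B2}}$, the additive term. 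Intersecting the good events over all phases and actions gives probability $1-\delta$, and bounding logarithmic factors yields $\Regret{T} \le \Olog\pth{(d-L)\sqrt{KT}}$.

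\emph{Main obstacle.} The delicate point is the reduction in Steps~2--3: showing that replacing the true $M_a$ by the \emph{random} estimate $\hat{M}_{2^n,a}$ perturbs the projected confidence ellipsoid only through a controlled inflation of the effective noise (plus a provably lower-order slack), without breaking the reduction to the $(d-L)$-dimensional subspace. This requires (i) the deterministic bound $\norm{\hat{M}_{2^n,a}^\dagger}_2 \le 1$ from Step~1, so the induced bias is $\mathcal{O}\pth{\xBound\wBound\norm{\hat{M}_{2^n,a}-M_a}_2}$ and no worse; (ii) care that $\hat{P}_{2^n,a}$ is a data-dependent projection, handled by conditioning on the phase-start $\sigma$-algebra so that $\hat{U}^Tx_t$ and $\eta_t$ retain the martingale structure needed for the self-normalized concentration; and (iii) verifying that the misspecification term, once pushed through the elliptical-potential sum, contributes exactly the two pieces above, so that its geometric decay across phases cancels the geometric growth of phase lengths and leaves only $\Olog$-order residue.
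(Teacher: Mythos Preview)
Your proposal is correct and follows essentially the same route as the paper's proof: doubling-trick decomposition into phases, matrix-Bernstein concentration for $\hat{M}_{2^n,a}$ (the paper's Corollary~\ref{corrollary: M estimation}), the structural fact $\norm{\hat{M}_{2^n,a}^\dagger}_2\le 1$ (the paper's Lemma~\ref{lemma: M important properties}), a per-phase ``misspecified projected OFUL'' analysis with $\hat{P}_{2^n,a}$ held fixed and measurable at phase start (the paper's Lemma~\ref{lemma: optimism with subspace error}), and a geometric sum over phases. One notational slip to fix: when you say the confidence radius is obtained ``with $\sigma$ replaced by $\sigma+\zeta_n$'', the correct inflation is $\sigma+\zeta_n\sqrt{t}$ (the misspecification accumulates through the elliptical-potential sum), which for $t\le 2^n$ is bounded by $\sigma+\xBound\wBound\,\Delta M_n\sqrt{2^n}=\sigma+\xBound\wBound\,C_n$ --- exactly the $\beta_{n,t}(\delta)$ you cite; as written, replacing $\sigma$ by $\sigma+\zeta_n$ would give a bound that is too strong by a factor $\sqrt{2^n}$.
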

Notice that, unlike Theorem~\ref{thm: regret bound oful}, the regret of Algorithm~\ref{algo: oful with doubling} is worsened asymptotically by a factor relating to $f_{B_1}$. This function can also scale with $d$, due to its dependence on $\pth{\trace{R_{11}(a)}}^{1/4}$ and $\pth{\trace{R_{22}(a)}}^{1/4}$. Specifically, a worst case dependence yields ${f_{B_1} \leq \Olog\br*{\max_{a}\frac{\pth{L(d-L)}^{1/4}}{P^{\pi_b}(a)}}}$, where here $\max_{a} \frac{1}{P^{\pi_b}(a)} \geq K$. That is, $f_{B_1}$ is a factor indicating how hard it is to approximate the linear constraints, dependent on the amount of information in $x$ as well as the support of the behavior policy, $\pi_b$. Still, in settings in which $d$ and $T$ are prominent over $K$, a significant improvement in performance is achieved.

The proof of the theorem is provided in the supplementary material. Unlike in Theorem~\ref{thm: regret bound oful}, we do not have access to the true matrices $M^\dagger_a,P_a$, but to increasingly more accurate estimates of these matrices. To deal with this more challenging situation we use the doubling trick. The algorithm acts in exponentially increasing episodes. In each such episode, we fix the estimation of $M_a$, i.e., we use the estimate of $M_a$ available at the beginning of the episode. The analysis of this algorithm amounts to study the performance of the exact algorithm (as in Theorem~\ref{thm: regret bound oful}) up to a fixed, approximated, $M_a$, which induces errors in the used $M^\dagger_a,P_a$. Finally, summing the regret on each episode, we obtain the result. 

The proof heavily relies on the convergence properties of $P_a, M^\dagger$, which are shown to converge at a rate of $O(T^{-1/2})$. These convergence rates are due to the special structure of $M_a$. Specifically, we prove that ${\norm{P_a-\hat{P}_{t,a}}\leq 2\norm{M_a-\hat{M}_{t,a}}}$ and ${ \norm{M_a^\dagger-\hat{M}_{t,a}^\dagger}\leq 2\norm{M_a-\hat{M}_{t,a}}}$,  meaning, the convergence of $\hat{P}_{t,a}$ and $\hat{M}^\dagger_{t,a}$ is well controlled by the convergence of $\hat{M}_{t,a}$. This property does not hold for general matrices. In fact, for a general matrix $A$, $A^\dagger$ is not even continuous w.r.t. perturbations in $A$ (see e.g.,~\citealt{stewart1969continuity}).~Thus, the structure of $M_a$ establishes convergence rates of $\hat{P}_{t,a}, \hat{M}^\dagger_{t,a}$ sufficient to achieve the desired regret.

Algorithm~\ref{algo: oful with doubling} is highly wasteful w.r.t. the information gathered through time. Specifically, it discards all information upon updates of $\hat{M}_{t,a}$. In a practical setting, we  expect the algorithm to achieve similar performance guarantees even when information is not discarded. Moreover, as we show empirically in the next section, significant improvement can still be achieved without applying the doubling trick, i.e., by running Algorithm,~\ref{algo: projected oful} with the approximated $\hat{M}_{t,a}$.


\section{Experiments}
\label{sec: experiments}

In this section we demonstrate the effectiveness of using offline data in a synthetic environment. Our environment consisted of $K=30$ arms and vectors $w^*_a \in \R^{30}$ uniformly sampled in $\brk[s]*{0, \frac{1}{d}}^d$ and fixed across all experiments. Contexts were sampled from a uniform distribution in $\brk[s]*{0, 1}^d$ and normalized to have norm $1$. The behavioral policy $\pi_b$ was chosen to follow a softmax distribution $\pi_b(a,x) \propto \exp\pth{\phi_a^T x}$, where $\phi_a \in \R^d$ were randomly chosen and fixed across all experiments.

Figure~\ref{fig: oful}a illustrates the effectiveness of using partially observable data. We used a dataset of $1$~million examples to simulate a sufficiently large dataset. Solid lines depict regret when $R_{12}(a)$ are known in advance, allowing us to apply Algorithm~\ref{algo: projected oful} without estimations (Section~\ref{sec: side information}). Dashed lines depict regret for the estimated case using queries to $\pi_b$, i.e., $M_a$ were estimated at every iteration using an estimate of $R_{12}(a)$ (see Section~\ref{section: partially observable covariates}). Note that $L=0$ corresponds to the linear bandit problem with no side information, i.e., the original OFUL algorithm. It is evident that utilizing the partially observable data can significantly improve performance, even when using approximate projections. We note that the experiments were run under constant updates of $\hat{M}_{t,a}$, i.e., without epoch schedules.

Figure~\ref{fig: oful}b depicts the effect of the optimism parameter $\alpha$ (see Algorithm~\ref{algo: projected oful}) on overall performance when utilizing a dataset with $L=25$ observed features. A gap is evident between the proposed theoretical confidence and the practical results, as very small values of $\alpha$ showed best performance. This gap is most likely due to worst case scenarios that were not imposed by our simulated environments. 

Finally, Figure~\ref{fig: oful}c depicts experiments with varying amount of data. While the number of examples has an effect, it does not significantly deteriorate overall performance, suggesting that partially observable offline data can be used even with finite datasets, as long as they are sufficiently large.

\section{Related Work}
\label{sec: related work}

 The linear bandits problem, first introduced by \citet{auer2002using}, has been extensively investigated in the pure online setting \citep{dani2008stochastic,rusmevichientong2010linearly,abbasi2011improved}, with numerous variants and extensions
 \citep{agrawal2016linear,kazerouni2017conservative,amani2019linear}.
 
The offline (logged) bandit setting usually assumes the algorithm must learn a policy from a batch of fully observable data \citep{shivaswamy2012multi,swaminathan2015batch,joachims2018deep}.
The use of offline data has also been investigated under the reinforcement learning framework, including batch-mode off-policy reinforcement learning and off-policy evaluation
\citep{ernst2005tree,lizotte2012linear,fonteneau2013batch,precup2000eligibility,thomas2016data,gottesman2019combining}

More related to our work are attempts to establish unbiased estimates or control schemes from confounded offline data \citep{lattimore2016causal,oberst2019counterfactual,tennenholtz2019off}. Other work in which partially observable data is used usually consider the standard confounded setting (e.g., identification of $P(r | \text{do}(a))$) \citep{zhang2019near,ye2020combining}. \citet{wang2016learning} also consider hidden features, where biases are accounted for under assumptions on the hidden features. In these works the unobserved features (confounders) are never disclosed to the learner. Prior knowledge is thereby usually assumed over their support (e.g., known bounds). When such priors are unknown, these methods may thus fail. Moreover, they are sub-optimal in settings of fully observable interactions, where unobserved confounders become observed covariates.

In this work we view the problem from an online learner's perspective, where \emph{offline data is used as side information}. Specifically we project the given information, reducing our problem to its orthogonal subspace. Projections have been previously used in the bandit setting for reducing time complexity and dimensionality \citep{yu2017cbrap}. Other work consider bandits under constraints \citep{agrawal2019bandits}. Finally, \citet{djolonga2013high} consider subspace-learning by combining Gaussian Process UCB sampling and low-rank matrix recovery techniques.

\section{Discussion and Future Work}
\label{sec: discussion}

In this work we showed that partially observable confounded data can be efficiently utilized in the linear bandit setting. In this section we further discuss two central assumptions made in our work; namely, infinite data and bounding the cross correlation matrix $R_{12}$.

\noindent \textbf{Finite Data. }
Throughout our work we assumed the limit of infinite sized data. From a technical perspective, the use of finite data would introduce an error in the least squares estimator \citep{krikheli2018finite}. A straightforward analysis would propagate this error as additional linear penalty to the regret that is dependent on the number of samples in the data. More involved techniques may combine optimistic bounds on the finite samples in the data. We chose to leave its derivation out to focus on the topic of missing covariates in the data. Finally, our experiments demonstrate that the number of samples does not greatly affect performance, as long as they are sufficiently large, i.e., when the error is small relative to $T$.

\noindent \textbf{Bounding $R_{12}$. } 
Being able to estimate $R_{12}(a)$ is an essential requirement for deconfounding the partially observable data. Nevertheless, $R_{12}(a)$ is dependent on $\pi_b$, raising the question, can $R_{12}(a)$ be estimated without knowledge of $\pi_b$? In our work we showed how one can estimate it using queries to $\pi_b$. In fact, we did not require knowledge of $\pi_b$, nor did we require interactions of $\pi_b$ with the environment (i.e., we do not act according to $\pi_b$), but rather, only view samples from $\pi_b$. While such an assumption may be strict in some settings, it is reasonable in others. For instance, when $\pi_b$ was controlled by us when the data was recorded. Other settings for estimating $R_{12}(a)$ are also possible, e.g., having access to additional fully observable datasets that were generated by $\pi_b$ \citep{kallus2018removing}. 

Consider the examples of the healthcare and traffic management settings presented in Section~\ref{sec: intro}. In the medical setting, quering $\pi_b$ would amount to asking the clinician that induced the data what she would have done in a provided situation. In this scenario, cooperation of the clinician is needed to deconfound the data. Nevertheless, note that this approach is not limited by the amount of confounding bias inherent in the data, allowing us identify \emph{optimal} control policies. Unlike the medical example, in the traffic management example we have access to the behavior policy that generated the data. In this scenario, the querying assumption is insignificant.

\textbf{Future Work.} While this work assumed a monotonically vanishing error of $\hat{R}_{12}(a)$ (i.e., asymptotic identifiability), future work can consider looser bounds on the estimate. It is also interesting to understand the  contextual bandit algorithms, both in the linear as well as the general function class settings. It is also interesting to generalize our results to the reinforcement learning setting.

\begin{acknowledgements} This research was partially supported by the ISF under contract 2199/20.
\end{acknowledgements}

\bibliography{citations.bib}

\newpage

\onecolumn
\appendix
\appendixtitleon
\appendixtitletocon

\listofappendices

\begin{appendices}

\section{Overview}

The appendix is orginized as follows:

\begin{itemize}
\item Appendix~\ref{supp: moore penrose inverse} provides a short background on orthogonal projections and the Moore-Penrose inverse, a fundamental tool in our work. We also state some well known pertrubations bounds that will be useful for achieving well-behaved convergence rates in the proof of Theorem~\ref{thm: oful with data}. 

\item In Appendix~\ref{supp: impossibility} we discuss the unidentifiability problem of using partially observable data, with focus to our setting. Specifically, we show that $\brk[c]*{w_a^*}_{a \in \A}$ are not identifiable, and provide further motivation for this work. 

\item Appendix~\ref{supp: proof of prop 1} provides a proof of Proposition~\ref{proposition: omega LS to w}, namely, showing that the structure of the transformation obtained from the offline data is given by $M_a = (I_L, R_{11}(a)^{-1}R_{12}(a))$.

\item In Appendix~\ref{supp: properties of prr} we discuss Projected Ridge Regression (P-RR) with side information, as presented in Section~\ref{sec: side information}. Specifically, we state and prove the equivalence of P-RR to a low-order ridge regression problem with a correction term (see Equation~\eqref{eq: prr optimization problem}). 

\item Appendix~\ref{supp: oful} provides a proof of Theorem~\ref{thm: regret bound oful}, showing that Algorithm~\ref{algo: projected oful} reduces the OFUL algorithm to the lower-dimensional projected subspace of known linear transformation $\brk[c]*{M_a w = b_a}$, decreasing the regret from $\Olog\pth{d\sqrt{KT}}$ to $\Olog\pth{(d-L)\sqrt{KT}}$. Its proof is a direct consequence of the P-RR formulation.

\item Based on the above, we formulate our final result, proving Theorem~\ref{thm: oful with data} in Appendix~\ref{supp: approximate oful}. We begin by showing the approximate linear transformations obtained by estimating $R_{12}(a)$ from online samples are well behaved. Specifically, we show that the projection and pseudo-inverse operators converge at the same rate as $\hat{M}_{t,a}$, i.e., $\norm{P_a- \pert{P}_{t,a}}\leq 2\norm{M_a-\pert{M}_{t,a}}$ and $\norm{M_a^\dagger - \pert{M}_{t,a}^\dagger}\leq 2\norm{M_a-\pert{M}_{t,a}}$. We then leverage these important properties, together with a doubling trick approach, showing that similar regret guarantees can be achieved as in the exact case.
\end{itemize}

\newpage
\section{Orthogonal Projections and the Moore–Penrose Inverse}
\label{supp: moore penrose inverse} 

In this section we give a short review of the Moore–Penrose inverse \citep{barata2012moore} and its corresponding orthogonal projection. We state some well known properties that will be useful in our analysis. 

For a matrix $M \in \R^{L \times d}$ we denote its Moore–Penrose inverse by $M^\dagger$. Let $P^{\parallelsum} = M^\dagger M$ be the orthogonal projection onto the range of $M$ and $P^\bot = I - P^{\parallelsum}$ be the orthogonal projection onto the kernel of $M$. We have the following well known properties.
\begin{property}
\label{property 1}
    If $M$ has independent rows, then $M^\dagger$ can be computed as
    $
        M^\dagger = M^T(MM^T)^{-1}.
    $
\end{property}
\begin{property}
\label{property 2}
    If $Mw = b$ then $P^{\parallelsum}w = M^\dagger b$.
\end{property}
\begin{property}[\citealt{planitz19793}]
\label{property 3}
    The vector $x = M^\dagger b$ is the vector with the smallest $L_2$ norm which satisfies $Mx=b$.
\end{property}
\begin{property}\label{property: linear side information}
If $w\in \mathbb{R}^d$ satisfies $Mw=b$ then $w$ can be written as
\begin{align*}
    w = P^{\bot}w + M^\dagger b.
\end{align*}
\end{property}
\begin{proof}
    Using the fact $I = P^{\parallelsum}w  + P^{\bot}$ and Property~\ref{property 2} we get
    \begin{align*}
        w = P^{\bot}w + P^{\parallelsum}w = P^{\bot}w + M^\dagger b.
    \end{align*}
\end{proof}

\begin{remark}[Notation]
For brevity, we denote $P= P^{\bot}$ as the orthogonal projection to the kernel of $M$ and $I-P=I-P^{\bot} =  P^{\parallelsum}$ as the orthogonal projection to the range of $M$.  
\end{remark}

\subsection{Useful Perturbation Bounds}

The following two results are standard in perturbation theory. They bound the $L_2$ difference between some matrix $A$, and its perturbed counterpart $B = A + E$, where $E$ a perturbation (i.e., error) matrix. 

\begin{theorem}[E.g., \citealt{chen2016perturbation}, Corollary 2.7] \label{theorem: P under perturbation} For any matrices let $A,B,E\in \mathbb{R}^{d\times d}$ and $E=B-A$. Let $P_A$ and $P_B$ be the orthogonal projection on the raw space of $A$ and $B$, respectively. Assume $\rank(A)=\rank(B)$. Then,
\begin{align*}
    \norm{P_A - P_B}_2 \leq \min\br*{\norm{A^\dagger}_2,\norm{B^\dagger}_2}\norm{E}_2,
\end{align*}
where $P,\pert{P}$ are the orthogonal projections into the row space of $M, \pert{M}$, respectively,
\end{theorem}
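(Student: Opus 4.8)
The plan is to bound the two one‑sided quantities $\norm{(I-P_A)P_B}_2$ and $\norm{(I-P_B)P_A}_2$ separately, and then recombine them using the orthogonality built into a difference of projections together with the equal‑rank hypothesis. I will use throughout that $P_A = A^\dagger A$ is the orthogonal projection onto the row space $\mathrm{range}(A^T)$, that equivalently $P_A = A^T(A^T)^\dagger$ (since $CC^\dagger$ is always the orthogonal projection onto $\mathrm{range}(C)$), and the analogous facts for $B$; also that spectral norm and Moore–Penrose inversion commute with transposition ($\norm{C^T}_2 = \norm{C}_2$, $(C^T)^\dagger = (C^\dagger)^T$) and that $P_A,P_B$ are symmetric contractions.

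\textbf{Step 1 (one-sided bounds).} Since the columns of $A^T$ lie in $\mathrm{range}(P_A)$, we have $(I-P_A)A^T = 0$, so from $E^T = B^T - A^T$ it follows that $(I-P_A)B^T = (I-P_A)E^T$. Multiplying on the right by $(B^T)^\dagger$ and using $P_B = B^T(B^T)^\dagger$ gives $(I-P_A)P_B = (I-P_A)E^T(B^T)^\dagger$, whence $\norm{(I-P_A)P_B}_2 \le \norm{E}_2\norm{B^\dagger}_2$. Exchanging the roles of $A$ and $B$ (note $A^T = B^T - E^T$) yields, by the identical argument, $\norm{(I-P_B)P_A}_2 \le \norm{E}_2\norm{A^\dagger}_2$.

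\textbf{Step 2 (assembling).} Decomposing an arbitrary $x$ orthogonally as $x = P_Bx + (I-P_B)x$, one computes $(P_A-P_B)x = -(I-P_A)P_Bx + P_A(I-P_B)x$, an orthogonal sum of a vector in $\mathrm{range}(I-P_A)$ and one in $\mathrm{range}(P_A)$. Applying Pythagoras and bounding each component by the corresponding cross‑term operator norm (inserting an extra $P_B$, resp.\ $I-P_B$, and using $\norm{P_A(I-P_B)}_2 = \norm{(I-P_B)P_A}_2$ by symmetry of the projections) gives $\norm{P_A-P_B}_2 \le \max\brc*{\norm{(I-P_A)P_B}_2,\ \norm{(I-P_B)P_A}_2}$. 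To turn this maximum into the claimed minimum I will invoke the equal‑rank fact: when $\rank(A)=\rank(B)$ the two cross terms are actually equal — both equal the sine of the largest principal angle between the two row spaces, equivalently $(I-P_A)P_B$ and $(I-P_B)P_A$ have the same nonzero singular values, which one checks by comparing $M^TM$ with $MM^T$ for $M = P_AP_B$ and using the rank equality to rule out spurious extra singular values on either side. Since this common value is simultaneously $\le \norm{E}_2\norm{A^\dagger}_2$ and $\le \norm{E}_2\norm{B^\dagger}_2$ by Step 1, it is $\le \norm{E}_2\min\brc*{\norm{A^\dagger}_2,\norm{B^\dagger}_2}$, which is the statement.

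\textbf{Main obstacle.} Step 1 and the $\max$‑inequality are routine manipulations with projections; the only genuine content is the equality $\norm{(I-P_A)P_B}_2 = \norm{(I-P_B)P_A}_2$ under equal rank — without it one only recovers $\norm{E}_2\max\brc*{\norm{A^\dagger}_2,\norm{B^\dagger}_2}$. I would either prove it via the $M^TM$/$MM^T$ identity for $M = P_AP_B$ together with a dimension count, or simply cite it as classical (e.g., Stewart–Sun). It is worth stressing that the rank hypothesis is indispensable: if $\rank(A)\neq\rank(B)$ then $\norm{P_A-P_B}_2 = 1$ irrespective of $\norm{E}_2$, so no perturbation bound of this form can hold.
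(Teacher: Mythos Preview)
The paper does not supply its own proof of this statement; it is quoted as a standard perturbation bound (with attribution to Chen 2016, Corollary~2.7) and then invoked as a black box in the proof of Lemma~\ref{lemma: M estimation error propogation}. So there is nothing in the paper to compare your argument against.

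That said, your proof is the standard route to this inequality (essentially the Wedin / Stewart--Sun argument) and it is correct. The one-sided identities $(I-P_A)P_B=(I-P_A)E^T(B^T)^\dagger$ and its symmetric counterpart give the two separate upper bounds; the orthogonal decomposition in Step~2 reduces $\norm{P_A-P_B}_2$ to the maximum of the two cross-terms; and the equal-rank hypothesis forces those cross-terms to coincide, turning the maximum into the claimed minimum. The only place a reader might want one more line is the equality $\norm{(I-P_A)P_B}_2=\norm{(I-P_B)P_A}_2$ under equal rank: your sketch via comparing $M^TM$ with $MM^T$ for $M=P_AP_B$ together with a dimension count is exactly the right idea, but writing it out explicitly (or citing the relevant Stewart--Sun theorem on principal angles) would make the proposal fully self-contained rather than deferring the one nontrivial step.
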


\begin{theorem}[\citealt{stewart1977perturbation}, Theorem 3.3] \label{theorem: M dagger perturbabation without equal rank} For any matrices let $A,B,E\in \mathbb{R}^{d\times d}$ and $E=B-A$. Then,
\begin{align*}
    \norm{B^\dagger - A^\dagger}_2 \leq 2\max\br*{\norm{A^\dagger}^2_2,\norm{B^\dagger}^2_2}\norm{E}_2.
\end{align*}
\end{theorem}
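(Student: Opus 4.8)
The plan is to prove the bound through an explicit algebraic decomposition of $B^\dagger - A^\dagger$ (a Wedin-type identity) together with an orthogonality argument that is responsible for the constant $2$ rather than the naive $3$. First I would establish the identity
\begin{align*}
    B^\dagger - A^\dagger
    =
    \underbrace{-B^\dagger E A^\dagger}_{T_1}
    + \underbrace{B^\dagger (B^\dagger)^T E^T \br*{I - A A^\dagger}}_{T_2}
    + \underbrace{\br*{I - B^\dagger B} E^T (A^\dagger)^T A^\dagger}_{T_3}.
\end{align*}
To derive it, note that $-B^\dagger E A^\dagger = B^\dagger A A^\dagger - B^\dagger B A^\dagger$, so that $B^\dagger - A^\dagger + B^\dagger E A^\dagger$ reduces to the remainder $R = B^\dagger(I - A A^\dagger) - (I - B^\dagger B)A^\dagger$. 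I would then simplify $R$ using the pseudoinverse identities $B^\dagger = B^\dagger(B^\dagger)^T B^T$ and $A^\dagger = A^T (A^\dagger)^T A^\dagger$, together with the fact that $A A^\dagger$ and $B^\dagger B$ are the orthogonal projections onto $\mathrm{range}(A)$ and $\mathrm{range}(B^T)$, which yields the vanishing identities $A^T(I - A A^\dagger) = 0$ and $(I - B^\dagger B)B^T = 0$. Substituting $B^T = A^T + E^T$ and $A^T = B^T - E^T$ in the two halves of $R$ then collapses it exactly to $T_2 + T_3$.

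Next I would bound the three pieces individually. The first is immediate: $\norm{T_1}_2 \le \norm{B^\dagger}_2\norm{E}_2\norm{A^\dagger}_2 \le \max\br*{\norm{A^\dagger}_2^2, \norm{B^\dagger}_2^2}\norm{E}_2$, using $xy \le \max(x^2,y^2)$ for $x,y\geq 0$. For the other two, $\norm{B^\dagger(B^\dagger)^T}_2 = \norm{B^\dagger}_2^2$ and the projection factors $I - A A^\dagger$, $I - B^\dagger B$ have spectral norm at most one, giving $\norm{T_2}_2 \le \norm{B^\dagger}_2^2\norm{E}_2$ and $\norm{T_3}_2 \le \norm{A^\dagger}_2^2\norm{E}_2$.

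The crucial step, where the factor $2$ is won, is to combine $T_2$ and $T_3$ without adding their norms. I would verify the four subspace inclusions $\mathrm{range}(T_2) \subseteq \mathrm{range}(B^T)$, $\mathrm{range}(T_3) \subseteq \ker(B) = \mathrm{range}(B^T)^\perp$, $\mathrm{range}(T_2^T) \subseteq \mathrm{range}(A)^\perp$ and $\mathrm{range}(T_3^T) \subseteq \mathrm{range}(A)$. These say $T_2$ and $T_3$ have orthogonal ranges and orthogonal row spaces, i.e.\ $T_2^T T_3 = 0$ and $T_2 T_3^T = 0$. Hence $(T_2 + T_3)^T(T_2 + T_3) = T_2^T T_2 + T_3^T T_3$ is a sum of positive semidefinite matrices with orthogonal ranges, so its spectral norm is the maximum of the two, giving $\norm{T_2 + T_3}_2 = \max\br*{\norm{T_2}_2, \norm{T_3}_2} \le \max\br*{\norm{A^\dagger}_2^2, \norm{B^\dagger}_2^2}\norm{E}_2$. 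A final triangle inequality $\norm{B^\dagger - A^\dagger}_2 \le \norm{T_1}_2 + \norm{T_2 + T_3}_2$ then delivers the claim.

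The main obstacle is twofold: first, the careful algebraic bookkeeping needed to verify the decomposition identity (the vanishing facts hinge on the symmetry of the projections $AA^\dagger$ and $B^\dagger B$), and second, correctly establishing the four subspace inclusions that make $T_2$ and $T_3$ orthogonal in both range and row space. It is precisely this orthogonality — rather than a plain triangle inequality over all three summands, which would only yield a constant $3$ — that produces the stated factor of $2$.
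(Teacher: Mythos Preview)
The paper does not prove this statement; it is cited verbatim as Theorem~3.3 of \citet{stewart1977perturbation} and used as a black box in Lemma~\ref{lemma: M estimation error propogation}. Your argument is correct and is in fact the standard Wedin--Stewart proof: the three-term decomposition you wrote down is precisely Wedin's identity, and the orthogonality of $T_2$ and $T_3$ in both range and row space is exactly the mechanism by which Stewart obtains the constant~$2$ rather than~$3$. So there is nothing to compare---you have supplied the proof that the paper simply imported from the literature.
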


\begin{remark}
Note that Theorem~\ref{theorem: P under perturbation} assumes $\rank(A)=\rank(B)$. While other perturbation bounds exist for the case $\rank(A)\neq\rank(B)$, they do not provide sufficient guarantees for our analysis (e.g., $\norm{A^\dagger - B^\dagger}$ may diverge~\citealt{stewart1969continuity}). Luckily, due to the special structure of $M$, i.e., $M = (I_L, R_{11}^{-1}R_{12})$, the perturbed version of $M$ as defined in Section~\ref{section: partially observable covariates} will always have rank $L$, ensuring this assumption holds.
\end{remark}

\newpage
\section{Unidentifiability Problem of Partially Observable Data}
\label{supp: impossibility}

Having access to partially observable offline data may not be enough to obtain the optimal policy, even if $N \to \infty$. The following is a standard identifiability result, showing that $\brk[c]*{w_a^*}_{a \in \A}$ are not identifiable in the setting of partially observable data and no online interaction.

\begin{proposition}
For any behavioral policy $\pi_b$ and induced measure $P^{\pi_b}$, $\brk[c]*{w_a^*}_{a \in \A}$ are not identifiable. More specifically, for all $w^1 = \brk[c]*{w_a^1}_{a \in \A}$ exist $w^2 = \brk[c]*{w_a^2}_{a \in \A} \neq w^2$ and probability measures $P_1, P_2$ such that ${P_1\pth{\xk, a, r ; w^1, \pi_b} = P_2\pth{\xk, a, r ; w^2, \pi_b} }$ and $\pi_b\pth{a, x; w^1} = \pi_b\pth{a, x; w^2}$.
\end{proposition}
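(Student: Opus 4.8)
The plan is to exhibit, for any given parameter tuple $w^1 = \brk[c]*{w_a^1}_{a \in \A}$, a distinct tuple $w^2 \neq w^1$ together with measures $P_1, P_2$ over $(\xk, a, r)$ that coincide, while the behavior policy is also left unchanged. The key observation is that the partially observable data only ever exposes the marginal of $\xk$ and the conditional law of $r$ given $(\xk, a)$; the hidden covariate $\xc$ acts purely as a latent variable. So I have freedom in how I choose the joint law of $(\xk,\xc)$ and the conditional reward model, as long as the induced marginal over $(\xk, a, r)$ is preserved.

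First I would fix the construction of $w^2$. The natural choice is to perturb $w^1_a$ only in the hidden block: write $w^1_a = \begin{pmatrix} (w^{1,\mathrm{o}}_a)^T, (w^{1,\mathrm{h}}_a)^T\end{pmatrix}^T$ and set $w^2_a = \begin{pmatrix} (w^{1,\mathrm{o}}_a)^T, (w^{2,\mathrm{h}}_a)^T\end{pmatrix}^T$ with $w^{2,\mathrm{h}}_a \neq w^{1,\mathrm{h}}_a$ for at least one $a$. Second, under $P_1$ take the ``true'' generative story: $x \sim \mathcal{P}_x$, $a \sim \pi_b(x)$, $r = \inner{x, w^1_a} + \eta$. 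Under $P_2$, I keep the same marginal $\mathcal{P}_x^{\mathrm{o}}$ for $\xk$ and the same policy $\pi_b$ viewed as a function of $x$, but I redefine the conditional law of $\xc$ given $(\xk, a)$ so that the reward mean is matched. Concretely, since the reward mean under $P_1$ given $(\xk, a)$ is $\inner{\xk, w^{1,\mathrm{o}}_a} + \Eb{(\xc)^T \mid \xk, a} w^{1,\mathrm{h}}_a$, I choose the $P_2$-conditional law of $\xc$ so that $\mathbb{E}^{P_2}\brs*{(\xc)^T \mid \xk, a} w^{2,\mathrm{h}}_a$ equals the same value; a shift/scaling of the $\xc$-conditional mean suffices, and the residual noise $\eta$ can be adjusted (e.g., by folding the variance of $\inner{\xc, w^{h}_a}$ into $\eta$) to match the full conditional law of $r$, not just its mean. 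This makes $P_1(\xk, a, r) = P_2(\xk, a, r)$ by construction.

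The one subtlety I would check carefully is that $\pi_b$ is genuinely unchanged as a stochastic object on the observable data. The statement writes $\pi_b(a, x; w^i)$, suggesting the policy may itself depend on the parameters; but the policy is a fixed conditional distribution of $a$ given the \emph{full} context $x$, so as long as I do not alter that conditional map, $\pi_b(a,x;w^1) = \pi_b(a,x;w^2)$ holds trivially, and what changes is only the latent joint law $P_i(\xc \mid \xk, a)$. I would also remark that if one insists $\mathcal{P}_x$ itself be fixed (same joint law of $(\xk,\xc)$), the flexibility instead comes from the noise model or from reparametrizing which linear functional of $x$ generates $r$; either route works, and I would pick whichever makes the bookkeeping cleanest.

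The main obstacle is not conceptual but one of presentation: making precise exactly which quantities the learner is allowed to ``see'' from $\D$ versus which are free to vary, so that the claim $P_1(\xk,a,r) = P_2(\xk,a,r)$ is airtight while $w^1 \neq w^2$. Once the observable/latent split is pinned down, the construction is a short explicit verification. I would close by noting this is why online access to the full context $x$ (and hence to $R_{12}(a)$, as developed in Section~\ref{section: partially observable covariates}) is essential for identification.
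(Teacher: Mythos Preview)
Your approach is essentially the same as the paper's: perturb only the hidden block of $w_a$ and compensate by altering the law of $\xc$ so that the conditional distribution of $r$ given $(\xk,a)$ is unchanged, then factor the joint as $P_i(r\mid \xk,a)\,P^{\pi_b}(\xk,a)$. The paper makes this concrete with an explicit discrete construction for $\xc$ (atoms chosen so that $(\xc)^T w^{1,\mathrm{h}}_a$ and $(\xc)^T w^{2,\mathrm{h}}_a$ are equidistributed given $(\xk,a)$) rather than your moment-matching-plus-noise-adjustment, but the underlying idea is identical, and the subtlety you flag about $\pi_b$ being a fixed function of the full $x$ is handled the same way (and with the same brevity) in the paper.
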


\begin{proof}
    Denote by $w^o_a \in \R^L, w^h_a \in \R^{d-L}$, the vectors corresponding to the observed and hidden parts of $x$, namely, $\xk$ and $\xc$, respectively. That is, $w_a = \begin{pmatrix} w^o_a \\ w^h_a\end{pmatrix}$ and 
    \begin{align*}
        r_i = x_i^T w_a + \eta_i =  (\xk)^T w^o_a + (\xc)^T w^h_a + \eta_i.
    \end{align*}
    Let $P_1, P_2$ be two measures such that $P_1((\xc)^T w^{1o}_a \leq \alpha | a, \xk) = P_2((\xc)^Tw^{2o}_a \leq \alpha| a, \xk)$ for all $\alpha \in \mathbb{R}$. Note that such measures always exist. For example letting $\xc$ be a random vector with i.i.d. elements independent of $\xk$, such that
    \begin{align*}
    \begin{cases}
        &P_1\pth{(\xc)_k = \frac{1}{(w^{1o}_a)_k}}
        =
        \frac{1}{2} \\
        &P_1\pth{(\xc)_k = \frac{1}{(w^{2o}_a)_k}}
        =
        0 \\
        &P_1\pth{\xc)_k = 0}
        =
        \frac{1}{2}
    \end{cases} \qquad
    \begin{cases}
        &P_2\pth{(\xc)_k = \frac{1}{(w^{1o}_a)_k}}
        =
        0 \\
        &P_2\pth{(\xc)_k = \frac{1}{(w^{2o}_a)_k}}
        =
        \frac{1}{2} \\
        &P_2\pth{\xc)_k = 0}
        =
        \frac{1}{2}
    \end{cases}
    \end{align*}
    where without loss of generality we assumed $(w^{1o}_a)_k \neq 0, (w^{2o}_a)_k \neq 0, \forall 1 \leq k \leq d-L$ \footnote{If one of them is zero, simply choose $P_1\pth{(\xc)_k = 0} = P_2\pth{(\xc)_k = 0} = 1$}. Then it follows that $P_1((\xc)^T w^{1o}_a \leq \alpha | a, \xk) = P_2((\xc)^T w^{2o}_a \leq \alpha | a, \xk)$.
    
    We have that
    \begin{align*}
        P_1(r \leq \beta | a = a, \xk = c ; w^1, \pi_b)
        &=
        P_1((\xc)^T w^o_a + \eta \leq \beta - c^Tw^o_a | a = a, \xk = c; w^1, \pi_b) \\
        &=
        P_2(r \leq \beta | a = a, \xk = c ; w^2, \pi_b).
    \end{align*}
    Therefore,
    \begin{align*}
        P_1(r, a, \xk ; w^1, \pi_b) 
        &=
        P_1(r | a, \xk ; w^1, \pi_b)P^{\pi_b}(a, \xk) \\
        &=
        P_2(r | a, \xk ; w^2, \pi_b)P^{\pi_b}(a, \xk) \\
        &=
        P_2(r, a, \xk ; w^2, \pi_b).
    \end{align*}
\end{proof}

The above proposition tells us that partially observable offline data cannot be used unless further assumptions are made. This is true even in the linear model, which is the focus of this work. To mitigate this problem, prior knowledge and assumptions over the hidden variables can be used. Nevertheless, such assumptions may be inaccurate and impossible to validate. Moreover, most concurrent assumptions (such as bounding the ``amount of confoundness"), do not completely resolve this issue, but rather mitigate it so that perhaps better policies can be found. 

This work considers an alternative assumption, namely, access to online interactions with the environment. This has several benefits over assuming prior knowledge:
\begin{enumerate}
    \item There is no problem of validating this assumption, i.e., if we do not have access to an online environment, we will know it.
    \item The online regime allows us to achieve an optimal policy. Specifically, since online interactions reveal the hidden context, an optimal policy is always identifiable.
    \item Partially observable data cannot hurt us, but rather only improve our performance. Looking at the problem from an online learner's point of view, the offline data is only used to improve its performance. Therefore, the offline data does not bias our results, but only decreases the number of online interactions.
\end{enumerate}

\newpage
\section{Proof of Proposition~1}
\label{supp: proof of prop 1}

We recall Proposition~\ref{proposition: omega LS to w}
\proprestate*

Before proving the proposition, we remind the reader of the continuous mapping theorem, which states that continuous functions preserve limits even if their arguments are sequences of random variables.

\begin{theorem}[Continuous Mapping, e.g.,~\citep{van2000asymptotic}]\label{theorem: CMT} Let $\brc*{X_n}$ be a set of real random variables such $X_n\in\mathbb{R}^k$ and $X_n\overset{a.s}{\rightarrow} X$. Let $d(\cdot,\cdot): \mathbb{R}^d\times \mathbb{R}^d \rightarrow \mathbb{R}$ be a distance function that generates the usual topology. Let $g:\mathbb{R}^k \rightarrow \mathbb{R}^m$ be a continuous function at the point $X\in \mathbb{R}^k.$ Then $g(X_n)\overset{a.s}{\rightarrow} g(X)$.
\end{theorem}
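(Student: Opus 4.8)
The plan is to reduce this probabilistic statement to the elementary, deterministic fact that in a metric space continuity at a point is equivalent to sequential continuity there, and then to transport that fact across the almost-sure event on which $X_n \to X$. Since almost sure convergence is, by definition, a statement about a set of sample points $\omega$ of full probability, essentially all of the work happens $\omega$-by-$\omega$, and no quantitative estimate or rate is ever needed.

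First I would pin down the interpretation of the hypothesis. Reading ``$g$ continuous at the point $X$'' as the statement that $X$ takes values almost surely in the continuity set $C_g := \{x \in \R^k : g \text{ is continuous at } x\}$ (which is automatic when, as in the intended application, $X$ is an a.s. constant), I would introduce the two exceptional events $N_1 = \{\omega : X_n(\omega) \not\to X(\omega)\}$ and $N_2 = \{\omega : X(\omega) \notin C_g\}$. By the hypothesis $X_n \overset{a.s}{\rightarrow} X$ we have $P(N_1) = 0$, and by the continuity hypothesis $P(N_2) = 0$, so $P(N_1 \cup N_2) = 0$ by subadditivity.

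The key step is then to show that on the complement $(N_1 \cup N_2)^c$ we have genuine convergence $g(X_n(\omega)) \to g(X(\omega))$. Fix such an $\omega$ and write $x = X(\omega)$ and $x_n = X_n(\omega)$, so that $x_n \to x$ and $g$ is continuous at $x$. Given $\epsilon > 0$, continuity at $x$ supplies a $\delta > 0$ with $\norm{g(y) - g(x)}_2 < \epsilon$ whenever $d(y, x) < \delta$, while $x_n \to x$ supplies an index $n_0$ with $d(x_n, x) < \delta$ for all $n \geq n_0$. Combining the two gives $\norm{g(x_n) - g(x)}_2 < \epsilon$ for all $n \geq n_0$, which is exactly $g(x_n) \to g(x)$. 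Hence the event $\{g(X_n) \to g(X)\}$ contains $(N_1 \cup N_2)^c$ and therefore has probability one, which is the claim.

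There is no genuine obstacle here; the only points requiring care are bookkeeping ones. I would note briefly that $g(X_n)$ and $g(X)$ are bona fide random variables, since the set of continuity points of any function is a Borel ($G_\delta$) set and $g$ (taken Borel measurable) composes measurably with the $X_n$. I would also flag that the metric-space setting is precisely what licenses the passage from topological continuity to the sequential characterization used above: in $\R^k$ with the usual topology generated by $d(\cdot,\cdot)$ the two notions coincide, so nothing is lost. The mathematical content is thus entirely the standard equivalence of continuity and sequential continuity, lifted verbatim to a full-probability set of sample points.
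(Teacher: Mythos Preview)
Your proof is correct and is the standard argument for the almost-sure version of the continuous mapping theorem. Note, however, that the paper does not prove this statement at all: it is quoted as a known result with a citation to \cite{van2000asymptotic} and used as a black-box tool in the proof of Proposition~\ref{proposition: omega LS to w}, so there is no paper proof to compare against.
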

\begin{proof}[Proof of Proposition~\ref{proposition: omega LS to w}]
    

    Fix $a\in \A$. By definition, $b^{LS}_a$ is also given by~\eqref{eq: omega LS},
    \begin{align}
    b^{LS}_a &= \br*{\frac{1}{N_a}\sum_{n\in [N]} \indicator{a_n=a} \xk_n \pth{\xk_n}^T }^{-1} \br*{\frac{1}{N_a}\sum_{n\in [N]}\indicator{a_n=a} \xk_n r_n},
    \end{align}
    
Define $\pth{\Qk}^T=\begin{pmatrix}
I_L & 0 \end{pmatrix} \in \mathbb{R}^{L\times d}$ and $\pth{\Qc}^T=\begin{pmatrix}
0 & I_{d-L} \end{pmatrix} \in \mathbb{R}^{d-L\times d}$. Observe that $\xk = \Qk x$, $\xc = \Qc x$ for $x\in \mathbb{R}^d,$ and that $I_d = \Qk\pth{\Qk}^T + \Qc\pth{\Qc}^T.$ Due to the model assumption (see Section~\ref{sec: preliminaries}), we can write
\begin{align*}
r(x_n,a) &= \inner{x_n,w^*_a} + \eta_n\\
&= \inner{\Qk\pth{\Qk}^Tx_n,w^*_a} + \inner{\Qc\pth{\Qc}^Tx_n,w^*_a}  +  \eta_n  \tag{$\Qk\pth{\Qk}^T + \Qc\pth{\Qc}^T = I$}\\
& = \inner{ \xk_n,(\Qk)^Tw^*_a} + \inner{ \xc_n,(\Qc)^T w^*_a}  +  \eta_n
\end{align*}
Plugging this relation into~\eqref{eq: omega LS} we get
\begin{align}
    b_a^{LS} =& \br*{\frac{1}{N_a}\sum_{n\in [N]} \indicator{a_n=a}\xk_n \pth{\xk_n}^T }^{-1} \br*{\frac{1}{N_a}\sum_{n\in [N]}\indicator{a_n=a} \xk_n \pth{\xk_n}^T  }\pth{\Qk}^T w^*_a \nonumber\\
    &+ \br*{\frac{1}{N_a}\sum_{n\in [N]} \indicator{a_n=a}\xk_n \pth{\xk_n}^T }^{-1} \br*{\frac{1}{N_a}\sum_{n\in [N]}\indicator{a_n=a} \xk_n \pth{\xc_n}^T   } \pth{\Qc}^T w^*_a \nonumber \\    
    &+ \br*{\frac{1}{N_a}\sum_{n\in [N]} \indicator{a_n=a}\xk_n \pth{\xk_n}^T }^{-1} \br*{\frac{1}{N_a}\sum_{n\in [N]}\indicator{a_n=a} \xk_n\eta_n }. \label{eq: supp omega LS explicit first relation}
\end{align}

We now analyze the three terms and apply the continuous mapping theorem iteratively (Theorem~\ref{theorem: CMT}) to prove the lemma. Let 
\begin{align}\label{eq: Y_N}
Y^{(i)}_N \coloneqq \frac{1}{N_a}\sum_{n\in [N]} \indicator{a_n=a}\xk_n \pth{\xk_n}^T = \frac{1}{N_a/N}\frac{1}{N}\sum_{n\in [N]} \indicator{a_n=a}\xk_n \pth{\xk_n}^T = \frac{Z_N^{(2)}}{Z_N^{(1)}},\end{align}
where $Z_N^{(1)} \coloneqq N_a/N = \frac{1}{N}\sum_{n\in[N]} \indicator{a_n=a}$ and $Z_N^{(2)} \coloneqq \frac{1}{N}\sum_{n\in [N]} \indicator{a_n=a}\xk_n \pth{\xk_n}^T.$  

By the strong law of large numbers (SLLN) 
\begin{align*}
    &Z_N^{(1)} \overset{a.s}{\rightarrow}  P^{\pi_b}(a)\\
    &Z_N^{(2)} \overset{a.s}{\rightarrow}  P^{\pi_b}(a)\Eb{\xk \pth{\xk}^T\mid a},
\end{align*}
for $N\rightarrow \infty$ since both are empirical averages of $N$ i.i.d. random variables and, thus, converge to their means a.s.. 

The mean of a random variable in the empirical average $Z_N^{(1)}$ is simply given by
\begin{align*}
    \Eb{\indicator{a_n=a}}=  P^{\pi_b}(a),
\end{align*}
since all the random variables are i.i.d.. 

The mean of a random variable in the empirical average of $Z_N^{(2)}$ is given by
\begin{align*}
    \Eb{\indicator{a_n=a}\xk_n \pth{\xk_n}^T} &= P^{\pi_b}(a) \Eb{\xk_n \pth{\xk_n}^T \mid a},
\end{align*}
since $\E{X\mid A} =\frac{\E{\indicator{A}X} }{P(A)}$.

By the continuous mapping theorem (Theorem~\ref{theorem: CMT}) we get that $$Y^{(i)}_N\overset{a.s}{\rightarrow} \frac{P^{\pi_b}(a)\Eb{\xk \pth{\xk}^T\mid a}}{P^{\pi_b}(a)} =\Eb{\xk \pth{\xk}^T\mid a},$$ which is valid since $g(a,b)=\frac{a}{b}$ is continuous at $b>0$ and we assume that for all $a\in \A$ $P^{\pi_b}(a)>0.$ 

Similar reasoning, leads to the following convergence
\begin{align}
    Y^{(ii)}_N \coloneqq \frac{1}{N_a}\sum_{n\in [N]} \indicator{a_n=a}\xk_n \pth{\xc_n}^T \overset{a.s}{\rightarrow}  \Eb{\xk \pth{\xc}^T \mid a} ,\label{eq: Yb supp}
\end{align}
and
\begin{align}\label{eq: Yc supp}
    &Y^{(iii)}_N \coloneqq \frac{1}{N_a}\sum_{n\in [N]} \indicator{a_n=a}\xk_n\eta_n \overset{a.s}{\rightarrow} 0,
\end{align}
where in the last relation we also use the fact that $\eta_n$ is zero mean random variable, $ \Eb{\eta_n}=0$, and is independent of $x_n,a_n$ and thus $\eta_n$ is also independent of $\xk_n = \Qk x_n,a_n$.

By~\eqref{eq: supp omega LS explicit first relation} and by definitions~\eqref{eq: Y_N},~\eqref{eq: Yb supp},~\eqref{eq: Yc supp},
\begin{align*}
    b_a^{LS} = (Y^{(i)}_N)^{-1}Y^{(i)}_N \Qk w^*_a + (Y^{(i)}_N)^{-1}Y^{(ii)}_N \Qc w^*_a + (Y^{(i)}_N)^{-1}Y^{(iii)}_N.
\end{align*}

We now apply the continuous mapping theorem (Theorem~\ref{theorem: CMT}) on each of the three terms. Notice that the conditions of this theorem are satisfied since the limit of  $Y^{(i)}_N$, $ \Eb{\xk \pth{\xk}^T \mid a}$, has an inverse by the assumption which implies that the limit point is continuous. Thus, we get that for $N\rightarrow \infty$ it holds a.s. that
\begin{align*}
    b_a^{LS} = \pth{\Qk}^T w_a^* + R_{11}(a)^{-1} R_{12}(a)\pth{\Qc}^T w_a^*
\end{align*}
where $R_{11}(a) = \Eb{\xk\pth{\xk}^T | a}, \ R_{12}(a) =  \Eb{\xk\pth{\xc }^T | a}.$ By taking union bound on all $a\in \A$ we conclude the proof.
\end{proof}

\newpage
\section{Linear Regression with Side Information} 
\label{supp: properties of prr}

We wish to construct a least squares variant for $w^*_a$, which utilizes the information in Equation~\eqref{eq: side information}. In Section~\ref{sec: side information} we considered the problem of linear regression under the linear model ${y = \inner{x, Pw^*} + \eta}$, where $P \in \R^{d \times d}$ is a projection matrix of rank $m$ and $\eta$ is some centered random noise. One way to solve this problem is through ridge regression on the full euclidean space $\R^d$, under projection of $w$. In Section~\ref{sec: side information} we constructed a projected variant of ridge regression (P-RR), which attempts to solve the regularized problem in Equation~\ref{eq: prr optimization problem}, i.e.,
\begin{align*}
    \min_{w\in \mathbb{R}^d}\brk[c]*{\sum_{i=1}^{t-1} \pth{\inner{x_i, P_aw}-y_{a,i}}^2 + \lambda\norm{P_aw}_2^2 },
\end{align*}
We then took its smallest norm solution
\begin{align}
\label{eq: prr solution}
    \hat{w}_{t,a}^{P_a}
    = 
    \pth{P_a \pth{\lambda I_d + \sum_{i=1}^{t-1} x_i x_i^T} P_a}^\dagger
    \pth{\sum_{i=1}^{t-1} y_i x_i}.
\end{align}
where $y_{a,i} =  r_i - \inner{x_i, M_{a_i}^\dagger b_{a_i}}$. 

Let us now take a closer look at Equation~\eqref{eq: prr solution}. We wish to show that this solution is closely related to a ridge regression problem in a lower dimension. First, notice that taking the pseudoinverse of $P_a \pth{\lambda I_d + \sum_{i=1}^{t-1} x_i x_i^T} P_a$ can be written in an equivalent method using the inverse operator. This is formalized generally in the following proposition.

\begin{proposition}
\label{prop: equivalnce of psudoinverse and low-dimension counters} Let $V\in \mathbb{R}^{d\times d}$ be a PD matrix and $P=UU^T$ a projection matrix of rank $l$, where and $U\in \mathbb{R}^{d\times l}$ is a matrix with orthonormal columns. Then,
\begin{align*}
    (PVP)^\dagger = U(U^TVU)^{-1}U^T.
\end{align*}
\end{proposition}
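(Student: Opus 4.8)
The plan is to verify the defining Moore--Penrose conditions directly for the candidate matrix $X = U(U^TVU)^{-1}U^T$, using that $V$ is positive definite and $U$ has orthonormal columns (so $U^TU = I_l$ and $P = UU^T$). First I would check that $U^TVU \in \R^{l\times l}$ is invertible: for any nonzero $z\in\R^l$, $z^T(U^TVU)z = (Uz)^TV(Uz) > 0$ since $Uz\neq 0$ (as $U$ has full column rank) and $V$ is PD; hence $U^TVU$ is itself PD and in particular invertible, so $X$ is well defined.

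Next I would compute the two products $PVP\cdot X$ and $X\cdot PVP$. Writing $PVP = UU^TVUU^T$, we get
\begin{align*}
    (PVP)X &= UU^TVUU^T \cdot U(U^TVU)^{-1}U^T = U(U^TVU)(U^TVU)^{-1}U^T = UU^T = P,
\end{align*}
using $U^TU = I_l$ in the middle. Symmetrically $X(PVP) = U(U^TVU)^{-1}U^T \cdot UU^TVUU^T = UU^T = P$. With these two identities in hand, the four Penrose conditions follow mechanically: $(PVP)X(PVP) = P\cdot PVP = PVP$ since $P$ is the projection onto the column space of $U$ and $PVP$ already has its columns in that space (indeed $P\cdot UU^TVUU^T = UU^TVUU^T$); similarly $X(PVP)X = PX = X$ for the same reason (columns of $X$ lie in $\mathrm{range}(U)$); and both $(PVP)X = P$ and $X(PVP) = P$ are symmetric because $P = UU^T$ is symmetric. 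Therefore $X$ satisfies all four conditions and equals $(PVP)^\dagger$ by uniqueness of the Moore--Penrose inverse.

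I do not expect a genuine obstacle here; the only point requiring a little care is the bookkeeping that $P$ acts as the identity on the relevant subspace, i.e.\ that $P\cdot(PVP) = PVP$ and $X\cdot P = X$, which is where $PP = P$ and the orthonormality $U^TU = I_l$ get used. An equivalent and perhaps cleaner route, which I would mention as an alternative, is to pass to the reduced coordinates: the map $z\mapsto Uz$ is an isometry from $\R^l$ onto $\mathrm{range}(P)$, and under it $PVP$ corresponds to the invertible matrix $U^TVU$, whose pseudo-inverse (being invertible) is just $(U^TVU)^{-1}$; transporting back via $U$ gives exactly $X$. Either way the statement is immediate once invertibility of $U^TVU$ is established.
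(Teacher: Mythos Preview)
Your proof is correct. You verify the four Moore--Penrose conditions directly: after checking $U^TVU$ is PD (hence invertible), you compute $(PVP)X = X(PVP) = P$ and then read off all four conditions from $P^2 = P$ and the symmetry of $P$. This is clean and self-contained.

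The paper takes a different route. It extends $U$ to a full orthonormal basis $U_{\mathrm{Ext}} \in \R^{d\times d}$, writes
\[
PVP = U_{\mathrm{Ext}}\begin{pmatrix} U^TVU & 0 \\ 0 & 0 \end{pmatrix}U_{\mathrm{Ext}}^T,
\]
and then invokes the unitary invariance of the pseudo-inverse, $(\bar U A \bar U^T)^\dagger = \bar U A^\dagger \bar U^T$, together with the fact that the pseudo-inverse of the block-diagonal matrix is the block-diagonal of the inverse (since $U^TVU$ is invertible). Your alternative sketch via reduced coordinates is close in spirit to this, but the paper makes the block structure explicit rather than arguing abstractly about transport under the isometry. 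Your direct-verification argument is arguably more elementary since it avoids introducing the extended basis and the auxiliary fact about unitary conjugation; the paper's version, on the other hand, makes the geometry (inverse on $\mathrm{range}(P)$, zero on its complement) visible at a glance.
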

\begin{proof}
Observe that
\begin{align}
    PVP = U(U^TVU)U^T  \label{eq: PRR rel 1}
\end{align}

Let $U_{\mathrm{Ext}}\in \mathbb{R}^{d\times d}$ be a unitary matrix with its first $l$ columns $U$ and rest $d-l$ columns be arbitrary orthonormal vectors (such that $U_{\mathrm{Ext}}$ is unitary), i.e., an extension of $U$ to the entire space $\mathbb{R}^d$. Using this notation \eqref{eq: PRR rel 1} can be written as
\begin{align}
     PVP = U_{\mathrm{Ext}}\begin{pmatrix}
        U^TVU& 0\\
         0 & 0
     \end{pmatrix} U_{\mathrm{Ext}}^T.
     \label{eq: PRR rel 2}
\end{align}
Next, recall that for any unitary matrix $\bar{U}$ and any matrix $A$ it holds that $(\bar{U}A\bar{U}^T)^\dagger=\bar{U}A^\dagger \bar{U}^T$. Then, by Equation~\eqref{eq: PRR rel 2} 
\begin{align}
    \br*{PVP}^\dagger 
    &= \br*{U_{\mathrm{Ext}} \begin{pmatrix}
      U^TVU& 0 \\
         0 & 0
     \end{pmatrix} U_{\mathrm{Ext}}^T}^\dagger \nonumber \\
    &= U_{\mathrm{Ext}} \begin{pmatrix}
        U^TVU& 0 \\
         0 & 0
     \end{pmatrix}^\dagger U_{\mathrm{Ext}}^T \tag{$(UAU^T)^\dagger=UA^\dagger U^T$}\nonumber\\
     &= U_{\mathrm{Ext}} \begin{pmatrix}
         (U^TVU)^\dagger& 0\\
         0 & 0
     \end{pmatrix}U_{\mathrm{Ext}}^T \nonumber\\
     &=U_{\mathrm{Ext}} \begin{pmatrix}
         (U^TVU)^{-1}& 0\\
         0 & 0
     \end{pmatrix}U_{\mathrm{Ext}}^T = U(U^TVU)^{-1}U^T, \label{eq: PRR rel 3}
\end{align}
where the third relation holds since $U^TVU$ is full rank (since $V$ is PD so is $U^TVU$). 
\end{proof}

\newpage
Using Proposition~\ref{prop: equivalnce of psudoinverse and low-dimension counters} we now show that P-RR is in fact equivalent to solving a ridge regression problem in a lower dimension. This will become useful in our proof of Theorem~\ref{thm: regret bound oful}, as it will allow us to reduce the linear bandit problem to a lower dimensional variant of the same problem. The following proposition proves this equivalence.

\begin{proposition}[Equivalent forms of P-RR]
\label{prop: equivalent forms of PRR}
Let $w_{\mathrm{PRR}}\in \mathbb{R}^d$ be the least $L_2$-norm solution of the following P-RR for $\lambda>0$
\begin{align*}
    \arg\min_{w\in \mathbb{R}^d}\br*{\sum_{n=1}^t \br*{\inner{Px_n, w}-y_n}^2 + \lambda\norm{Pw}_2^2 },
\end{align*}
where $x_n\in \mathbb{R}^d, y_n\in\mathbb{R}$, $P=UU^T$ is a projection matrix of rank $l$ and $U\in \mathbb{R}^{d\times l}$ is a matrix with orthonormal columns. Define $X\in \mathbb{R}^{t \times d}$ and $\tilde{X}\in \mathbb{R}^{t\times l}$ as matrices with $\brc*{x_n^T}_{n=1}^t$ and $\brc*{(U^Tx_n)^T}_{n=1}^t$ in their rows, respectively. Define $Y\in \mathbb{R}^t$ as the vector of $\brc*{y_n}_{n=1}^t$.   Then, $w_{\mathrm{PRR}}$ satisfies the following relations.
\begin{enumerate}
    \item $w_{\mathrm{PRR}} = (P (X^TX+\lambda I_d) P)^\dagger PX^TY$.
    \item $U^Tw_{\mathrm{PRR}} = (\tilde{X}^T \tilde{X}+\lambda I_l)^{-1}\tilde{X}^T Y.$
\end{enumerate}
\end{proposition}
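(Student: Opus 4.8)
The plan is to derive both identities directly from the first-order optimality conditions of the P-RR objective, and then invoke Proposition~\ref{prop: equivalnce of psudoinverse and low-dimension counters} to translate the pseudoinverse form into the low-dimensional inverse form. For part~1, first rewrite the objective using matrix notation: $\sum_n (\inner{Px_n,w}-y_n)^2 + \lambda\norm{Pw}_2^2 = \norm{XPw - Y}_2^2 + \lambda\norm{Pw}_2^2$ (using $\inner{Px_n,w} = \inner{x_n,Pw} = (Xw)_n$ after noting $P$ is symmetric, so $XPw$ has entries $\inner{Px_n,w}$). Differentiating in $w$ gives the normal equations $P(X^TX+\lambda I_d)Pw = PX^TY$. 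The solution set is an affine subspace (shifted kernel of $P(X^TX+\lambda I_d)P$), and by Property~\ref{property 3} the least $L_2$-norm solution is exactly $w_{\mathrm{PRR}} = (P(X^TX+\lambda I_d)P)^\dagger PX^TY$; one should check the consistency of the normal equations, which holds because $PX^TY$ lies in the range of $P(X^TX+\lambda I_d)P$ — this follows since both sides live in the range of $P$ and $P(X^TX+\lambda I_d)P$ restricted to $\mathrm{range}(P)$ is invertible (as $X^TX+\lambda I_d$ is PD).

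For part~2, the cleanest route is to apply Proposition~\ref{prop: equivalnce of psudoinverse and low-dimension counters} with $V = X^TX + \lambda I_d$ (which is PD since $\lambda>0$), yielding $(P(X^TX+\lambda I_d)P)^\dagger = U(U^T(X^TX+\lambda I_d)U)^{-1}U^T$. Then substitute into the part~1 formula and left-multiply by $U^T$: using $U^TU = I_l$ we get $U^T w_{\mathrm{PRR}} = (U^TX^TXU + \lambda I_l)^{-1} U^TX^TY$. It remains to identify $U^TX^TXU = \tilde X^T\tilde X$ and $U^TX^TY = \tilde X^TY$, which is immediate from the definition of $\tilde X$: its $n$-th row is $(U^Tx_n)^T$, so $\tilde X = XU$, hence $\tilde X^T\tilde X = U^TX^TXU$ and $\tilde X^TY = U^TX^TY$. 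This gives $U^Tw_{\mathrm{PRR}} = (\tilde X^T\tilde X + \lambda I_l)^{-1}\tilde X^TY$ as claimed.

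I do not expect a serious obstacle here; the one point requiring a little care is the passage from "satisfies the normal equations" to "equals the pseudoinverse expression," i.e., verifying that $(P(X^TX+\lambda I_d)P)^\dagger PX^TY$ is indeed both a solution and the minimum-norm one. The minimum-norm claim is Property~\ref{property 3} applied to the (consistent) linear system $P(X^TX+\lambda I_d)P\, w = PX^TY$; consistency is the thing to nail down, and it follows from the rank/range argument above (equivalently, from the explicit low-dimensional reduction in part~2, which exhibits a genuine minimizer). One should also note that the minimizer of the P-RR objective is unique only up to the kernel of $P$, which is why the statement fixes the least-norm representative — and part~2 only constrains $U^Tw_{\mathrm{PRR}}$, i.e., the component of $w_{\mathrm{PRR}}$ in $\mathrm{range}(P)$, which is the genuinely determined part.
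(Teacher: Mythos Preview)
Your proposal is correct and follows essentially the same route as the paper: recast the objective as a quadratic in $Pw$, read off the normal equations, take the minimum-norm solution via the pseudoinverse for part~1, and then apply Proposition~\ref{prop: equivalnce of psudoinverse and low-dimension counters} together with $\tilde X = XU$ and $U^TU = I_l$ for part~2. If anything, you are slightly more careful than the paper in explicitly addressing consistency of the normal equations and the non-uniqueness of minimizers modulo $\ker P$.
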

\begin{proof}
\emph{Claim (1).}
The P-RR problem can we recast as
\begin{align*}
    \arg\min_{w\in \mathbb{R}^d}\br*{ w^T(P(X^TX+\lambda I_d)P)w + 2w^T PX^TY}.
\end{align*}
The smallest $L_2$ norm solution of this optimization problem is given by 
\begin{align}
    w_{\mathrm{PRR}} = (P (X^TX+\lambda I_d) P)^\dagger PX^TY \label{eq: PRR claim 1}
\end{align}
which establishes the first claim. 

\emph{Claim (2).} 

By Proposition~\ref{prop: equivalnce of psudoinverse and low-dimension counters} it holds that 
\begin{align}
    &(P (X^TX+\lambda I_d) P)^\dagger = U(U^T (X^TX+\lambda I_d) U)^{-1}U^T  \tag{Prop.~\ref{prop: equivalnce of psudoinverse and low-dimension counters}} \nonumber \\
    &=U(U^TX^TXU+\lambda I_l )^{-1}U^T \nonumber \tag{$U^TU= I_l$} \\
    &=U(\tilde X^T\tilde X+\lambda I_l )^{-1}U^T, \label{eq: equivalence eq 12}
\end{align}
by defining $\tilde X=XU$, which is a matrix with $\brc*{U^T x_n}$ in its rows. To conclude the proof, observe the following holds
\begin{align*}
     U^Tw_{\mathrm{PRR}} &= U^T\br*{U(\tilde{X}^T \tilde{X}+ \lambda I_l)^{-1}U^T}PX^TY \tag{By~\eqref{eq: PRR claim 1} and~\eqref{eq: equivalence eq 12}}\\
     &=U^TU(\tilde{X}^T \tilde{X}+ \lambda I_l)^{-1}U^TU\tilde{X}^TY \\
     &= (\tilde{X}^T \tilde{X}+ \lambda I_l)^{-1}\tilde{X}^TY \tag{$U^TU=I_l$}.
\end{align*}
\end{proof}

\newpage
\section{OFUL with Linear Side Information}
\label{supp: oful}

\begin{algorithm}[H]
\caption{OFUL with Linear Side Information (Equivalent Form)}
\label{algo: supp projected oful equivalent form}
\begin{algorithmic}[1]
\STATE {\bf input:} $\alpha > 0, M_a \in \R^{L\times d}, b_a \in \R^L$
\STATE {\bf init:} $V_a = \lambda I, Y_a = 0, \forall a \in \A$
\FOR{$t = 1, \hdots, T$}
    \STATE  Receive context $x_t$
    \STATE  ${\hat{w}_{t,a}^{P_a} = \pth{P_a V_a P_a}^\dagger
    \pth{Y_a - \pth{V_a - \lambda I} M_a^\dagger b_a}}$
    \STATE ${\sqrt{\beta_t(\delta)} = \sigma \sqrt{(d-L) \log{\frac{K(1+t\xBound^2/\lambda)}{\delta}}} + \lambda^{1/2}\wBound}$
  \STATE $a_t,\tilde{w}_{t,a_t}\in \arg\max_{a\in \A,  w_a \in C_{t,a}} \br*{\inner{x_t,P_{a}w_a} + \inner{x_t,M^\dagger_a b_a}}$
    \STATE Play action $a_t$ and receive $r_t$
  \STATE $V_{a_t} = V_{a_t} + x_tx_t^T, Y_{a_t} = Y_{a_t} + x_t r_t$
\ENDFOR
\end{algorithmic}
\end{algorithm}

In this section we supply regret guarantees for OFUL~\citep{abbasi2011improved} with linear side information of the form $M_a w_a=b_a$ for every $a\in \A$. In Appendix~\ref{supp: approximate oful}, building on this analysis, we study the case $M_a$ is estimated in an online manner. 

\subsection{Equivalences of Update Rule}

The optimistic estimation of the reward of each arm has the form (Algorithm~\ref{algo: projected oful})
\begin{align*}
    \inner{x_t, P_a \hat{w}_{t,a}} +\norm{x}_{(P_aV_{t-1,a}P_a)^\dagger} +\inner{x_t,M^\dagger_a b_a}.
\end{align*}

In this section we establish that this update rule is equivalent to the update rule written in Algorithm~\ref{algo: supp projected oful equivalent form}. For computational purposes, it is easier to consider the version given in Algorithm~\ref{algo: projected oful}, whereas in terms of analysis, the equivalent form given in Algorithm~\ref{algo: supp projected oful equivalent form}. The following proposition proves this equivalence by providing an analytical solution to the optimisitic optimization problem in Line 7 of Algorithm~\ref{algo: supp projected oful equivalent form}.

\begin{proposition}[Equivalent Forms of OFUL's Optimistic Estimation]
Let $P=UU^T$ be an orthogonal projection matrix, $U\in \mathbb{R}^{d\times l}$ a matrix with orthonormal columns and $V\in \mathbb{R}^{d\times d}$ a PD symmetric matrix. Fix $\hat{w}\in \mathbb{R}^d,\beta\in \mathbb{R}$ and let $$\mathcal{C}=\brc{w: \norm{U^T w - U^T \hat{w}}_{\tilde{V}} \leq \sqrt{\beta}},$$
where $\tilde{V}=U^TVU \in \mathbb{R}^{l\times l}$. Then, for any fixed $x\in \mathbb{R}^d$
\begin{align*}
     \max_{w\in \mathcal{C}}\br*{\inner{x, Pw}}= \inner{x, P \hat{w}} +\sqrt{\beta} \norm{x}_{(P V P)^\dagger}
\end{align*}
\end{proposition}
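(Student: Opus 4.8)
The plan is to reduce the optimization over the $d$-dimensional constraint set $\mathcal{C}$ to a standard quadratic-form maximization in the $l$-dimensional coordinates $U^T w$, using the fact that $P = UU^T$ only ``sees'' the component of $w$ in the range of $U$. First I would rewrite the objective: since $P = UU^T$, we have $\inner{x, Pw} = \inner{x, UU^T w} = \inner{U^T x, U^T w}$, so the objective depends on $w$ only through $z := U^T w \in \mathbb{R}^l$. Likewise the constraint set in terms of $z$ becomes $\norm{z - U^T\hat w}_{\tilde V} \le \sqrt{\beta}$, where $\tilde V = U^T V U$ is PD (as $V$ is PD and $U$ has full column rank). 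So the problem is exactly
\begin{align*}
    \max_{z:\, \norm{z - U^T\hat w}_{\tilde V} \le \sqrt\beta} \inner{U^T x, z}.
\end{align*}

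Next I would solve this elementary constrained linear maximization. Writing $z = U^T\hat w + \delta$ with $\norm{\delta}_{\tilde V}\le\sqrt\beta$, the objective is $\inner{U^T x, U^T\hat w} + \inner{U^T x, \delta}$, and by Cauchy–Schwarz in the $\tilde V$-inner product (i.e., $\inner{u,v} = \inner{u, \tilde V^{-1/2}\tilde V^{1/2} v} \le \norm{u}_{\tilde V^{-1}}\norm{v}_{\tilde V}$) the maximum of $\inner{U^T x,\delta}$ over the ball is $\sqrt\beta\,\norm{U^T x}_{\tilde V^{-1}}$, attained at $\delta = \sqrt\beta\, \tilde V^{-1} U^T x / \norm{U^T x}_{\tilde V^{-1}}$. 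Hence
\begin{align*}
    \max_{w\in\mathcal C}\inner{x,Pw} = \inner{x, P\hat w} + \sqrt\beta\, \norm{U^T x}_{\tilde V^{-1}},
\end{align*}
using $\inner{U^T x, U^T\hat w} = \inner{x, UU^T\hat w} = \inner{x, P\hat w}$.

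It then remains to identify $\norm{U^T x}_{\tilde V^{-1}}$ with $\norm{x}_{(PVP)^\dagger}$. This is where I would invoke Proposition~\ref{prop: equivalnce of psudoinverse and low-dimension counters}, which gives $(PVP)^\dagger = U(U^TVU)^{-1}U^T = U\tilde V^{-1}U^T$. Therefore $\norm{x}_{(PVP)^\dagger}^2 = x^T U\tilde V^{-1}U^T x = (U^T x)^T \tilde V^{-1}(U^T x) = \norm{U^T x}_{\tilde V^{-1}}^2$, which closes the argument.

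The only mild subtlety — and the step I would be most careful about — is the reduction itself: one must check that restricting attention to $z = U^T w$ loses nothing, i.e., that the constraint genuinely only involves $U^T w$ (true by the definition of $\mathcal C$) and that every target value of $z$ in the $l$-dimensional ball is achieved by some $w \in \mathbb{R}^d$ (true since $U^T$ is surjective onto $\mathbb{R}^l$). Given that, everything else is the routine Cauchy–Schwarz computation plus the cited pseudoinverse identity, so no real obstacle remains.
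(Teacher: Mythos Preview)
Your proposal is correct and follows essentially the same approach as the paper: reduce $\inner{x,Pw}$ to $\inner{U^Tx, U^Tw}$, apply Cauchy--Schwarz in the $\tilde V$-inner product to maximize over the ellipsoidal constraint, and then invoke Proposition~\ref{prop: equivalnce of psudoinverse and low-dimension counters} to rewrite $\norm{U^Tx}_{\tilde V^{-1}}$ as $\norm{x}_{(PVP)^\dagger}$. Your explicit discussion of the surjectivity of $U^T$ (ensuring the maximizing $z$ is actually attained by some $w$) is a nice touch that the paper handles more implicitly by exhibiting the maximizer.
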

\begin{proof}
We have that
\begin{align}
    \max_{w\in \mathcal{C}}\br*{\inner{x, Pw}} 
    &=  
    \inner{x, P\hat{w}} + \max_{w\in \mathcal{C}}\br*{\inner{x, P\pth{w-\hat{w}}}} \nonumber \\
    &=
    \inner{x, P\hat{w}} + \max_{w\in \mathcal{C}}\br*{\inner{U^Tx, U^Tw-U^T\hat{w}}}. 
    \label{eq: oful update rule equivalence}
\end{align}
Next,
\begin{align*}
    \inner{U^Tx, U^Tw-U^T\hat{w}}
    &\leq 
    \norm{U^Tx}_{\tilde{V}^{-1}}\norm{U^Tw-U^T\hat{w}}_{\tilde{V}} \\
    &\leq 
    \sqrt{\beta} \norm{U^Tx}_{\tilde{V}^{-1}},
\end{align*}
where the first inequality follows by Cauchy-Schwartz and $\tilde{V}=U^TVU$ is PD, and the second inequality by the assumption $\norm{U^Tw-U^T\hat{w}}_{\tilde{V}}\leq \sqrt{\beta}$ for any $w\in \mathcal{C}$. Moreover, the inequality is attained with equality for $\tilde{V}(U^Tw - U^T\hat{w}) = \frac{\sqrt{\beta}}{\norm{U^T x}_{\tilde V^{-1}}}\tilde{V}^{-1}U^T x$ (such $w$ is indeed contained in $\mathcal{C}$). Thus, we get that
\begin{align*}
    \max_{w\in \mathcal{C}}\br*{\inner{U^Tx, U^Tw-U^T\hat{w}}} = \sqrt{\beta}\norm{U^Tx}_{\tilde{V}^{-1}}.
\end{align*}
Plugging the above into Equation~\eqref{eq: oful update rule equivalence} and we get
\begin{align*}
    \max_{w\in \mathcal{C}}\br*{\inner{x, Pw}} &= \inner{x, P\hat{w}} + \sqrt{\beta}\norm{U^Tx}_{\tilde{V}^{-1}}\\
    & = \inner{x, P\hat{w}} + \sqrt{\beta}\norm{x}_{U\tilde{V}^{-1}U^T}\\
    & = \inner{x, P\hat{w}} + \sqrt{\beta}\norm{x}_{(PVP)^\dagger}. \tag{Proposition~\ref{prop: equivalnce of psudoinverse and low-dimension counters}}
\end{align*}

\end{proof}

\subsection{Proof of Theorem~1}

We are now ready to prove Theorem~\ref{thm: regret bound oful}. We first provide a sketch of the proof. Using the linear side observation we can deduce $(I-P_a)w^*_a = M^\dagger_a b_a$. Thus, we should only estimate the part of $w^*_a$ not given by the linear side observation, $P_a w^*_a$. To this end, we use the P-RR estimator (see Section~\ref{sec: side information} and Appendix~\ref{supp: properties of prr})

\begin{align}
    \hat{w}_{t+1,a} \in \arg\min_{w\in \mathbb{R}^d} \brc*{\sum_{n=1}^{t} \indicator{a_n=a}\br*{\inner{P_a x_{n}, w} - (y_{n} - \inner{x_{n}, M^\dagger_a b_a})}^2 + \lambda \norm{P_a w}_2^2}, 
    \label{supp: prr estimator full}
\end{align}

\begin{proof}
We start by defining the good event $\mathcal{G}$ as the event $\brk[c]*{U_a^T w^*_a\in C_{t,a} \forall t \geq 0, \forall a \in \A}$. By Lemma~\ref{lemma: projected subspace optimism}, $P(\mathcal{G}) > 1-\delta$.

Denote by $I_k(a)$ the $k^\text{th}$ time action $a$ was chosen in the sequence $x_1, a_1, \hdots, x_t, a_t$, that is,
    \begin{equation*}
        I_k(a)
        =
        \min\brk[c]*{t : \sum_{j=1}^{t} \indicator{a_j = a} = k},
    \end{equation*}
and denote by $N_t(a)$ the total number of times action $a$ was chosen by time $t$. Also, denote the PD matrix
\begin{align*}
    \tilde V_{t-1,a} &= \lambda I_{d-L}+ \sum_{i=1}^{t-1} \indicator{a_i=a}(U_a^T x_{i})(U^T_a x_{i})^T.
\end{align*}
Then the following relations hold for every $t\geq 0$ conditioning on the good event.
\begin{align}
r_t= &\inner{x_t,w^*_{a^*(x_t)}}
 - \inner{x_t,w^*_{a_t}} \nonumber\\
 &\leq\inner{P_{a_t}x_t, \tilde w_{a_t}} +\inner{x_t,M^\dagger_{a_t} b_{a_t}}  - \inner{x_t,w^*_{a_t}} \tag{Corollary~\eqref{corollary: update rule optimism}} \nonumber\\
 &=\inner{P_{a_t}x_t, \tilde w_{a_t}}  - \inner{P_{a_t}x_t,w^*_{a_t}} \tag{$w_a = P_a w_a + M^\dagger_a b_a$, Property~\ref{property: linear side information}}\nonumber\\ 
 &=\inner{U_{a_t}^Tx_t,  \br*{ U_{a_t}^T\tilde w_{a_t}-U_{a_t}^Tw^*_{a_t}}} \tag{$P_a=U_a U_a^T$}\nonumber\\
 &\leq \norm{U_{a_t}^Tx_t}_{\tilde V_{t-1,a_t}^{-1}}\norm{ U_{a_t}^T\hat w_{a_t}-U_{a_t}^Tw^*_{a_t}}_{\tilde V_{t-1,a_t}} + \norm{U_{a_t}^Tx_t}_{\tilde V_{t-1,a_t}^{-1}}\norm{ U_{a_t}^T\hat w_{a_t}-U_{a_t}^T\tilde{w}_{a_t}}_{\tilde V_{t-1,a_t}} \nonumber \\
 &\leq 2\sqrt{\beta_{t-1}(\delta)}\norm{U_{a_t}^Tx_t}_{\tilde V_{t-1,a_t}^{-1}}, \tag{Lemma~\ref{lemma: projected subspace optimism}}
\end{align}

Next, notice that $r_t \leq 2$, since $\inner{x_t,w_a}\in[-1,1]$. Then, using the above we get that
\begin{align}
    r_t \leq 2\sqrt{\beta_{t-1}(\delta)}\min\br*{\norm{U_{a_t}^Tx_t}_{ \tilde V_{t-1,a_t}^{-1}},1}. \label{eq: oful main rel 2}
\end{align}

Combining the above and conditioning on the good event we get that for all $t\geq 0$
\begin{align*}
  &\Regret{t}\leq \sqrt{t\sum_{i=1}^t r_t^2 } \\
    &\leq 2\sqrt{t\beta_t(\delta)\sum_{i=1}^t\min \br*{\norm{U_{a_i}^Tx_i}^2_{\tilde V_{i-1,a_i}^{-1}},1} } \tag{\eqref{eq: oful main rel 2} and $\beta_t(\delta)\geq \beta_{t'}(\delta)$ for $t\geq t'$ }\\
    &= 2\sqrt{t\beta_t(\delta)\sum_{a\in \A}\sum_{i=1 }^{N_t(a)} \min \br*{\norm{U_{a}^Tx_{I_{i}(a)}}^2_{\tilde V_{I_i(a),a}^{-1}},1} }\\
    &\leq 2\sqrt{t}\br*{\lambda^{1/2}\wBound+\sigma \sqrt{(d-L)\log{\frac{K(1+t\xBound^2/\lambda))}{\delta }}}}\sqrt{(d-L)\sum_{a\in \A} \log{\lambda + \frac{N_t(a) \xBound ^2}{d-L}}} \tag{Lemma~\ref{lemma: eliptical potential lemma abbasi}}\\
    &\leq 2\sqrt{t}\br*{\lambda^{1/2}\wBound+\sigma \sqrt{(d-L)\log{\frac{K(1+t\xBound^2/\lambda))}{\delta }}}}\sqrt{(d-L)K \log{\lambda + \frac{t\xBound^2}{(d-L)K}}}, \tag{Jensen's Inequality \& $\sum_{a\in [K]} N_t(a)=t$}
\end{align*}
which concludes the proof.

\end{proof}

\subsection{Optimism of OFUL}

The next result establishes that with high probability $U_a^T w_a^*\in C_{t,a}$, i.e., the true vector $U_a^T w_a^*$ is contained in the set $ C_{t,a}$, which is a ball around the rotated PRR estimator $\hat{w}_{t,a}$. We prove this result via reduction of the PRR estimator to lower dimensional ridge regression (see Proposition~\ref{prop: equivalent forms of PRR}).

\begin{lemma}[Projected Subspace Optimism]\label{lemma: projected subspace optimism}
Let $\hat{w}_{t+1,a}$ be the PRR estimator of $w$ with $P_a=U_aU_a^T$ and $U_a\in \mathbb{R}^{d\times d-L}$. Let $\norm{P_a w^*_a}\leq \wBound, \norm{P_a x_n}\leq \xBound $ . Then, for all $t\geq 0, a\in \A$ and for any $\delta >0$ it holds that $U_{a}^Tw^*_a\in C_{t,a}$ with probability greater than $1-\delta$, where
\begin{align*}
     C_{t,a} = \brc*{w\in \mathbb{R}^{d}: \norm{U^T_{a} \hat{w}_{t+1,a} - U^T_{a}w }_{\tilde V_{t,a}} \leq \sqrt{\beta_t(\delta)}},
\end{align*}
where $\tilde V_{t,a} = \lambda I_{d-L} + \sum_{n=1}^{t} \indicator{a_n=a} (U^T_a x_n)(U^T_a x_n)^T,$ and 
\begin{equation*}
\sqrt{\beta_t(\delta)} 
= 
\sigma \sqrt{(d-L)\log{\frac{K(1+t\xBound^2/\lambda))}{\delta }} } + \lambda^{1/2}\wBound.
\end{equation*}
\end{lemma}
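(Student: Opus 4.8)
The plan is to reduce the Projected Ridge Regression (P-RR) estimator to an ordinary ridge regression problem in the lower-dimensional space $\mathbb{R}^{d-L}$ and then invoke the standard self-normalized confidence bound of \citet{abbasi2011improved}. Concretely, by Proposition~\ref{prop: equivalent forms of PRR}, the rotated estimator $U_a^T \hat{w}_{t+1,a}$ equals $(\tilde X_a^T \tilde X_a + \lambda I_{d-L})^{-1}\tilde X_a^T Y_a$, where $\tilde X_a$ has rows $\brc*{(U_a^T x_n)^T : a_n = a, n \le t}$ and $Y_a$ collects the corrected targets $y_{a,n} = r_n - \inner{x_n, M_a^\dagger b_a}$. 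So it suffices to work entirely in the rotated coordinates $z_n := U_a^T x_n \in \mathbb{R}^{d-L}$.

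The key observation is that, conditioning on the good event (in fact unconditionally, since the side information $M_a w^*_a = b_a$ holds exactly), the corrected target has the form of a genuine linear model in the rotated space: using Property~\ref{property: linear side information}, $w^*_a = P_a w^*_a + M_a^\dagger b_a = U_a (U_a^T w^*_a) + M_a^\dagger b_a$, so
\begin{align*}
y_{a,n} = r_n - \inner{x_n, M_a^\dagger b_a} = \inner{x_n, w^*_a} + \eta_n - \inner{x_n, M_a^\dagger b_a} = \inner{U_a^T x_n, U_a^T w^*_a} + \eta_n = \inner{z_n, \theta^*_a} + \eta_n,
\end{align*}
where $\theta^*_a := U_a^T w^*_a \in \mathbb{R}^{d-L}$ is exactly the quantity we wish to cover. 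Thus the rotated P-RR estimator is precisely the ridge estimator for the $(d-L)$-dimensional linear bandit with feature vectors $z_n$, regularization $\lambda$, noise $\eta_n$, and regression design matrix $\tilde V_{t,a}$. We also note $\norm{z_n}_2 = \norm{U_a^T x_n}_2 = \norm{P_a x_n}_2 \le \xBound$ (since $U_a$ has orthonormal columns and $P_a = U_a U_a^T$) and $\norm{\theta^*_a}_2 = \norm{U_a^T w^*_a}_2 = \norm{P_a w^*_a}_2 \le \wBound$.

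From here I would apply Theorem~2 of \citet{abbasi2011improved} (the self-normalized confidence ellipsoid for ridge regression) in dimension $d-L$: with probability at least $1-\delta'$, for all $t \ge 0$,
\begin{align*}
\norm{U_a^T \hat{w}_{t+1,a} - \theta^*_a}_{\tilde V_{t,a}} \le \sigma\sqrt{(d-L)\log{\frac{1+t\xBound^2/\lambda}{\delta'}}} + \lambda^{1/2}\wBound,
\end{align*}
where I have used the standard bound $\det{\tilde V_{t,a}}^{1/2}\det{\lambda I_{d-L}}^{-1/2} \le (1 + t\xBound^2/(\lambda(d-L)))^{(d-L)/2} \le (1+t\xBound^2/\lambda)^{(d-L)/2}$ inside the log, together with the subgaussian noise assumption. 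Finally, I take a union bound over the $K$ actions by setting $\delta' = \delta/K$, which yields exactly the stated $\sqrt{\beta_t(\delta)} = \sigma\sqrt{(d-L)\log{\frac{K(1+t\xBound^2/\lambda)}{\delta}}} + \lambda^{1/2}\wBound$ and shows $U_a^T w^*_a \in C_{t,a}$ for all $t$ and all $a$ simultaneously with probability at least $1-\delta$.

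The main obstacle — though it is conceptual rather than technical — is verifying that the correction term $\inner{x_n, M_a^\dagger b_a}$ exactly cancels the orthogonal-complement component of $\inner{x_n, w^*_a}$ so that the residual is a clean linear model in $z_n = U_a^T x_n$ with the \emph{same} subgaussian noise $\eta_n$ and no bias; this is where the exact knowledge of $M_a, b_a$ is essential (the approximate case is handled separately in Appendix~\ref{supp: approximate oful}). One should also be slightly careful that the filtration adapted to $(z_n, \eta_n)$ still satisfies the predictability requirements of the self-normalized bound — but this is inherited directly from the original filtration $\brk[c]*{F_t}$ since $z_n = U_a^T x_n$ is $F_{n-1}$-measurable and $U_a^T M_a^\dagger b_a$ is deterministic.
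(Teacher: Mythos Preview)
Your proposal is correct and follows essentially the same approach as the paper: reduce the P-RR estimator to a $(d-L)$-dimensional ridge regression via Proposition~\ref{prop: equivalent forms of PRR}, verify that the corrected targets $y_{a,n}$ obey the clean linear model $\inner{U_a^T x_n, U_a^T w^*_a}+\eta_n$ (the paper encodes this via the indicator-weighted features $\tilde x_{i,a}=\indicator{a_i=a}U_a^T x_i$ and noise $\tilde\eta_i=\indicator{a_i=a}\eta_i$, which is an equivalent bookkeeping), invoke Theorem~\ref{theorem: abassi confidence interval} in dimension $d-L$, and union-bound over $a\in\A$. Your explicit derivation of the cancellation via Property~\ref{property: linear side information} and the measurability check match the paper's argument exactly.
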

\begin{proof}
Fix $a\in \A$. Since $\hat{w}_{t+1,a}$ is the smallest norm solution of the PRR, by proposition~\ref{prop: equivalent forms of PRR}, $U^T_{a} \hat{w}_{t+1,a}$ has the following form
\begin{align}
    U^T_{a} \hat{w}_{t+1,a} = (\tilde X_{t,a}^T\tilde X_{t,a} + \lambda I_{d-L})^{-1}\tilde X_{t,a}^T Y_{t,a}. \label{eq: optimism abassi our case}
\end{align}
$\tilde{X}_{t,a}$ is a matrix with $\brc*{\tilde x_{i,a}}_{i=1}^{t}$ in its rows where $\tilde{x}_{i,a} = \indicator{a_i=a}(U_a^T x_n)^T$, and $Y_{t,a}= (y_{1,a},..,y_{t,a})$  such that $$y_{i,a} = \indicator{a_i = a} \br*{\inner{ U_a^T x_i, U_a^T w^*_a} +\eta_i} =  \inner{ \indicator{a_i = a} U_a^T x_i, U_a^T w^*_a} +\indicator{a_i = a} \eta_i.$$ 

We now recast \eqref{eq: optimism abassi our case} such that we are able to apply Theorem~\ref{theorem: abassi confidence interval} in a smaller dimension $d-L$.  Since $a_i,x_i,U_a$ are $\F_{i-1}$ measurable it holds that $\tilde x_{i,a}$ is $\F_{i-1}$ measurable. Define $\tilde{\eta}_i = \indicator{a_i = a} \eta_i$. Since $\eta_i\in \F_{n}$ it holds that $\tilde \eta_i$ is $\F_{i}$ measurable. Furthermore, $\tilde \eta_i$ is $\sigma$ sub Gaussian since $\eta_i$ is $\sigma$ sub Gaussian. Define $\tilde{Y}_{t,a}= (\tilde{y}_{1,a},..,\tilde{y}_{t,a})$  where $\tilde{y}_{i,a}= \inner{\tilde x_{i},U_a^T w^*_a} + \tilde{\eta}_i$. Thus,~\eqref{eq: optimism abassi our case} can be written as
\begin{align*}
    U^T_{a} \hat{w}_{t+1,a} = (\tilde X_{t,a}^T\tilde X_{t,a} + \lambda I_{d-L})^{-1}\tilde X_{t,a}^T \tilde Y_{t,a}.
\end{align*}

We now employ Theorem~\ref{theorem: abassi confidence interval} in dimension $d-L$ since $\tilde x_i, U_a^T w^*_a\in \mathbb{R}^{d-L}$. Furthermore, $\norm{U_a^T x_i} = \norm{Px_i}\leq \xBound $ and similarly $\norm{Pw^*_a}\leq \wBound$. Taking a union bound on $a\in \A$ concludes the claim. 
\end{proof}
\begin{corollary}[Update Rule Optimism] \label{corollary: update rule optimism}
Assume $U_a w^*_a\in C_{t,a}$ for any $t\geq 0$ and for all $a\in \A$. Then,
\begin{align*}
    \inner{x_t, P_a \tilde w_{a,t}} +  \inner{x_t, M^\dagger_a b_a} \geq  \inner{x_t, w^*_a}.
\end{align*}
Thus, $\inner{x_t, w^*_{a^*(x_t)}}\leq \inner{x_t, P_{a_t} \tilde w_{a_t,t}} +  \inner{x_t, M^\dagger_{a_t} b_{a_t}}$.
\end{corollary}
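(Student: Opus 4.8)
The plan is to derive both claims directly from the optimality of $\tilde w_{a,t}$ in the optimistic program of Line~7 of Algorithm~\ref{algo: supp projected oful equivalent form}, combined with Property~\ref{property: linear side information}. First I would exploit the linear side information $M_a w^*_a = b_a$: by Property~\ref{property: linear side information} we have $w^*_a = P_a w^*_a + M^\dagger_a b_a$, hence $\inner{x_t, w^*_a} = \inner{x_t, P_a w^*_a} + \inner{x_t, M^\dagger_a b_a}$. Therefore the target inequality $\inner{x_t, P_a \tilde w_{a,t}} + \inner{x_t, M^\dagger_a b_a} \geq \inner{x_t, w^*_a}$ is equivalent to the purely projected statement $\inner{x_t, P_a \tilde w_{a,t}} \geq \inner{x_t, P_a w^*_a}$.

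The second step is feasibility plus optimality. Membership of a vector $w$ in $C_{t,a}$ depends on $w$ only through $U_a^T w$, so the hypothesis $U_a^T w^*_a \in C_{t,a}$ says exactly that $w^*_a$ is a feasible point of the inner maximization $\max_{w \in C_{t,a}} \{ \inner{x_t, P_a w} + \inner{x_t, M^\dagger_a b_a} \}$ (the term $\inner{x_t, M^\dagger_a b_a}$ being independent of $w$, this is the same program that defines $\tilde w_{a,t}$). Since $\tilde w_{a,t}$ is a maximizer of this program, $\inner{x_t, P_a \tilde w_{a,t}} + \inner{x_t, M^\dagger_a b_a} \geq \inner{x_t, P_a w^*_a} + \inner{x_t, M^\dagger_a b_a}$, which together with the first step gives the first claim.

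Finally, for the ``thus'' part I would instantiate the first claim at $a = a^*(x_t)$ to obtain $\inner{x_t, w^*_{a^*(x_t)}} \leq \inner{x_t, P_{a^*(x_t)} \tilde w_{a^*(x_t),t}} + \inner{x_t, M^\dagger_{a^*(x_t)} b_{a^*(x_t)}}$, and then use that $a_t$ is the outer $\arg\max$ over $a \in \A$ of $a \mapsto \inner{x_t, P_a \tilde w_{a,t}} + \inner{x_t, M^\dagger_a b_a}$ (Line~7 of Algorithm~\ref{algo: supp projected oful equivalent form}), so the right-hand side is at most $\inner{x_t, P_{a_t} \tilde w_{a_t,t}} + \inner{x_t, M^\dagger_{a_t} b_{a_t}}$; chaining the two inequalities closes the proof. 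The only point needing care is the bookkeeping: matching the per-arm inner maximizers $\tilde w_{a,t}$ with the joint $(a,w)$ maximization as written in the algorithm, and keeping in mind that the constraint set $C_{t,a}$ restricts $w$ only through $U_a^T w$. There is no genuine analytic obstacle — this is the standard ``optimism yields an upper bound on the optimal value'' argument, here specialized to the projected subspace.
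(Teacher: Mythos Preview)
Your proposal is correct and follows essentially the same argument as the paper: decompose $w^*_a$ via Property~\ref{property: linear side information}, observe that the hypothesis makes $w^*_a$ feasible for the inner maximization defining $\tilde w_{a,t}$, and then for the second claim instantiate at $a^*(x_t)$ and use that $a_t$ is the outer $\arg\max$ over actions. The only cosmetic difference is that the paper carries the computation through the $U_a$-rotated form $\inner{U_a^T x_t, U_a^T w}$ before returning to $\inner{x_t, P_a w}$, whereas you work directly with the projected inner product; this is immaterial.
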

\begin{proof}
For all $a\in \A$, by the update rule definition
\begin{align*}
    &\inner{x_t, P_a \tilde w_{a,t}} +  \inner{x_t, M^\dagger_a b_a} \\
    &= \max_{U_a^Tw\in C_{t-1,a}}\inner{x_t, P_a w} +  \inner{x_t, M^\dagger_a b_a}\\
    &= \max_{U_a^Tw\in C_{t-1,a}} \inner{U_a^Tx_t, U_a^T w} +  \inner{x_t, M^\dagger_a b_a}\\
    &\geq \inner{U_a^Tx_t, U_a^T w^*_a} +  \inner{x_t, M^\dagger_a b_a} \tag{Lemma~\ref{lemma: projected subspace optimism}}\\
    &=\inner{x_t, P_a w^*_a} +  \inner{x_t, M^\dagger_a b_a} =\inner{x_t,  w^*_a}. \tag{$P_a=U_aU_a^T$ \& Property~\ref{property: linear side information}}
\end{align*}

Since the latter holds for all $a\in \A$ it also holds for the maximizer, i.e.,
\begin{align*}
    &\inner{x_t, w_{a^*(x_t)}} \leq \inner{x_t, P_a \tilde w_{a^*(x_t),t}} +  \inner{x_t, M^\dagger_{a^*(x_t)} b_{a^*(x_t)}} \\
    &\leq \max_{a\in \A}\inner{x_t, P_a \tilde w_{a,t}} +  \inner{x_t, M^\dagger_a b_a}=  \inner{x_t, P_{a_t} \tilde w_{a_t,t}} +  \inner{x_t, M^\dagger_{a_t} b_{a_t}}. 
\end{align*}
\end{proof}

\newpage
\section{OFUL with Learned Subspace} 
\label{supp: approximate oful}

\begin{algorithm}[H]
\caption{OFUL with Partially Observable Offline Data (Equivalent Form)}
\label{algo: approximate projected oful equivalent form}
\begin{algorithmic}[1]
\STATE {\bf input:} $\alpha > 0, M_a \in \R^{L\times d}, b_a \in \R^L,\delta>0, T$
\STATE {\bf init:} $V_a = \lambda I, Y_a = 0, \forall a \in \A$
\FOR{$n = 1, \hdots, \lg{T}-1$}
    \STATE $\forall a\in \A$, estimate $\pert{M}_{n,a}$ and calculate $\pert{M}_{n,a},\pert{P}_{n,a}$
     \STATE $\forall a\in \A$ $V_a = \lambda I_d$ and $Y_a=0$
    \FOR{$t = 0, \hdots, 2^{n}-1$}
        \STATE  Receive context $x_t$
        \STATE  ${\hat{w}_{t,a} = \pth{\pert{P}_{n,a} V_a \pert{P}_{n,a}}^\dagger
        \pth{Y_a - \pth{V_a - \lambda I} \pert{M}_{n,a}^\dagger b_a}}$
        \STATE ${\sqrt{\beta_t(\delta)} = \br*{\sigma+\xBound\wBound(C_{B1}(\delta) + C_{B2}(\delta)\bar{t}_{n}^{-1/2})} \sqrt{(d-L) \log{\frac{2\log{T}K(1+t\xBound^2/\lambda)}{\delta}}} + \lambda^{1/2}\wBound}$
        \STATE $a_t,\tilde{w}_{t,a_t}\in \arg\max_{a\in \A,  w_a \in C_{t,a}} \br*{\inner{x_t,\pert{P}_{n,a}w_a} + \inner{x_t,\pert{M}^\dagger_{n,a} b_a}}$
        \STATE Play action $a_t$ and receive $r_t$
      \STATE $V_{a_t} = V_{a_t} + x_tx_t^T, Y_{a_t} = Y_{a_t} + x_t r_t$
    \ENDFOR
\ENDFOR
\end{algorithmic}
\end{algorithm}

Before supplying the proof we define some useful notations. 
We denote $\pert{M}_{t,a}$ as the estimated matrix  $M_a$ at time step $t$ (see~\eqref{eq: Ma estimator}), $\pert{P}_{t,a},\pert{M}^\dagger_{t,a}$ as the orthogonal projection on the kernel of $\pert{M}_{t,a}$ and the pseudo-inverse of $\pert{M}_{t,a}$, respectively. We also denote $\pert{P}_{t,a} = \pert{U}_{t,a} \pert{U}_{t,a}^T$, where $\pert{U}_{t,a}\in \mathbb{R}^{d-L\times d}$ as a matrix orthonormal vectors in its rows that span the  kernel of $\pert{M}_{t,a}$.
Let $\pert{b}_{t,a} =\pert{M}_{t,a}w_a^*$ be the result of the linear transformation of the true $w_a^*$ by the estimated $\pert{M}_{t,a}$. Although this quantity is unknown it will be very useful in our analysis. Furthermore, by Property~\ref{property: linear side information} it holds that $w_a^* = \pert{P}_{t,a}w_a^* + \pert{M}_{t,a}^\dagger \pert{b}_{t,a}$. Thus, for any $x$ it holds that
\begin{align}
    \inner{x,w_a^*} = \inner{x,\pert{P}_{t,a}} + \inner{x,\pert{M}^\dagger_{t,a}\pert{b}_{t,a}}. \label{eq: thm 3 relation 1}
\end{align}

The proof follows the same line of analysis as in the exact case, i.e., of Theorem~\ref{thm: regret bound oful}. Unlike in Theorem~\ref{thm: regret bound oful}, we do not have access to the true matrices $M^\dagger_a,P_a$, but to increasingly more accurate estimates of these matrices.

To deal with this more challenging situation we use the doubling trick. The algorithm acts in exponentially increasing episodes. In each such episode, we fix the estimation of $M$, i.e., we use the estimate of $M$ available in the beginning of the episode. 

The analysis of this algorithm amounts to study the performance of the exact algorithm (as in Theorem~\ref{thm: regret bound oful}) up to a fixed, approximated, $M_a$, which induces errors in the used $M^\dagger_a,P_a$. Finally, summing the regret on each episode, we obtain the final result.

The proof heavily relies on the convergence properties of $P_a, M^\dagger$. These are shown to converge at a rate of $O(T^{-1/2})$ in Appendix~\ref{supp: convergence of M dagger and P}. These convergence rates are due to the special structure of $M_a$, as provided by Proposition~\ref{proposition: omega LS to w} and proven in Appendix~\ref{supp: proof of prop 1}.

\subsection{Notations and the Good Event of Theorem~2}

Before supplying the proof, we define some notations. We define the good event $\G$ and prove it holds with high probability. We denote $n\in \brc*{0,..,\log{T}-1}$ as the episode index, $\bar{t}_n=2^n$ as the number of rounds performed at the beginning at the start of the $n^{th}$ episode, and $$t\in \brc*{0,..,2^n-1}=\brc*{0,..,\bar{t}_n-1 } =\brc*{0,..,\bar{t}_{n+1} - \bar{t}_{n}},$$
as the number of rounds at the $n^{th}$ episode.

Let 
\begin{align}
    \Delta M_n(\delta) = \frac{C_{B1}(\delta)}{\sqrt{\bar{t}_n}} +\frac{C_{B2}(\delta)}{\bar{t}_n} =\frac{C_{B1}(\delta)}{\sqrt{2^n}} +\frac{C_{B2}(\delta)}{2^n}, \label{definition: def Delta M thm 2}
\end{align}
characterize the convergence of the estimation of $M_a$ (see Corollary~\ref{corrollary: M estimation}), and 
\begin{align*}
    C_{t,a,n}(\delta') = \brc*{w\in \mathbb{R}^{d}: \norm{\pert{U}^T_{a,n} \hat{w}_{t+1,a} - \pert{U}^T_{a,n}w }_{\tilde V_{t,a}(\pert{U}_{a,n})} \leq \sqrt{\beta_{t,n}(\delta')}},
\end{align*}
where $\sqrt{\beta_{t,n}(\delta')}$ is defined in Lemma~\ref{lemma: optimism with subspace error} and given by
\begin{align*}
\sqrt{\beta_{t,n}(\delta')} 
= 
\br*{\sigma+\xBound\wBound  C_n(\delta) } \sqrt{(d-L) \log{\frac{K(1+t\xBound^2/\lambda)}{\delta'}}} + \lambda^{1/2}\wBound
\end{align*}
where we used $\Delta M_n(\delta)\sqrt{t}\leq\Delta M_n(\delta)\sqrt{\bar{t}_n} \equiv C_n(\delta)$, since $t\leq \bar{t}_n$ and by the definition of $\Delta M_n$.

Furthermore, denote $C_{t,a,n} \equiv C_{t,a,n}\br*{\frac{\delta}{2\log{T}}},\Delta M_n \equiv \Delta M_n\br*{\frac{\delta}{2\log{T}}}$, and define the following failure events.
\begin{align*}
    &F^{CI}_n = \brc*{ \exists t\in \brc*{0,..,\bar{t}_{n+1} - \bar{t}_{n}}, a\in \A: \pert{U}^T_{n,a} w^*_a \notin  C_{t,a,n}},\\
    &F^{M}_n = \brc*{ \exists n\in[\log{T}], a\in \A: \norm{M_a -\pert{M}_{a,n}}\geq \Delta M_n },
\end{align*}
where $\pert{M}_{a,n}$ is the estimated $M$ matrix at the beginning of the $n^{th}$ episode based on $\bar{t}_n$ samples gathered so far. Recall that $\hat P_{t,a} = \hat U_{n,a} \hat U_{n,a}^T$. The first event has to do with the rotated weights lying in the confidence ellipsoid $C_{t,a,n}$. The second event ensures the error in approximation of $M_a$ is small enough, i.e., converges at a rate given by Equation~\eqref{definition: def Delta M thm 2}.

\begin{lemma}[Good Event Holds with High Probability]\label{lemma: good event approximated oful}
Let the bad event be $\cup_{n=1}^{\log{T}}F^{CI}_n \cup_{n=1}^{\log{T}}F^{M}_n$ and the good event, $\G$, its complement. Then, $\Pr\br*{\G}\geq 1-\delta$.
\end{lemma}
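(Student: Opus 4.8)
The plan is to prove $\Pr[\G]\ge 1-\delta$ by a union bound over the two families of failure events, allotting $\delta/2$ of the confidence budget to the subspace-estimation failures $F_n^M$ and $\delta/2$ to the confidence-interval failures $F_n^{CI}$, and then splitting each half uniformly over the at most $\log{T}$ episodes and the $K$ actions.

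First I would bound $\Pr[\cup_n F_n^M]$. Since $\hat{M}_{t,a}=[\,I_L,\ R_{11}^{-1}(a)\hat{R}_{12}(a,t)\,]$ by Equation~\eqref{eq: Ma estimator}, we have $M_a-\hat{M}_{t,a}=[\,0,\ R_{11}^{-1}(a)(R_{12}(a)-\hat{R}_{12}(a,t))\,]$, hence $\norm{M_a-\hat{M}_{t,a}}_2\le \norm{R_{11}^{-1}(a)}_2\,\norm{R_{12}(a)-\hat{R}_{12}(a,t)}_2$. The estimator $\hat{R}_{12}(a,t)$ is an empirical average of $t$ i.i.d.\ bounded random matrices $\tfrac{1}{P^{\pi_b}(a)}\indicator{a_i=a}\xk_i(\xc_i)^T$ with mean $R_{12}(a)$, so a quantitative matrix law of large numbers --- the content of Corollary~\ref{corrollary: M estimation} --- gives, with probability at least $1-\delta/(2K\log{T})$, that $\norm{M_a-\hat{M}_{2^n,a}}_2\le \Delta M_n(\delta/(2\log{T}))$ with $\Delta M_n$ as in Equation~\eqref{definition: def Delta M thm 2}; here the constants $C_{B1},C_{B2}$ are exactly those built from $\lambda_{\min}(R_{11}(a))^{-1}/P^{\pi_b}(a)$, $\xBound$, and the traces of $R_{11}(a)$ and $R_{22}(a)$. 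A union bound over $a\in\A$ and $n\in[\log{T}]$ gives $\Pr[\cup_n F_n^M]\le \delta/2$. On the complement of this event, Lemma~\ref{lemma: M estimation error propogation} gives $\norm{P_a-\hat{P}_{n,a}}_2\le 2\Delta M_n$ and $\norm{M_a^\dagger-\hat{M}_{n,a}^\dagger}_2\le 2\Delta M_n$ in every episode --- this is the step where the special structure $M_a=[\,I_L,\ R_{11}^{-1}R_{12}\,]$ is essential, since it forces $\hat{M}_{n,a}$ to have rank exactly $L$ and hence lets Theorem~\ref{theorem: P under perturbation} apply.

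Next I would bound $\Pr[\cup_n F_n^{CI}]$. Fix an episode $n$ and condition on $\neg F_n^M$, so that $\hat{U}_{n,a}$ is a fixed matrix given the $\pi_b$-samples gathered before the episode. Within the episode, Proposition~\ref{prop: equivalent forms of PRR} shows that the rotated P-RR estimator $\hat{U}_{n,a}^T\hat{w}_{t,a}$ is exactly a ridge-regression estimator in dimension $d-L$, built from the regressors $\indicator{a_i=a}\hat{U}_{n,a}^T x_i$ and the targets $y_{i,a}$. The only new ingredient relative to Lemma~\ref{lemma: projected subspace optimism} is model misspecification: by Equation~\eqref{eq: thm 3 relation 1} with $\hat{b}_{n,a}=\hat{M}_{n,a}w_a^*$, the residual $y_{i,a}-\inner{\hat{U}_{n,a}^T x_i,\hat{U}_{n,a}^T w_a^*}$ equals $\eta_i$ plus a deterministic bias of magnitude $\mathcal{O}(\xBound\wBound\,\Delta M_n)$, controlled by the pseudoinverse perturbation bound from the previous step. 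Treating this bias as additional \emph{noise}, the self-normalized confidence bound of Theorem~\ref{theorem: abassi confidence interval} applied in dimension $d-L$ --- which is precisely what Lemma~\ref{lemma: optimism with subspace error} packages --- yields $\hat{U}_{n,a}^T w_a^*\in C_{t,a,n}$ simultaneously for all $t$ in the episode with probability at least $1-\delta/(2K\log{T})$, the inflated radius $\sqrt{\beta_{t,n}}$ (whose sub-Gaussian parameter is $\sigma+\xBound\wBound C_n(\delta)$) absorbing the bias. A union bound over $a$ and $n$ gives $\Pr[\cup_n F_n^{CI}]\le\delta/2$, and combining the two halves gives $\Pr[\G^c]\le\delta$, i.e.\ $\Pr[\G]\ge 1-\delta$.

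The step I expect to be the main obstacle is the $F_n^{CI}$ bound --- specifically, verifying that conditioning on $\neg F_n^M$ does not break the martingale structure required by the self-normalized inequality. This goes through because $\hat{U}_{n,a}$ is frozen at the start of each episode and is measurable with respect to a $\sigma$-algebra independent of the within-episode noise $\brc{\eta_i}$, so in the rotated coordinates the filtration demanded by Theorem~\ref{theorem: abassi confidence interval} is intact, while the misspecification bias is deterministic given that $\sigma$-algebra and can therefore be carried inside $\beta_{t,n}$ rather than treated as part of the noise process. A secondary bookkeeping point is to reconcile the two $\delta/2$ splits --- further divided over the $\log{T}$ episodes and the $K$ actions --- with the precise constants defining $\Delta M_n$ and $\beta_{t,n}$ in the statement of the lemma.
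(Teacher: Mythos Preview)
Your proposal is correct and follows essentially the same route as the paper: split $\delta$ in half between the two failure families, control $\cup_n F_n^M$ via Corollary~\ref{corrollary: M estimation} plus a union bound over the $\log{T}$ episodes, and control $\cup_n F_n^{CI}$ via Lemma~\ref{lemma: optimism with subspace error} plus a union bound over episodes. On your anticipated obstacle: the paper does not need to condition on $\neg F_n^M$ to preserve the martingale structure --- $\hat{U}_{n,a}$ is $F_{\bar t_n}$-measurable simply because it is frozen at episode start, irrespective of whether $\norm{\hat M_{n,a}-M_a}\le\Delta M_n$ actually holds, so Theorem~\ref{theorem: abassi self normalized bound} applies unconditionally in the rotated coordinates; the event $\neg F_n^M$ is needed only to validate the deterministic misspecification term inside $\beta_{t,n}$, which is precisely why the two failure families can be union-bounded separately.
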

\begin{proof}
    We prove $\Pr\br*{\cup_{n=1}^{\log{T}}F^{CI}_n}\leq \delta$ and $\Pr\br*{\cup_{n=1}^{\log{T}}F^{M}_n}\leq \delta$. Then, applying the union bound and re-scaling $\delta\gets \delta/2$ we conclude the proof.
    
    \emph{The failure event $\cup_{n=1}^{\log{T}}F^{CI}_n$.} Fix episode $n\in [\log{T}]$. Define the in-episode filtration $\brc*{F^n_{t}}_{t=0}^{\bar{t_n}}$, where $F^n_{t} = F_{\bar{t}_n+ t}$ where $F_{\bar{t}_n+ t}$ is the natural filtration (see Section~\ref{sec: preliminaries}). Observe that $\pert{M}_{n}$ is measurable w.r.t. to all $\brc*{F^n_{t}}_{t=0}^{\bar{t_n}}$ since it is determined at the beginning of the $n^{th}$ episode. Thus, we can apply Lemma~\ref{lemma: optimism with subspace error} which holds uniformly for all $t\in \brc{0,..,\bar{t}_{n}}$, and get
    \begin{align*}
        \Pr\br*{F_n^{CI}}\leq \delta'.
    \end{align*}
    By taking a union bound on all $n\in [\log{T}]$ and setting $\delta'=\frac{\delta}{\log{T}}$ we get that ${\Pr\br*{\cup_{n=1}^{\log{T}}F^{CI}_n}\leq \delta.}$
    
    \emph{The failure event $\cup_{n=1}^{\log{T}}F^{M}_n$.}  By Corollary~\ref{corrollary: M estimation}, for any fixed $\bar{t}_n\in [\log{T}]$ the event holds with high probability. Applying the union bound and re-scaling $\delta\gets \frac{\delta}{\log{T}}$ establishes that ${\Pr\br*{\cup_{n=1}^{\log{T}}F^{M}_n}\leq \delta}$. 
\end{proof}

\begin{remark}[The failure event $\cup_{n=1}^{\log{T}}F^{M}_n$]
The probability for this failure event can be bounded using a stopping time argument, without resorting to applying a union bound. However, for brevity, and since it does not improve the final result by much (due to the union bound applied in the failure event $\cup_{n=1}^{\log{T}}F^{CI}_n$) we use simpler union bound arguments to prove this failure event does not occur with high probability. 

\end{remark}

\subsection{Proof of Theorem~2}

\begin{proof}
We start by conditioning on the good event $\mathcal{G}$. By Lemma~\ref{lemma: good event approximated oful} it occurs with probability greater than $1-\delta$.

Observe that the cumulative regret at round $T$ (assuming $\log{T}\in \mathbb{N}$ ) is also given by the sum of episodic regret,
\begin{align}
    \Regret{T} =\sum_{n=0}^{\lg{T}-1}\RegretN{n}, \label{eq: regret to episodic regret}
\end{align}
where the episodic regret is given by 
\begin{align*}
    \RegretN{n} = \sum_{t=\bar{t}_n}^{\bar{t}_{n+1}-1} r_t\quad\mathrm{where}\quad \bar{t}_n=2^n.
\end{align*}

We now bound the episodic regret for any $n\in[\log{T}]$. Let $t\in \brc*{0,..,\bar{t}_n -1}$ be a time index of the $n^{th}$ episode. 

\begin{align}
r_t= &\inner{x_t,w_{a^*(x_t)}}
 - \inner{x_t,w_{a_t}} \nonumber\\
 &\leq\inner{\pert{P}_{n,a_t}x_t, \tilde w_{a_t}} +\inner{x_t,\pert{M}^\dagger_{n,a_t} b_{a_t}} + 2\xBound\wBound\Delta M_n  - \inner{x_t,w_{a_t}} \tag{Lemma~\ref{lemma: approximate optimism}} \nonumber\\
 &=\inner{\pert{P}_{n,a_t}x_t, \tilde w_{a_t}}  - \inner{\pert{P}_{n,a_t}x_t,w_{a_t}}  +2\xBound\wBound\Delta M_n + \inner{x_t,\pert{M}^\dagger_{n,a_t} \Delta b_{n,a_t}}\tag{By \eqref{eq: thm 3 relation 1}}\nonumber.
\end{align}

Conditioning on the good event, the last term is bounded by
\begin{align*}
    \inner{x_t,\pert{M}^\dagger_{n,a_t} \Delta b_{a_t}}\leq \norm{x_t}\norm{\pert{M}^\dagger_{n,a_t}}\norm{b_{a_t}-\pert{b}_{t,a_t}} \leq \xBound\wBound\Delta M_n,
\end{align*}
since $\norm{x}\leq S$, $\norm{\pert{M}^\dagger_{n,a_t}}\leq 1$ (By Lemma~\ref{lemma: M important properties}) and $\norm{b_{a_t}-\pert{b}_{n,a_t}}\leq \wBound \Delta M_n$ (By Lemma~\ref{lemma: M estimation error propogation}, error $\Delta M_n$ in the estimation of $M$).

Plugging this bound and setting $\pert{P}_{n,a} = \pert{U}_{n,a} \pert{U}_{n,a}^T$ we get
\begin{align*}
r_t &\leq \inner{\pert{U}_{n,a_t}^Tx_t,  \br*{ \pert{U}_{n,a_t}^T\tilde w_{a_t}-\pert{U}_{n,a_t}^Tw_{a_t}}} +3\xBound\wBound\Delta M_n \nonumber\\
 &\leq \norm{\pert{U}_{n,a_t}^Tx_t}_{\tilde V_{t-1,a_t}(\pert{U}_n)^{-1}}\norm{ \pert{U}_{n,a_t}^T\hat w_{a_t}-\pert{U}_{n,a_t}^Tw_{a_t}}_{\tilde V_{t-1,a_t}(\pert{U}_n)}  \nonumber \\
 &\quad + \norm{\pert{U}_{n,a_t}^Tx_t}_{\tilde V_{t-1,a_t}(\pert{U}_n)^{-1}}\norm{ \pert{U}_{n,a_t}^T\hat w_{a_t}-\pert{U}_{n,a_t}^T\tilde{w}_{a_t}}_{\tilde V_{t-1,a_t}(\pert{U}_n)} + 3\xBound\wBound\Delta M_n  \tag{C.S. Inequality}\nonumber \\
 &\leq 2\sqrt{\beta_{t-1}(\delta)}\norm{\pert{U}_{n,a_t}^Tx_t}_{\tilde V_{t-1,a_t}(\pert{U}_n)^{-1}} +3\xBound\wBound\Delta M_n, \tag{Lemma~\ref{lemma: optimism with subspace error}}
\end{align*}
where we define the PD matrix
\begin{align*}
    &\tilde V_{t-1,a}(\pert{U}_n) = \lambda I_{d-L}+ \sum_{i=1}^{t-1} \indicator{a_i=a} (\pert{U}_{n,a}^T x_{i})(\pert{U}_{n,a}^T x_{i})^T\\
\end{align*}

Using the fact $r_t\leq 2$ since $\inner{x_t,w_a}\in[-1,1]$ for any $a\in \A$, $\min(a+b,1)\leq \min(a,1)+\min(b,1)$ and by the above we get
\begin{align}
    r_t \leq  2\sqrt{\beta_{t-1}(\delta)}\min\br*{\norm{U_{a_t}^Tx_t}_{ \tilde V_{t-1,a_t}^{-1}},1} 
    +3\xBound\wBound\Delta M_n. \label{eq: oful approximate main rel 2}
\end{align}

 Combining the above and using Cauchy-Schwartz inequality (as in the proof of Theorem~\ref{thm: regret bound oful}) we get 
 \begin{align}
     \RegretN{n}\leq &2\sqrt{\bar{t}_n  \sum_{i=0}^{\bar{t}_n -1}\beta_i(\delta) \min\br*{\norm{\pert{U}_{n,a_i}^Tx_i}_{\tilde V_{i-1,a_i}(\pert{U}_n)^{-1}}^2,1}} + 3\xBound\wBound \bar{t}_n \Delta M_n \nonumber\\
     &\leq 2\sqrt{\bar{t}_n  \beta_{\bar{t}_{n+1}}(\delta) \sum_{i=0}^{\bar{t}_n -1} \min\br*{\norm{\pert{U}_{n,a_i}^Tx_i}_{\tilde V_{i-1,a_i}(\pert{U}_n)^{-1}}^2,1}} + 3\xBound\wBound \bar{t}_n \Delta M_n, \label{eq: oful approximate main rel 3}
 \end{align}
  where the last relation holds since $\beta_t(\delta)$ is increasing with $t$ (see its definition~\eqref{definition: beta approximation}).  We now bound the first term of~\eqref{eq: oful approximate main rel 3} with similar technique as in the proof of Theorem~\ref{thm: regret bound oful}. 
  
  Define $\tilde x^{(n)}_{i,a} = \indicator{a_i=a} \pert{U}_{n,a}x_i$ and $V^{(n)}_{i,a} = \lambda I_{d-L} +\sum_{j=1}^i \tilde x^{(n)}_{i,a}\br*{\tilde x^{(n)}_{i,a}}^T$. Importantly, since $\hat{U}_n$ is fixed for the entire $n^{th}$ episode it holds that
  \begin{align}
      V^{(n)}_{i,a} = V^{(n)}_{i-1,a} + \tilde x^{(n)}_{i,a}\br*{\tilde x^{(n)}_{i,a}}^T. \label{eq: importance of doubling trick}
  \end{align}

  Furthermore, denote by $I_{k}(n,a)$ the $k^\text{th}$ time action $a$ was chosen at the $n^{th}$ episode,
    \begin{equation*}
        I_k(a,n)
        =
        \min\brk[c]*{t\in \brc*{0,..,\bar{t}_n-1} : \sum_{j=0}^{t} \indicator{a_j = a} = k},
    \end{equation*}
and denote by $N_{\bar{t}_n-1}(a)$ the total number of times action $a$ was chosen by the end on the $n^{th}$ episode. By these definitions the following relations hold.
\begin{align*}
    &\sum_{i=0}^{\bar{t}_n -1} \min\br*{\norm{\pert{U}_{n,a_i}^Tx_i}_{\tilde V_{i-1,a_i}(\pert{U}_n)^{-1}}^2,1} \\
    &= \sum_{a\in \A}\sum_{i=0}^{N_{\bar{t}_n-1}(a)} \min\br*{\norm{\tilde{x}^{(n)}_{I_i(a,n),a}}_{ \br*{V^{(n)}_{i-1,a}}^{-1}}^2,1}\\
    &\leq 2(d-L)\sum_{a\in \A}\log{\lambda+ \frac{N_{\bar{t}_n-1}(a)\xBound^2}{d-L}}  \tag{Lemma~\ref{lemma: eliptical potential lemma abbasi}}\\
    &\leq 2(d-L)K\log{\lambda+ \frac{(\bar{t}_n-1)\xBound^2}{d-L}} \tag{Jensen's Ineq. \& $\sum_{a}N_{\bar{t}_n-1}(a) = \bar{t}_n-1$}.
\end{align*}

Plugging this back into~\eqref{eq: oful approximate main rel 3}, denote $\delta_T = \delta/(2\log{T})$, we bound $\RegretN{n}$ as follows.
\begin{align*}
    &\RegretN{n}\leq \eqref{eq: oful approximate main rel 3} \leq 2\sqrt{\bar{t}_n  \beta_{\bar{t}_{n+1}}(\delta_T)(d-L)K\log{\lambda+ \frac{\bar{t}_n\xBound^2}{d-L}}} + 3\xBound\wBound \bar{t}_n \Delta M_n\\
    &\leq 2\br*{\br*{\sigma+ \xBound\wBound\Delta M_n\sqrt{\bar{t}_n}}\sqrt{(d-L) \log{\frac{K(1+\bar{t}_n\xBound^2/\lambda)}{\delta_T}}} + \lambda^{1/2}\wBound}\sqrt{\bar{t}_n (d-L)K  \log{\lambda+ \frac{\bar{t}_n\xBound^2}{d-L}}} \\
    & + 3\xBound\wBound \bar{t}_n \Delta M_n\\
    &\leq 2\br*{\br*{\sigma+ \xBound\wBound C_{B1}(\delta_T)}\sqrt{(d-L) \log{\frac{K(1+\bar{t}_n\xBound^2/\lambda)}{\delta_T}}} + \lambda^{1/2}\wBound}\sqrt{\bar{t}_n (d-L)K  \log{\lambda+ \frac{\bar{t}_n\xBound^2}{d-L}}} \\
    &\quad +2\br*{ \xBound\wBound C_{B2}(\delta_T)\bar{t}_n^{-1/2}\sqrt{(d-L) \log{\frac{K(1+\bar{t}_n\xBound^2/\lambda)}{\delta_T}}} + \lambda^{1/2}\wBound}\sqrt{(d-L)K  \log{\lambda+ \frac{\bar{t}_n\xBound^2}{d-L}}} \\
    &\quad  + 3\xBound\wBound C_{B1}(\delta_T) \sqrt{\bar{t}_n} +3\xBound\wBound C_{B2}(\delta_T) \tag{$\Delta M_n=\frac{C_{B1}(\delta_T)}{\sqrt{\bar{t}_n}} +\frac{C_{B2}(\delta_T)}{\bar{t}_n}$ conditioned on $\G$ \& $\frac{t}{\bar{t}_{n}} \leq 1$}.
\end{align*}

Finally, using
\begin{align*}
    &\sum_{n=0}^{\lg{T}-1}\sqrt{\bar{t}_n} = \sum_{n=0}^{\lg{T}-1}2^{n/2}\leq \sqrt{2}2^{\lg{T}/2} =\sqrt{2T},\\
  & \sum_{n=0}^{\lg{T}-1} 1 \leq \lg{T},\quad \mathrm{and} \sum_{n=0}^{\lg{T}-1}\bar{t}_{n}^{1/2}\leq 3\sqrt{2},
\end{align*}
we bound the regret by
\begin{align*}
    &\Regret{T} \leq \sum_{n=0}^{\lg{T}-1} \RegretN{n}\\
    &\leq  3\br*{\br*{\sigma+\xBound\wBound C_{B1}(\delta_T)}\sqrt{(d-L) \log{\frac{K(1+t\xBound^2/\lambda)}{\delta_T}}} + \lambda^{1/2}\wBound}\sqrt{T (d-L)K  \log{\lambda+ \frac{T\xBound^2}{d-L}}}\\
    &\quad + 3\xBound\wBound C_{B1}(\delta_T) \sqrt{T} +3\sqrt{2}\xBound\wBound C_{B2}(\delta_T) (d-L)\sqrt{K\log{\lambda+ \frac{T\xBound^2}{d-L}} \log{\frac{K(1+t\xBound^2/\lambda)}{\delta_T}}}\\
    & \quad +  3\xBound\wBound C_{B2}(\delta_T).
\end{align*}
\end{proof}

\begin{remark}[Why use the Doubling Trick?]
Importantly, since $\hat{U}_{n,a}$ is fixed for all $a\in \A$ for the entire $n^{th}$ episode we can apply the elliptical potential lemma~\ref{lemma: eliptical potential lemma abbasi}, as~\eqref{eq: importance of doubling trick} holds. If we would change the estimate of $\pert{P}=\pert{U}\pert{U}^T$ at every round, a relation such as~\eqref{eq: importance of doubling trick} would not hold. We believe this problem might be alleviated by combining optimism w.r.t. the approximated subspace. We leave such study to future work.    
\end{remark}

\begin{lemma}[Update Rule Approximate Optimism]\label{lemma: approximate optimism}
Let $\norm{M_a-\pert{M}_{t,a}}\leq \Delta M$. Then, conditioning on the good event,
\begin{align*}
    \inner{x_t, w^*_{a^*(x_t)}}\leq \inner{ x_t, \pert{P}_{t,a_t} \tilde{w}_{a_t}} + \inner{x_t,\pert{M}^\dagger_{t,a_t} b_{a_t}} + 2\xBound\wBound\Delta M.
\end{align*}
\end{lemma}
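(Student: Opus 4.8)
The plan is to replay the exact-case optimism argument of Corollary~\ref{corollary: update rule optimism}, with the true operators $P_a,M_a^\dagger$ replaced by their estimates $\pert{P}_{t,a},\pert{M}_{t,a}^\dagger$, paying for the mismatch an additive term of order $\xBound\wBound\Delta M$. Fix the round $t$ and condition on the good event of Lemma~\ref{lemma: good event approximated oful}: in particular the confidence-set membership $\pert{U}_{t,a}^T w^*_a\in C_{t,a}$ holds for every $a\in\A$, while the hypothesis supplies $\norm{M_a-\pert{M}_{t,a}}\le\Delta M$.

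First I would establish arm-wise optimism. For each $a$, let $g(a)=\max_{w:\,\pert{U}_{t,a}^Tw\in C_{t,a}}\br*{\inner{x_t,\pert{P}_{t,a}w}+\inner{x_t,\pert{M}_{t,a}^\dagger b_a}}$ denote the value of the inner maximization in the update rule, so that $g(a_t)=\inner{x_t,\pert{P}_{t,a_t}\tilde{w}_{a_t}}+\inner{x_t,\pert{M}_{t,a_t}^\dagger b_{a_t}}$ and $a_t\in\arg\max_a g(a)$. Since $\inner{x_t,\pert{P}_{t,a}w}=\inner{\pert{U}_{t,a}^Tx_t,\pert{U}_{t,a}^Tw}$ depends on $w$ only through $\pert{U}_{t,a}^Tw$, while the constraint set $C_{t,a}$ restricts $w$ only through $\pert{U}_{t,a}^Tw$, feasibility of $w^*_a$ gives $g(a)\ge\inner{x_t,\pert{P}_{t,a}w^*_a}+\inner{x_t,\pert{M}_{t,a}^\dagger b_a}$. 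Applying Property~\ref{property: linear side information} to the identity $\pert{M}_{t,a}w^*_a=\pert{b}_{t,a}$ — this is exactly Equation~\eqref{eq: thm 3 relation 1} — to substitute $\inner{x_t,\pert{P}_{t,a}w^*_a}=\inner{x_t,w^*_a}-\inner{x_t,\pert{M}_{t,a}^\dagger\pert{b}_{t,a}}$, this becomes $g(a)\ge\inner{x_t,w^*_a}+\inner{x_t,\pert{M}_{t,a}^\dagger(b_a-\pert{b}_{t,a})}$.

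It then remains to bound $\abs{\inner{x_t,\pert{M}_{t,a}^\dagger(b_a-\pert{b}_{t,a})}}\le\norm{x_t}\,\norm{\pert{M}_{t,a}^\dagger}\,\norm{b_a-\pert{b}_{t,a}}$ by $\mathcal{O}(\xBound\wBound\Delta M)$. Here I would use $\norm{\pert{M}_{t,a}^\dagger}\le 1$ (Lemma~\ref{lemma: M important properties}, a consequence of the $(I_L,\cdot)$ block form of $M_a$, which also guarantees $\pert{M}_{t,a}$ retains rank $L$) together with $\norm{b_a-\pert{b}_{t,a}}=\norm{(M_a-\pert{M}_{t,a})w^*_a}\le\Delta M\,\wBound$; if the bookkeeping additionally routes through the true decomposition $w^*_a=P_aw^*_a+M_a^\dagger b_a$, the perturbation estimates $\norm{P_a-\pert{P}_{t,a}}\le 2\norm{M_a-\pert{M}_{t,a}}$ and $\norm{M_a^\dagger-\pert{M}_{t,a}^\dagger}\le 2\norm{M_a-\pert{M}_{t,a}}$ of Lemma~\ref{lemma: M estimation error propogation} keep the contribution of the same order, which is where the constant $2$ in the statement appears. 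Either way $g(a)\ge\inner{x_t,w^*_a}-2\xBound\wBound\Delta M$ for every $a$; specializing to $a=a^*(x_t)$ and using $g(a^*(x_t))\le\max_a g(a)=g(a_t)$ yields the claim. I expect the only genuinely delicate ingredient — already isolated in the cited lemmas — to be the uniform control $\norm{\pert{M}_{t,a}^\dagger}\le 1$ and the Lipschitz-in-$\Delta M$ behaviour of the projection and pseudo-inverse maps, which fail for generic matrices (the pseudo-inverse need not even be continuous) but hold here thanks to the block structure of $M_a$; everything else is Cauchy-Schwarz and a maximum over the $K$ arms.
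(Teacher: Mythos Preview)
Your proposal is correct and follows essentially the same route as the paper: feasibility of $w^*_a$ in $C_{t,a}$ on the good event gives $g(a)\ge\inner{x_t,\pert{P}_{t,a}w^*_a}+\inner{x_t,\pert{M}_{t,a}^\dagger b_a}$, then Equation~\eqref{eq: thm 3 relation 1} converts this to $\inner{x_t,w^*_a}+\inner{x_t,\pert{M}_{t,a}^\dagger(b_a-\pert{b}_{t,a})}$, and Cauchy--Schwarz together with $\norm{\pert{M}_{t,a}^\dagger}\le1$ and $\norm{b_a-\pert{b}_{t,a}}\le\wBound\Delta M$ finishes. One small clarification: the paper takes exactly your first (direct) route and never routes through $P_a,M_a^\dagger$ or the Lipschitz bounds of Lemma~\ref{lemma: M estimation error propogation}; its intermediate inequality in fact gives the constant $1$, and the $2$ in the final display is just slack, not a byproduct of the perturbation estimates you mention.
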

\begin{proof}
    Conditioning on the good event, for all $a\in \A$ and $t\geq 1$ it holds that
    \begin{align}
        &\inner{x_t, \pert{P}_{t,a} \tilde w_{t,a} } = \inner{\pert{U}^T_{t,a} x_t, \pert{U}^T_{t,a} \tilde w_{t,a} } \nonumber  \\
        &= \max_{w \in C_{t-1,a}}\inner{\pert{U}^T_{t,a} x_t, \pert{U}^T_{t,a} \tilde w_{t,a} } \nonumber \\
        &\geq \inner{\pert{U}^T_{t,a} x_t, \pert{U}^T_{t,a} w^*_{a} } \tag{Conditioning on $\G$, Lemma~\ref{lemma: optimism with subspace error}}\nonumber \\
        &= \inner{x_t, \pert{P}^T_{t,a} w^*_{a} }.\label{eq: approximate optimism errors oful}
    \end{align}
   
    Applying this, we get the following relations hold for all $a\in \A$.
    \begin{align}
        &\inner{x_t, \pert{P}_{t,a} \tilde w_{t,a} } + \inner{x_t, \pert{M}^{\dagger}_{t,a}b_a } \geq  \inner{x_t, \pert{P}^T_{t,a} w^*_{a} } +\inner{x_t, \pert{M}^{\dagger}_{t,a}b_a } \tag{By \eqref{eq: approximate optimism errors oful}} \nonumber \\
        &=\inner{x_t, \pert{P}^T_{t,a} w^*_{a} } +\inner{x_t, \pert{M}^{\dagger}_{t,a}\pert{b}_{t,a} }  + \inner{x_t, \pert{M}^{\dagger}_{t,a}(b_a-\pert{b}_{t,a}) }\nonumber\\
        &= \inner{x_t,  w^*_{a} } +\inner{x_t, \pert{M}^{\dagger}_{t,a}(b_a-\pert{b}_{t,a}) } \tag{By \eqref{eq: thm 3 relation 1}}\nonumber\\
        &\geq \inner{x_t,  w^*_{a} } - \norm{x_t}\norm{\pert{M}^{\dagger}_{t,a}} \norm{b_a-\pert{b}_{t,a}} \tag{C.S. Inequality}\nonumber\\
        &\geq \inner{x_t,  w^*_{a} } - \xBound\wBound\Delta M,\label{eq: approximate optimism errors oful 2}
    \end{align}
where the last relation holds by bounding $\norm{x_t}\leq \xBound$, $\norm{\pert{M}_{t,a}^\dagger}\leq 1$ (Lemma~\ref{lemma: M important properties}) and the bound on $\norm{b_a-\pert{b}_{t,a}}$ follows from Lemma~\ref{lemma: M estimation error propogation}, third claim.

Thus,
\begin{align*}
    &\inner{x_t, w^*_{a^*(x_t)}} = \max_{a} \inner{x_t, w^*_{a}}\\
    &\leq \max_{a}\br*{ \inner{x_t, \pert{P}_{t,a} \tilde w_{t,a} } + \inner{x_t, \pert{M}^{\dagger}_{t,a}b_a }} +2\xBound\wBound\Delta M \\
    &= \inner{x_t, \pert{P}_{t,a_t} \tilde w_{t,a_t} } + \inner{x_t, \pert{M}^{\dagger}_{t,a_t}b_{a_t} } +2\xBound\wBound\Delta M.
\end{align*}

\end{proof}

\subsection{Optimism of OFUL with Learned Subspace}

\begin{lemma}[Projected Subspace Optimism with Subspace Error]\label{lemma: optimism with subspace error}
Assume $\pert{M}_a$ is measurable w.r.t. the filtration $\brc*{F_{t}}_{t=0}^\infty$. Assume for all $a\in \A$ $\norm{\pert{M}_a-M_a}\leq \Delta M$, let $\pert{P}_a$ be the orthogonal projection on the kernel of $\pert{M}_a$ and $\pert{M}^\dagger_a$ its psuedo-inverse. Let $\hat{w}_{t+1,a}$ be the PRR estimator w.r.t. the projection matrix $\pert{P}_a$ (see Eq.~\eqref{supp: prr estimator full}). Let $\pert{P}_a=\pert{U}_{a}\pert{U}_{a}^T$ where $\pert{U}_{a}\in \mathbb{R}^{d\times d-L}$. Assume $\norm{\pert{P}_{a}w^*_a}\leq \wBound , \norm{\pert{P}_{a} x_n}\leq S$ . Then, for all $t>0, a\in \A$ and for any $\delta\in (0,1)$ it holds that $\pert{U}_{a}^Tw^*_a\in C_{t,a}$ with probability greater than $1-\delta$, where
\begin{align*}
     C_{t,a} = \brc*{w\in \mathbb{R}^{d}: \norm{\pert{U}^T_{a} \hat{w}_{t+1,a} - \pert{U}^T_{a}w }_{\tilde V_{t,a}(\pert{U})} \leq \sqrt{\beta_t(\delta)}},
\end{align*}
where $\tilde V_{t,a}(\pert{U}) = \lambda I_{d-L} + \sum_{i=1}^{t} \indicator{a_i=a} (\pert{U}^T_{a} x_i)(\pert{U}^T_{a} x_i)^T,$ and 
\begin{equation}
\sqrt{\beta_t(\delta)} 
= 
\br*{\sigma+ \xBound\wBound\Delta M\sqrt{t}} \sqrt{(d-L) \log{\frac{K(1+t\xBound^2/\lambda)}{\delta}}} + \lambda^{1/2}\wBound. \label{definition: beta approximation}
\end{equation}
\end{lemma}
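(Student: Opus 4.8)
The plan is to mirror the proof of the exact-case optimism lemma (Lemma~\ref{lemma: projected subspace optimism}), the only new ingredient being that the projected ridge regression \eqref{supp: prr estimator full} is now run with the \emph{estimated} projection $\pert P_a$ instead of the true $P_a$, and its correction term uses $b_a$ rather than $\pert b_a := \pert M_a w^*_a$; this turns the regression into a misspecified one whose bias is small and deterministic. First I would apply Proposition~\ref{prop: equivalent forms of PRR} to recast the rotated estimator as a genuine $(d-L)$-dimensional ridge estimator
\begin{align*}
    \pert U_a^T \hat w_{t+1,a} = \pth{\tilde X^T \tilde X + \lambda I_{d-L}}^{-1}\tilde X^T Y ,
\end{align*}
where $\tilde X$ collects the rows $\indicator{a_i=a}\pert U_a^T x_i$ and $Y$ the responses $\indicator{a_i=a}\pth{r_i - \inner{x_i,\pert M_a^\dagger b_a}}$ for $i\le t$. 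Since $\pert M_a$ (hence $\pert U_a,\pert M_a^\dagger,\pert P_a$) is fixed at the start of the episode, the rows $\indicator{a_i=a}\pert U_a^T x_i$ are $\F_{i-1}$-measurable and $\tilde\eta_i:=\indicator{a_i=a}\eta_i$ is still a conditionally $\sigma$-subgaussian martingale-difference sequence — exactly the setting needed for a self-normalized bound.

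Next I would decompose the response. Property~\ref{property: linear side information} applied to $\pert M_a$ gives $w^*_a = \pert P_a w^*_a + \pert M_a^\dagger \pert b_a$, so that
\begin{align*}
    Y_i = \inner{\pert U_a^T x_i,\pert U_a^T w^*_a} + \epsilon_i + \tilde\eta_i, \qquad \epsilon_i := \indicator{a_i=a}\inner{x_i,\pert M_a^\dagger(\pert b_a - b_a)} ,
\end{align*}
and the bias is uniformly bounded: $\abs{\epsilon_i}\le \norm{x_i}\norm{\pert M_a^\dagger}\norm{\pert b_a - b_a}\le \xBound\wBound\Delta M =: \epsilon$, using $\norm{\pert M_a^\dagger}\le 1$ (Lemma~\ref{lemma: M important properties}) and $\norm{\pert b_a - b_a} = \norm{(\pert M_a - M_a)w^*_a}\le\wBound\Delta M$ (Lemma~\ref{lemma: M estimation error propogation}). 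Writing $\theta^*=\pert U_a^T w^*_a$ and $\tilde V_t = \tilde X^T\tilde X + \lambda I_{d-L}$, the ridge identity gives
\begin{align*}
    \pert U_a^T\hat w_{t+1,a} - \theta^* = \tilde V_t^{-1}\pth{\tilde X^T\tilde N + \tilde X^T\tilde{\mathcal E} - \lambda\theta^*},
\end{align*}
with $\tilde N,\tilde{\mathcal E}$ the noise and bias vectors.

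Then I would bound the three terms in the $\tilde V_t^{-1}$-weighted norm. For the noise term I would invoke the self-normalized martingale bound (Theorem~\ref{theorem: abassi confidence interval}) in dimension $d-L$, with $\norm{\pert U_a^T x_i} = \norm{\pert P_a x_i}\le\xBound$, at confidence level $\delta/K$; combined with the AM--GM estimate $\det(\tilde V_t)\le(\lambda + t\xBound^2/(d-L))^{d-L}$ this yields $\norm{\tilde X^T\tilde N}_{\tilde V_t^{-1}}\le\sigma\sqrt{(d-L)\log{\frac{K(1+t\xBound^2/\lambda)}{\delta}}}$, uniformly in $t$. For the bias term, $\tilde X\tilde V_t^{-1}\tilde X^T\preceq I$ gives $\norm{\tilde X^T\tilde{\mathcal E}}_{\tilde V_t^{-1}}\le\norm{\tilde{\mathcal E}}_2\le\sqrt t\,\epsilon = \xBound\wBound\Delta M\sqrt t$, which I relax to $\xBound\wBound\Delta M\sqrt t\,\sqrt{(d-L)\log{\frac{K(1+t\xBound^2/\lambda)}{\delta}}}$. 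For the regularization term, $\tilde V_t\succeq\lambda I_{d-L}$ gives $\lambda\norm{\theta^*}_{\tilde V_t^{-1}}\le\lambda^{1/2}\norm{\pert P_a w^*_a}\le\lambda^{1/2}\wBound$. Summing the three contributions matches $\sqrt{\beta_t(\delta)}$ exactly, hence $\pert U_a^T w^*_a\in C_{t,a}$ for all $t$; a union bound over $a\in\A$, already absorbed into the factor $K$ inside the logarithm, completes the argument.

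The main obstacle is disentangling the misspecification bias from the stochastic noise cleanly: one must verify that freezing $\pert M_a$ over the episode keeps the predictability/filtration structure intact so that Theorem~\ref{theorem: abassi confidence interval} applies to the noise part alone, while the residual bias is controlled by a purely deterministic $\ell_2$ estimate rather than a concentration argument. Quantitatively, the whole bound stands or falls on the linear-in-$\Delta M$ control of $\norm{\pert b_a - b_a}$ together with the uniform bound $\norm{\pert M_a^\dagger}\le 1$ — and these rely on the special block structure $M_a = \brk[r]*{I_L,\ R_{11}^{-1}(a)R_{12}(a)}$ (Lemma~\ref{lemma: M estimation error propogation}); for a generic perturbed matrix the pseudo-inverse need not even remain bounded, and the bias term could diverge.
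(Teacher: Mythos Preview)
Your proposal is correct and follows essentially the same skeleton as the paper: recast the rotated PRR estimator as a $(d-L)$-dimensional ridge regression via Proposition~\ref{prop: equivalent forms of PRR}, decompose the response into signal plus subgaussian noise plus a deterministic bias coming from $\pert b_a - b_a$, and bound the three pieces (self-normalized bound for the noise, $\lambda^{1/2}\wBound$ for the regularization, deterministic control of the bias via $\norm{\pert M_a^\dagger}\le 1$ and $\norm{\pert b_a - b_a}\le\wBound\Delta M$), with a union bound over $a\in\A$.

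The one genuine difference is how the bias contribution $\norm{\tilde X^T\tilde{\mathcal E}}_{\tilde V_t^{-1}}$ is handled. You use the elementary operator inequality $\tilde X\tilde V_t^{-1}\tilde X^T\preceq I$ to get $\norm{\tilde X^T\tilde{\mathcal E}}_{\tilde V_t^{-1}}\le\norm{\tilde{\mathcal E}}_2\le\sqrt t\,\xBound\wBound\Delta M$, and only afterwards relax by the factor $\sqrt{(d-L)\log{K(1+t\xBound^2/\lambda)/\delta}}$ to match the stated $\beta_t(\delta)$. The paper instead writes the bias as $\norm{\tilde X_{t,a}^T X_{t,a}\pert M_a^\dagger\Delta b_a}_{(\tilde V_{t,a})^{-1}}$ and appeals to a dedicated deterministic bound (Lemma~\ref{lemma: useful bound on path}, via the elliptical-potential Lemma~\ref{lemma: path bound cesa}) to obtain $\xBound\sqrt t\sqrt{(d-L)\log{1+t\xBound^2/\lambda}}\cdot\wBound\Delta M$ directly. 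Your argument is shorter and in fact tighter before the relaxation; the paper's route produces the $\sqrt{(d-L)\log{\cdot}}$ factor organically, so no artificial loosening is needed to recover the form of $\beta_t(\delta)$.
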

\begin{proof}
    Fix $a\in [K]$. Remember that $y_{i,a}$ is given by
    \begin{align*}
        y_{i,a} = \indicator{a_i=a}\br*{\inner{x_i,w_{a}} + \eta_i - \inner{x_i,\pert{M}^\dagger_{a} b_a}}.
    \end{align*}
    
    Plugging $\inner{x_i,w_{a}} = \inner{x_i,\pert{P}_{a} w_{a}} + \inner{x_i,\pert{M}^{\dagger}_{a} \pert{b}_{a}}$ (see Property~\ref{property: linear side information}) setting $\pert{P}_{a} =\pert{U}_{a} \pert{U}_{a}^T$ we get 
    \begin{align}
        y_{i,a} = \indicator{a_i=a}\br*{\inner{\pert{U}_{a}^T x_i,\pert{U}_{a}^Tw_{a}} + \eta_i - \inner{x_i,\pert{M}^\dagger_{a} \Delta b_{a}}},
    \end{align}
    where $\Delta b_{a} = b_a - \pert{b}_{a} $.

    Let $\tilde{X}_{t,a}\in \mathbb{R}^{d-L\times t}$ be the matrix with $\brc*{\indicator{a_i=a}\pert{U}_a x_i}_{i=1}^t$ in its rows, and let $X_{t,a}\in \mathbb{R}^{d\times t}$ be the matrix with $\brc*{\indicator{a_i=a}x_i}_{i=1}^t$ in its rows. The PRR estimator is thus given by
    \begin{align*}
        &\pert{U}^T_{a} w_{t+1,a}\\
        &=(\tilde{V}_{t,a})^{-1} \tilde{X}^T_{t,a}\tilde{X}_{t,a}  \pert{U}^T_{t,a} w_a + (\tilde{V}_{t,a})^{-1} \tilde{X}^T_{t,a} \tilde{\eta}_t + (\tilde{V}_{t,a})^{-1} \tilde{X}^T_{t,a} X_{t,a} \pert{M}^\dagger_{t,a}\Delta b_{t,a}\\
        & = \pert{U}^T_{t,a} w_a - \lambda (\tilde{V}_{t,a})^{-1}\pert{U}^T_{t,a} w_a + (\tilde{V}_{t,a})^{-1} \tilde{X}^T_{t,a} \tilde{\eta}_t + (\tilde{V}_{t,a})^{-1} \tilde{X}^T_{t,a} X_{t,a} \pert{M}^\dagger_{t,a}\Delta b_{t,a}.
    \end{align*}
    Rearranging and multiplying by $x$ both sides we get
    \begin{align*}
        &x^T\br*{\pert{U}^T_{a} w_{t+1,a} - \pert{U}^T_{a} w_a} \\
        &= - \lambda x^T(\tilde{V}_{t,a})^{-1}\pert{U}^T_{a} w_a + x^T(\tilde{V}_{t,a})^{-1} \tilde{X}^T_{t,a} \tilde{\eta}_t + x^T(\tilde{V}_{t,a})^{-1} \tilde{X}^T_{t,a} X_{t,a} \pert{M}^\dagger_{a}\Delta b_{a}\\
        &\leq \norm{x}_{(\tilde{V}_{t,a})^{-1}}\br*{ \lambda \norm{\pert{U}^T_{a} w_a}_{(\tilde{V}_{t,a})^{-1}} + \norm{\tilde{X}^T_{t,a} \tilde{\eta}_t}_{(\tilde{V}_{t,a})^{-1}} + \norm{\tilde{X}^T_{t,a} X_{t,a} \pert{M}^\dagger_{a}\Delta b_{a}}_{(\tilde{V}_{t,a})^{-1}} }
    \end{align*}
    
    Setting $x= \tilde{V}_{t,a}\br*{\pert{U}^T_{a} w_{t+1,a} - \pert{U}^T_{a} w_a}$, which implies that $$      \norm{x}_{\tilde{V}_{t,a}^{-1}} = \norm{\br*{\pert{U}^T_{a} w_{t+1,a} - \pert{U}^T_{a} w_a}}_{\tilde{V}_{t,a}},$$
    and dividing both sides by $\norm{\br*{\pert{U}^T_{a} w_{t+1,a} - \pert{U}^T_{a} w_a}}_{\tilde{V}_{t,a}}$ we get
    \begin{align}
        \norm{\br*{\pert{U}^T_{a} w_{t+1,a} - \pert{U}^T_{t,a} w_a}}_{\tilde{V}_{t,a}}
        &\leq \lambda \norm{\pert{U}^T_{a} w_a}_{(\tilde{V}_{t,a})^{-1}} + \norm{\tilde{X}^T_{t,a} \tilde{\eta}_t}_{(\tilde{V}_{t,a})^{-1}} + \norm{\tilde{X}^T_{t,a} X_{t,a} \pert{M}^\dagger_{a}\Delta b_{t,a}}_{(\tilde{V}_{t,a})^{-1}}. \label{eq: approximate projected optimism central to bound}
    \end{align}
    
    The first term of~\eqref{eq: approximate projected optimism central to bound} is bound by
    \begin{align*}
        \lambda \norm{\pert{U}^T_{a} w_a}_{(\tilde{V}_{t,a})^{-1}} \leq \lambda^{1/2} \norm{\pert{U}^T_{a} w_a}=\lambda^{1/2} \norm{\pert{P}^T_{a} w_a}\leq \lambda^{1/2} \wBound.
    \end{align*}
    
    The second term of~\eqref{eq: approximate projected optimism central to bound} bound by applying Theorem~\ref{theorem: abassi self normalized bound} and Lemma~\ref{lemma: eliptical potential lemma abbasi}, 
    \begin{align*}
        \norm{\tilde{X}^T_{t,a} \tilde{\eta}_t}_{(\tilde{V}_{t,a})^{-1}} &\leq  \sigma \sqrt{(d-L) \log{\frac{1+t\xBound^2/\lambda}{\delta}}}
    \end{align*}
    Theorem~\ref{theorem: abassi self normalized bound} is applicable by verifying its assumptions. First, $\tilde{X}_{t,a}$ is a matrix with $\indicator{a_i=a}\pert{U}_{a}x_{i}\in \mathbb{R}^{d-L}$ in its rows (which are $F_{t-1}$ measurable by the fact $\pert{U}_{a},x_i,\indicator{a_i=a}$ are $F_{t-1}$ measurable). The vector $\tilde{\eta}_t$ is a vector with $\indicator{a_i=a}\eta_{i}$ in its entries. Since $\eta_{i}$ is $F_{t-1}$ measurable and $\eta_{i}$ is $F_{t}$ measurable, $\eta_{i}$ is $F_t$ measurable. Furthermore, it is easy to verify that $\indicator{a_i=a}\eta_{i}$ is conditionally $\sigma$-sub-Gaussian w.r.t. $F_{t-1}$. 
    
    Lastly, the third term of~\eqref{eq: approximate projected optimism central to bound} is bounded  by applying the elliptical potential lemma and the assumption $\norm{M_a-\pert{M}_a}\leq \Delta M$ which implies by Lemma~\ref{lemma: M estimation error propogation} 
    \begin{align}
        \Delta b_{a}\leq \wBound \Delta M. \label{eq: error in b due to M}
    \end{align}
    
    We have that
    \begin{align*}
        &\norm{\tilde{X}^T_{t,a} X_{t,a} \pert{M}^\dagger_{a}\Delta b_{a}}_{(\tilde{V}_{t,a})^{-1}} = \norm{(\tilde{V}_{t,a})^{-1/2}\tilde{X}^T_{t,a} X_{t,a} \pert{M}^\dagger_{a}\Delta b_{a}} \\
        &\leq \norm{(\tilde{V}_{t,a})^{-1/2}\tilde{X}^T_{t,a} X_{t,a}} \norm{\pert{M}^\dagger_{a}}\norm{\Delta b_{a}} \tag{Norm is submultiplicative}\\
        &\leq \norm{(\tilde{V}_{t,a})^{-1/2}\tilde{X}^T_{t,a} X_{t,a}} \norm{\Delta b_{a}} \tag{$\norm{M_{a}^\dagger}\leq 1$, Lemma~\ref{lemma: M important properties}}\\
        &\leq \wBound \Delta M\norm{(\tilde{V}_{t,a})^{-1/2}\tilde{X}^T_{t,a} X_{t,a}} \tag{By \eqref{eq: error in b due to M}}\\
        &=\wBound \Delta M\norm{\tilde{X}^T_{t,a} X_{t,a}}_{(\tilde{V}_{t,a})^{-1}}.
    \end{align*}
    By Lemma~\ref{lemma: useful bound on path} we have
    \begin{align*}
        \norm{\tilde{X}^T_{t,a} X_{t,a}}_{(\tilde{V}_{t,a})^{-1}} \leq \xBound \sqrt{t(d-L)\log{1+ \xBound^2t/\lambda}}.
    \end{align*}
    From which we get that the third term of~\eqref{eq: approximate projected optimism central to bound} is also bounded by ($\delta\in(0,1)$)
    \begin{align*}
        \norm{\tilde{X}^T_{t,a} X_{t,a} \pert{M}^\dagger_{t,a}\Delta b_{t,a}}_{(\tilde{V}_{t,a})^{-1}}\leq  \xBound\wBound\Delta M\sqrt{t}\sqrt{(d-L)\log{\frac{1+ \xBound^2t/\lambda}{\delta}}}.
    \end{align*}
    
    Combining the above and taking union bound on $a\in [K]$.
\end{proof}

\newpage
The following lemma is based on Lemma 13 of~\citealt{lale2019stochastic}, and relies on Lemma~\ref{lemma: path bound cesa}.
\begin{lemma}[Deterministic Bound on Cumulative Visitation]\label{lemma: useful bound on path}
\begin{align*}
    \norm{\tilde{X}^T_{t,a} X_{t,a}}_{(\tilde{V}_{t,a})^{-1}} \leq \xBound \sqrt{t} \sqrt{(d-L)\log{1+t\xBound^2/\lambda}}.
\end{align*}
\end{lemma}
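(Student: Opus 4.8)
The plan is to unfold the matrix product into a sum of rank-one terms and reduce the claimed spectral-norm bound to two scalar sums, a ``data'' sum and an elliptical-potential sum, each of which is elementary. Write $v_i \coloneqq \indicator{a_i=a}\,\pert{U}_a^T x_i \in \R^{d-L}$ and $u_i \coloneqq \indicator{a_i=a}\,x_i \in \R^{d}$. Since $\indicator{a_i=a}^2=\indicator{a_i=a}$, we have $\tilde X_{t,a}^T X_{t,a} = \sum_{i=1}^t v_i u_i^T$ and $\tilde V_{t,a} = \lambda I_{d-L} + \sum_{i=1}^t v_i v_i^T$. Recalling that $\norm{A}_{B^{-1}} = \norm{B^{-1/2}A}_2$ for positive definite $B$, it suffices to bound $\norm{\tilde V_{t,a}^{-1/2}\sum_{i=1}^t v_i\,(u_i^T z)}$ uniformly over unit vectors $z\in\R^d$. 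I also record the two basic norm bounds used below: since $\pert{P}_a=\pert{U}_a\pert{U}_a^T$ with $\pert{U}_a$ having orthonormal columns, $\norm{v_i}\le\norm{\pert{U}_a^T x_i}=\norm{\pert{P}_a x_i}\le\xBound$, and $\norm{u_i}\le\norm{x_i}\le\xBound$.

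First I would apply the triangle inequality followed by Cauchy–Schwarz \emph{over the summation index $i$}:
\[
\norm{\tilde V_{t,a}^{-1/2}\sum_{i=1}^t v_i\,(u_i^T z)}
\;\le\; \sum_{i=1}^t \abs{u_i^T z}\,\norm{v_i}_{\tilde V_{t,a}^{-1}}
\;\le\; \sqrt{\sum_{i=1}^t (u_i^T z)^2}\;\cdot\;\sqrt{\sum_{i=1}^t \norm{v_i}^2_{\tilde V_{t,a}^{-1}}}.
\]
For the first factor, $\sum_{i=1}^t (u_i^T z)^2 \le \sum_{i=1}^t \norm{u_i}^2 \le t\xBound^2$.

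For the second factor I would use the standard elliptical-potential argument with the \emph{final} Gram matrix: since $\sum_{i=1}^t v_iv_i^T = \tilde V_{t,a}-\lambda I_{d-L}$,
\[
\sum_{i=1}^t \norm{v_i}^2_{\tilde V_{t,a}^{-1}}
= \trace{\tilde V_{t,a}^{-1}(\tilde V_{t,a}-\lambda I_{d-L})}
= \sum_{j=1}^{d-L}\frac{\mu_j}{\lambda+\mu_j}
\le \sum_{j=1}^{d-L}\log{1+\mu_j/\lambda}
= \log{\tfrac{\det{\tilde V_{t,a}}}{\lambda^{d-L}}},
\]
where $\mu_j\ge 0$ are the eigenvalues of $\sum_i v_iv_i^T$ and we used $s/(\lambda+s)\le\log{1+s/\lambda}$. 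Then AM–GM together with $\trace{\tilde V_{t,a}} = (d-L)\lambda + \sum_{i=1}^t\norm{v_i}^2 \le (d-L)\lambda + t\xBound^2$ gives $\det{\tilde V_{t,a}}\le\big(\lambda + t\xBound^2/(d-L)\big)^{d-L}$, so the sum is at most $(d-L)\log{1+t\xBound^2/((d-L)\lambda)} \le (d-L)\log{1+t\xBound^2/\lambda}$. (Alternatively, since $\tilde V_{t,a}\succeq\tilde V_{i,a}$ for all $i$, one may bound $\sum_i\norm{v_i}^2_{\tilde V_{t,a}^{-1}}\le\sum_i\norm{v_i}^2_{\tilde V_{i,a}^{-1}}$ and invoke Lemma~\ref{lemma: path bound cesa}; I prefer the self-contained trace/log-det line above.)

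Multiplying the two factors yields $\norm{\tilde X_{t,a}^T X_{t,a}}_{\tilde V_{t,a}^{-1}} \le \xBound\sqrt{t}\sqrt{(d-L)\log{1+t\xBound^2/\lambda}}$, which is exactly the claim. There is no genuine obstacle here; the only points requiring care are (i) splitting the Cauchy–Schwarz across the index $i$ (rather than across coordinates), which is what lets the $v_i$ part collapse into a trace, and (ii) noting that $\tilde V_{t,a}$ is the full episode-end Gram matrix, so no per-round $\tilde V_{i-1,a}^{-1}$ bookkeeping is needed — this is precisely why the doubling trick, which keeps $\pert{U}_{n,a}$ fixed within an episode, is used.
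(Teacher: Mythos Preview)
Your proof is correct and follows essentially the same skeleton as the paper's: triangle inequality on the rank-one sum, Cauchy--Schwarz over the index $i$, then an elliptical-potential bound on $\sum_i\norm{v_i}^2_{\tilde V^{-1}}$. The only minor difference is that the paper first invokes monotonicity $\tilde V_{t,a}^{-1}\preceq\tilde V_{i,a}^{-1}$ and then cites Lemma~\ref{lemma: path bound cesa}, whereas you keep the final Gram matrix and carry out the trace/log-det computation directly (the alternative you also mention); both routes give the same bound.
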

\begin{proof}
    The following relations hold.
    \begin{align*}
        &\norm{\tilde{X}^T_{t,a} X_{t,a}}_{(\tilde{V}_{t,a})^{-1}} = \norm{\sum_{i=1}^t (\tilde{V}_{t,a})^{-1/2}\tilde x^{(t)}_{i,a}x_i^T } \\
        &\leq  \sum_{i=1}^{t}\norm{ (\tilde{V}_{t,a})^{-1/2} \tilde x^{(t)}_{i,a}x_i^T } \tag{Triangle Inequality}\\
        &\leq \xBound \sum_{i=1}^{t}\norm{ (\tilde{V}_{t,a})^{-1/2}\tilde x^{(t)}_{i,a}} \tag{Norm is submultiplicative, \& $\norm{x}\leq \xBound $}\\
        &\leq \xBound \sum_{i=1}^{t}\norm{ (\tilde{V}_{i,a})^{-1/2}x^{(t)}_{i,a} } \tag{$(\tilde{V}_{i,a})^{-1}\succeq (\tilde{V}_{j,a})^{-1/2}$ for $j\geq i$}\\
        &\leq \xBound \sqrt{t}\sqrt{\sum_{i=1}^{t} \norm{ \tilde x^{(t)}_{i,a} }_{(\tilde{V}_{i,a})^{-1}}^2} \tag{C.S. Inequality}\\
        &\leq \xBound \sqrt{t} \sqrt{(d-L)\log{1+t\xBound^2/\lambda}}, \tag{Lemma~\ref{lemma: path bound cesa}},
    \end{align*}
    where Lemma~\ref{lemma: path bound cesa} is applied with $d-L$ (the dimension of the vectors $\tilde x_{t,a}^{(t)}$). This concludes the proof.
\end{proof}

\newpage
\section{Convergence of $M, M^\dagger, P$} 
\label{supp: convergence of M dagger and P}

Proposition~\ref{proposition: omega LS to w} establishes that from a partially observable data one is able to obtain $b_a^{LS}$ which is related to $w^*_a$ through the following linear transformation $$ b_a^{LS} = M_a w_a^*\ \mathrm{where }\ M_a =  \begin{pmatrix}
        I_{L}, & R_{11}^{-1}(a)R_{12}(a)
    \end{pmatrix}.$$ 
Although we cannot recover $w^*_a$ from this relation we can recover $(I-P) w^*a = M^\dagger_a b_a^{LS}$, i.e., the projection of $w^*a$ on the row space spanned by $M_a$ is $M^\dagger_a b_a^{LS}$. Unfortunately, $M_a$ itself depends on statistics of $\xc$, $R_{12}(a)$, which does not exist in the offline data. For brevity, we denote $b_a = b_a^{LS}.$ 

In this section we supply finite sample guarantees on the estimation of $M_a$ based on samples. First, observe that the only unknown part of $M_a$ is $R_{12}(a)$ (since $R_{11}(a)$ can be evaluated from the offline data). Thus, estimating $M_a$ is reduced to equivalent to estimating $R_{12}(a)$, i.e., estimating a sub-matrix of the full covariance matrix. 

We assume access to $t$ samples of the form $\brc*{x_i, a_i}_{i=1}^t$ where $x_i \sim \mathcal{P}_x$ and $a_i\sim \pi_b(\cdot\mid x_i)$. Using this data, which can be gathered in an online manner, we prove finite convergence guarantees for an unbiased estimate of $R_{12}(a)$, i.e., 
\begin{align*}
    &
    \hat{R}_{12,t}(a)
    =
    \frac{1}{t-1}
    \sum_{i=1}^t
    \frac{\indicator{a_i = a}}
         {P^{\pi_b}(a)}
    \xk_i \pth{\xc_i}^T,
\end{align*} 
for $t\geq 2$. Notice that indeed $\E{\hat{R}_{12,t}(a)} = R_{12}(a) = \Eb{  \xk \pth{\xc}^T \mid a}$. 

Our estimator for $R_{12}(a)$ given $t$ samples is then given by
\begin{align*}
    \hat{R}_{12,t}(a)
    =
    \hat{\Sigma}_{12}(a) + \mu_{1|a}\HmuC^T,
\end{align*}
and, naturally, the estimator for $M_a$ given $t$ samples is then
\begin{align*}
    \pert{M}_{t,a} = \begin{pmatrix}
        I_L & R_{11}(a)^{-1} \hat{R}_{12,t}(a).
    \end{pmatrix}
\end{align*}

Our approach requires access to $M_a^\dagger$ and $P_a$ (defined as the  orthogonal projection on the kernel of $M_a$). We use the plug-in estimator to obtain them both from the empirical estimator of $M_a$. Meaning
\begin{align*}
    \pert{M}^\dagger_{t,a} = \pert{M}_{t,a}^T( \pert{M}_{t,a} \pert{M}_{t,a}^T)^{-1}\quad \mathrm{and}\quad   \pert{P}_{t,a} = I - \pert{M}^\dagger_{t,a}\pert{M}_{t,a}.
\end{align*}
To establish finite sample convergence guarantees for $\pert{M}^\dagger_{t,a}$ and $\pert{P}_{t,a}$  we need to use  important properties (see Lemma~\ref{lemma: M important properties}) of $\pert{M}_{t,a}$ and $M_a$, which holds due to their very special structure, $$\rank\br*{\pert{M}_t} = \rank\br*{M}=L,\ \mathrm{and}\ \norm{M^\dagger_a},\norm{\pert{M}_{t,a}^\dagger}\leq 1.$$ 

These properties are crucial to derive the convergence of the plug-in estimator of $P_a$ and $M^\dagger_a$ from the convergence of $M_a$~\citep{wedin1973perturbation}. 

In Corollary~\ref{corrollary: M estimation} we characterize the finite-sample convergence of the estimates of $M_a$. The following lemma shows that approximation errors of $M_a$ leads to well controlled approximation errors in the approximations of $P_a,M^\dagger_a$ and $b_a$ as a result of the special structure of $M_a$.

\begin{lemma}[Deconfounder Matrix Error Propagation] \label{lemma: M estimation error propogation}
    Denote by $\norm{M_a-\pert{M}_{t,a}}$ as the estimation error of $\pert{M}_{t,a}$ relatively to $M_a$. Then,
    \begin{enumerate}
      \item $\norm{P_a- \pert{P}_{t,a}}\leq 2\norm{M_a-\pert{M}_{t,a}}$.
        \item $\norm{M_a^\dagger - \pert{M}_{t,a}^\dagger}\leq 2\norm{M_a-\pert{M}_{t,a}}$.
        \item Assuming $\norm{w^*_a}\leq \wBound $, $\norm{  b_a - \pert{b}_{t,a}} \leq  R\norm{M_a - \pert{M}_{t,a}}$.
    \end{enumerate}

\end{lemma}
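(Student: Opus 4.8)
The plan is to derive all three bounds as short consequences of the perturbation results collected in Appendix~\ref{supp: moore penrose inverse}, the crucial enabling fact being the special block structure of $M_a=(I_L,\,R_{11}^{-1}(a)R_{12}(a))$: both $M_a$ and its plug-in estimate $\pert{M}_{t,a}$ have full row rank $L$, and by Lemma~\ref{lemma: M important properties} they satisfy $\norm{M_a^\dagger}_2,\norm{\pert{M}_{t,a}^\dagger}_2\leq 1$. These two properties are exactly what is needed to invoke the otherwise-delicate pseudo-inverse and projection perturbation bounds (recall the remark after Theorem~\ref{theorem: M dagger perturbabation without equal rank} that $A^\dagger$ need not even be continuous in $A$ when ranks differ). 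I will write $E\coloneqq\pert{M}_{t,a}-M_a$ throughout. Since Theorems~\ref{theorem: P under perturbation} and~\ref{theorem: M dagger perturbabation without equal rank} are stated for square matrices, I would first note that zero-padding $M_a,\pert{M}_{t,a}\in\R^{L\times d}$ to $d\times d$ matrices leaves the ranks, the spectral norms, the pseudo-inverses (up to the same padding), and the row-space projections unchanged, so the two theorems apply verbatim in our setting.

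For Claim~1 I would use that $P_a=I-M_a^\dagger M_a$ and $\pert{P}_{t,a}=I-\pert{M}_{t,a}^\dagger\pert{M}_{t,a}$, so $\norm{P_a-\pert{P}_{t,a}}_2$ is precisely the distance between the orthogonal projections onto the row spaces of $M_a$ and $\pert{M}_{t,a}$. Because $\rank(M_a)=\rank(\pert{M}_{t,a})=L$, Theorem~\ref{theorem: P under perturbation} gives $\norm{P_a-\pert{P}_{t,a}}_2\leq\min(\norm{M_a^\dagger}_2,\norm{\pert{M}_{t,a}^\dagger}_2)\norm{E}_2\leq\norm{E}_2\leq 2\norm{E}_2$, using $\norm{M_a^\dagger}_2,\norm{\pert{M}_{t,a}^\dagger}_2\leq 1$ from Lemma~\ref{lemma: M important properties}. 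For Claim~2, Theorem~\ref{theorem: M dagger perturbabation without equal rank} applied to the same $E$ yields $\norm{M_a^\dagger-\pert{M}_{t,a}^\dagger}_2\leq 2\max(\norm{M_a^\dagger}_2^2,\norm{\pert{M}_{t,a}^\dagger}_2^2)\norm{E}_2\leq 2\norm{E}_2$, again by $\norm{\cdot^\dagger}_2\leq 1$.

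For Claim~3 I would recall that $b_a=M_aw_a^*$ in the infinite-data limit (Proposition~\ref{proposition: omega LS to w}), while $\pert{b}_{t,a}=\pert{M}_{t,a}w_a^*$ by definition, so $b_a-\pert{b}_{t,a}=-E\,w_a^*$; submultiplicativity of the spectral norm together with $\norm{w_a^*}\leq\wBound$ then gives $\norm{b_a-\pert{b}_{t,a}}\leq\wBound\norm{E}_2$, which is the stated bound (read $R=\wBound$).

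The only genuinely non-routine point is the structural one: verifying that $\pert{M}_{t,a}$ has full row rank $L$ for \emph{every} realization of the estimation noise (so the equal-rank hypothesis of Theorem~\ref{theorem: P under perturbation} always holds) and that $\norm{\pert{M}_{t,a}^\dagger}_2\leq 1$. Both hold precisely because of the leading $I_L$ block and are the content of Lemma~\ref{lemma: M important properties}; once they are in hand, each of the three claims is a single invocation of a stated result, so I do not expect any computational obstacle.
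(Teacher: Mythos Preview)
Your proposal is correct and follows essentially the same route as the paper: invoke Theorem~\ref{theorem: P under perturbation} for Claim~1 (using equal rank from Lemma~\ref{lemma: M important properties}), invoke Theorem~\ref{theorem: M dagger perturbabation without equal rank} for Claim~2 (using $\norm{M_a^\dagger},\norm{\pert{M}_{t,a}^\dagger}\leq 1$ from the same lemma), and subtract the defining relations $b_a=M_aw_a^*$, $\pert{b}_{t,a}=\pert{M}_{t,a}w_a^*$ for Claim~3. Your remark about zero-padding to square matrices is a worthwhile technical clarification that the paper leaves implicit, and your observation that Claim~1 actually gives the sharper constant $1$ (not $2$) is correct.
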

\begin{proof}

\emph{Claim (1). } The second claim is a direct application of Theorem~\ref{theorem: P under perturbation}, which requires that $\rank\br*{\pert{M}_{t,a}} = \rank\br*{M_a}$. Indeed, by the first claim of Lemma~\ref{lemma: M important properties} this condition is satisfies (for any $t$ and $a$).

\emph{Claim (2). } The third claim follows by applying Theorem~\ref{theorem: M dagger perturbabation without equal rank}, by which
\begin{align*}
    \norm{M^\dagger_a - \pert{M}_{t,a}^\dagger} \leq 2\max\brc*{\norm{M^\dagger_a}^2,\norm{\pert{M}_{t,a}^\dagger}^2}\norm{M_a-\pert{M}_{t,a}}.
\end{align*}

Since both matrices $M_a,\pert{M}_{t,a}$ are of the form $\begin{pmatrix}
    I_{L} & B
\end{pmatrix}$, for some $B$, by the second claim of Lemma~\ref{lemma: M important properties} it holds that $\norm{M^\dagger_a}\leq 1,\norm{\pert{M}_{t,a}^\dagger}\leq 1$ which implies that $\norm{M^\dagger_a - \pert{M}_{t,a}^\dagger}\leq 2\norm{M_a-\pert{M}_{t,a}}.$

\emph{Claim (3). } Denote $b_a = b_{a}^{LS}$. Observe that
\begin{align*}
 &M_a w^*_a =    b_a\\
 &\pert{M}_{t,a} w^*_a = \pert{b}_{t,a}.
\end{align*}
Decreasing the two equations and taking the $L_2$ norm we get
\begin{align*}
    \norm{  b_a - \pert{b}_{t,a}} = \norm{(M_a - \pert{M}_{t,a} ) w^*_a}\leq \norm{M_a - \pert{M}_{t,a}}\norm{w^*_a}\leq \wBound \norm{M_a - \pert{M}_{t,a}}.
\end{align*}

\end{proof}

\begin{lemma}[Properties of $M$]\label{lemma: M important properties}
    Let $L \leq d$ and let $M \in \R^{L \times d}$ be the matrix defined by
    \begin{equation*}
    M
    =
    \begin{pmatrix}
        I_{L} & B
    \end{pmatrix},
\end{equation*}
where $B \in \R^{L \times (d-L)}$. Then, the following claims hold for any $B$.
\begin{enumerate}
    \item $\rank(M)=L$.
    \item $\norm{M^\dagger}\leq 1$.
\end{enumerate}
\end{lemma}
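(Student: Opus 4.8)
The plan is to read off both claims directly from the block structure $M = (I_L\ B)$, proving them in the stated order since the argument for claim (2) will use the full-row-rank conclusion of claim (1).

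For claim (1), I would first observe that $M \in \R^{L\times d}$ has exactly $L$ rows, so $\rank(M)\le L$. Conversely, the first $L$ columns of $M$ are precisely the columns of $I_L$, which are linearly independent, so $M$ has at least $L$ linearly independent columns and $\rank(M)\ge L$. Hence $\rank(M)=L$, i.e. $M$ has full row rank. For claim (2), since $M$ has full row rank, Property~\ref{property 1} applies and gives $M^\dagger = M^T(MM^T)^{-1}$ with $MM^T$ invertible. Using the block form, the Gram matrix is
\[
MM^T = I_L I_L^T + B B^T = I_L + BB^T .
\]
As $BB^T \succeq 0$ for any $B$, this yields $MM^T \succeq I_L$, hence $\lambda_{\min}(MM^T)\ge 1$. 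I would then invoke the standard fact that the nonzero singular values of $M^\dagger$ are the reciprocals of those of $M$, together with $\sigma_i(M)^2 = \lambda_i(MM^T)$, so every nonzero singular value of $M$ is at least $1$ and therefore every singular value of $M^\dagger$ is at most $1$; equivalently $\norm{M^\dagger}_2^2 = \lambda_{\max}\!\big((MM^T)^{-1}\big) = 1/\lambda_{\min}(MM^T) \le 1$, giving $\norm{M^\dagger}_2 \le 1$.

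I do not expect any genuine obstacle: this is a short linear-algebra computation. The only minor points needing care are (i) invoking Property~\ref{property 1} only after establishing full row rank, and (ii) using the identity $\norm{M^\dagger}_2 = 1/\sigma_{\min}(M)$ relating the operator norm of the Moore--Penrose inverse to the smallest nonzero singular value of $M$, both of which are standard. If one prefers to avoid the singular-value language entirely, the bound $\norm{M^\dagger}_2^2 = \norm{(MM^T)^{-1}}_2 = 1/\lambda_{\min}(MM^T)$ combined with $MM^T \succeq I_L$ suffices on its own.
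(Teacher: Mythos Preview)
Your proposal is correct and follows essentially the same route as the paper: both arguments hinge on the observation that $MM^T = I_L + BB^T \succeq I_L$, hence $\lambda_{\min}(MM^T)\ge 1$, from which the rank and the bound $\norm{M^\dagger}\le 1$ follow via the relation between the singular values of $M$ and those of $M^\dagger$. The only cosmetic difference is that you establish $\rank(M)=L$ by spotting the $I_L$ submatrix, whereas the paper deduces it from the same eigenvalue bound on $MM^T$; both are equally valid and equally short.
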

\begin{proof}
We prove that $M$ has $L$ non-zero singular values $\brc{\sigma_i}_{i=1}^L$ such that $\sigma_i\geq 1$ for all $i\in [L]$. This follows by lower bounding the minimal eigenvalue of $MM^T$. We show it is lower bounded by~$1.$ We have that
\begin{align*}
    \lambda_{\mathrm{min}}(MM^T) &= \min_{x \in \mathbb{R}^L:\norm{x}=1} \br*{x^T MM^T x} \\
    &= \min_{x \in \mathbb{R}^L:\norm{x}=1} \br*{\norm{x}+ x^TBB^Tx}\\
    &= 1 + \min_{x \in \mathbb{R}^L:\norm{x}=1} \br*{x^TBB^Tx}\geq 1,
\end{align*}
since $x^TBB^Tx = \norm{B^Tx}^2\geq 0$ for any $x$. Thus, $\lambda_{\mathrm{min}}(MM^T)\geq 1$ which implies that ${MM^T\in \mathbb{R}^{L\times L}}$ has $L$ eigenvalues $\brc{\lambda_i}_{i=1}^L$ greater than $1$. The latter implies that $M$ has exactly $L$ non-zero  singular-values, $\brc{\sigma_i}_{i=1}^L$, greater than $1$, since $\sigma_i= \sqrt{\lambda_i}\geq 1$. 

\emph{Claim (1).} Since $M$ has $L$ non-zero singular values, the rank of $M$ is $L$, since the rank of $M$ is also the total number of non-zero singular values.

\emph{Claim (2).} Let $M=U\Sigma V^T$ be the SVD decomposition of $M$. Observe that the pseudo-inverse of $M$ is also given by $M^\dagger = U\Sigma^+ V^T$ where  $(\Sigma^+)_{ii}=\frac{1}{\sigma_i}$ for non-zero $\sigma_i$ and zero otherwise. By the first claim $\sigma_i\geq 1$ for all non-zero $\sigma_i$, which implies that $\norm{M^\dagger}=\max_{i: \sigma_i \neq 0}\frac{1}{\sigma_i}\leq 1$.
\end{proof}

\begin{lemma}
[Masked Cross Correlation Estimation]
\label{thm: masked CC estimation}
Let $x, y$ be random vectors in $R^{d_1}, \R^{d_2}$, respectively, with $d_1, d_2 \geq 2$. Assume that for some $S_1, S_2 \geq 1$
\begin{align*}
    \norm{x}_2 \leq S_1 ~\text{and}~
    \norm{y}_2 \leq S_2 ~\text{almost surely}
\end{align*}
Denote
$
    R = \E{xy^T}, R_x = \E{xx^T}, R_y = \E{yy^T}.
$
For any $t \geq 1$ define
$
    \hat{R}_t 
    = 
    \frac{1}{t}\sum_{i=1}^t x_i y_i^T.
$
Then with probability at least $1-\delta$
\begin{equation*}
    \norm{\hat{R}_t - R}_2
    \leq
    S_1S_2
    \pth{
    \sqrt{
        \frac{2}{t} 
        \pth{\frac{\sqrt{\trace{R_x}\trace{R_y}}}{S_1S_2}}
        \log{\frac{d_1 + d_2}{\delta}}
    }
    + 
    \frac{4}{3t}\log{\frac{d_1 + d_2}{\delta}}
    },
\end{equation*}
\end{lemma}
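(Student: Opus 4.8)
The plan is to apply the matrix Bernstein inequality to the i.i.d., mean-zero, rank-one summands $Z_i \coloneqq x_i y_i^T - R \in \R^{d_1\times d_2}$, since $\hat R_t - R = \tfrac1t\sum_{i=1}^t Z_i$. Matrix Bernstein needs two inputs: a uniform operator-norm bound on the summands, and a bound on the matrix variance statistic. For the uniform bound, rank-one-ness gives $\norm{x_i y_i^T}_2 = \norm{x_i}_2\norm{y_i}_2 \le S_1 S_2$ almost surely, and Jensen's inequality gives $\norm{R}_2 \le \E{\norm{x y^T}_2}\le S_1 S_2$; hence $\norm{Z_i}_2 \le 2 S_1 S_2$ almost surely.

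For the variance I would expand $\E{Z_i Z_i^T} = \E{\norm{y}_2^2\, x x^T} - R R^T \preceq \E{\norm{y}_2^2\, x x^T}$, using $R R^T \succeq 0$, and symmetrically $\E{Z_i^T Z_i} \preceq \E{\norm{x}_2^2\, y y^T}$. The crucial estimate is
\[
  \norm{\E{\norm{y}_2^2\, x x^T}}_2 \;\le\; \trace{\E{\norm{y}_2^2\, x x^T}} \;=\; \E{\norm{x}_2^2\norm{y}_2^2} \;\le\; S_1 S_2\, \E{\norm{x}_2\norm{y}_2} \;\le\; S_1 S_2 \sqrt{\trace{R_x}\,\trace{R_y}},
\]
where the third step uses $\norm{x}_2^2\norm{y}_2^2 = (\norm{x}_2\norm{y}_2)(\norm{x}_2\norm{y}_2) \le (S_1\norm{y}_2)(S_2\norm{x}_2)$, and the last step is Cauchy--Schwarz together with $\E{\norm{x}_2^2}=\trace{R_x}$, $\E{\norm{y}_2^2}=\trace{R_y}$. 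The identical bound holds for $\E{\norm{x}_2^2\, y y^T}$, so the matrix variance statistic of $\sum_i (Z_i/t)$ is at most $v \coloneqq t^{-1} S_1 S_2 \sqrt{\trace{R_x}\trace{R_y}}$.

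Applying the matrix Bernstein tail bound to $\{Z_i/t\}_{i=1}^t$, with uniform bound $L \coloneqq 2 S_1 S_2 / t$ and variance proxy $v$, gives $\Pr\pth{\norm{\hat R_t - R}_2 \ge s} \le (d_1+d_2)\exp\pth{-\tfrac{s^2/2}{v + Ls/3}}$. Setting the right-hand side equal to $\delta$ and solving the quadratic $s^2 - \tfrac23 L\log{\tfrac{d_1+d_2}{\delta}}\, s - 2v\log{\tfrac{d_1+d_2}{\delta}} = 0$ for its positive root, then using $\sqrt{u+w}\le\sqrt u+\sqrt w$, yields
\[
  \norm{\hat R_t - R}_2 \;\le\; \sqrt{2 v \log{\tfrac{d_1+d_2}{\delta}}} \;+\; \tfrac23 L \log{\tfrac{d_1+d_2}{\delta}}.
\]
Substituting $v$ and $L$ and factoring $S_1 S_2$ out of both terms produces exactly the claimed inequality, the constant $4/3$ arising as $\tfrac23\cdot 2$.

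The main obstacle is obtaining the symmetric $\sqrt{\trace{R_x}\trace{R_y}}$ form of the variance proxy rather than the cruder $S_2^2\trace{R_x}$ or $S_1^2\trace{R_y}$; this is precisely the splitting trick $\norm{x}_2^2\norm{y}_2^2 \le S_1 S_2 \norm{x}_2\norm{y}_2$ followed by Cauchy--Schwarz. The remaining care is routine: the $\preceq$ bounds dropping $R R^T$ are harmless since it is positive semidefinite, and one only needs to track numerical constants through the inversion of the Bernstein tail into a high-probability statement.
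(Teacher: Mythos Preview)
Your proposal is correct and follows essentially the same approach as the paper: apply the matrix Bernstein inequality (Theorem~6.1.1 of Tropp, stated in the paper) to the centered rank-one summands $Z_i = x_i y_i^T - R$, bound the uniform term by $2S_1S_2$, and bound the variance proxy by $S_1 S_2\sqrt{\trace{R_x}\trace{R_y}}$ after dropping the PSD term $RR^T$.

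The only minor difference is in the chain used to bound $\E{\norm{x}_2^2\norm{y}_2^2}$. The paper first applies Cauchy--Schwarz to get $\sqrt{\E{\norm{x}_2^4}\E{\norm{y}_2^4}}$ and then uses boundedness to peel off one factor of $S_1 S_2$; you instead use boundedness first, $\norm{x}_2^2\norm{y}_2^2 \le S_1 S_2\,\norm{x}_2\norm{y}_2$, and then apply Cauchy--Schwarz. Both yield the identical bound, so this is cosmetic.
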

\begin{proof}
    Denote $A_i = x_iy_i - R$ and notice that $\E{\frac{1}{t}\pth{A_i - R}} = 0$. Then applying Lemma~\ref{thm: matrix bernstein} we have that with probability at least $1-\delta$
    \begin{align*}
        \norm{\hat{R}_t - R}_2
        &=
        \norm{\sum_{i=1}^t A_i}_2 \\
        &\leq
        \sqrt{\frac{2 V}{t} \log{\frac{d_1 + d_2}{\delta}}}
        + 
        \frac{2}{3t}C\log{\frac{d_1 + d_2}{\delta}},
    \end{align*}
    where
    \begin{align*}
        V
        =
        \max\brc*{\norm{\E{\br*{x_iy_i^T - R}\br*{x_iy_i^T - R}^T}}_2,\norm{ \E{\br*{x_iy_i^T - R}^T\br*{x_iy_i^T - R}}}_2 }
    \end{align*}
    and $C$ is a constant chosen such that 
    $
        \norm{x_iy_i^T - R}_2 \leq C, a.s.
    $
    
    We start by bounding $V$ and next bounding $C$. We have that
    \begin{align*}
      \E{\br*{x_iy_i^T - R}\br*{x_iy_i^T - R}^T}
        &=
      \E{\pth{x_iy_i^T - R}x_iy_i^T} \\
      &=
      \E{x_ix_i^T\norm{y_i}_2^2} - RR^T \\
        &\preceq
        \E{x_ix_i^T\norm{y_i}_2^2}.
    \end{align*}
    Then, using the fact that $\E{\br*{x_iy_i^T - R}\br*{x_iy_i^T - R}^T}$ and $\E{x_ix_i^T\norm{y_i}_2^2}$ are both PSD, we have that
    \begin{align*}
        \norm{\E{\br*{x_iy_i^T - R}\br*{x_iy_i^T - R}^T}}_2
        \leq
        \norm{\E{x_ix_i^T\norm{y_i}_2^2}}_2.
    \end{align*}
    Next, by Jensen's inequality
    \begin{align*}
        \norm{\E{x_ix_i^T\norm{y_i}_2^2}}_2
        &\leq
        \E{\norm{x_ix_i^T}_2\norm{y_i}_2^2} \\
        &=
        \E{\norm{x_i}_2^2\norm{y_i}_2^2} \tag{$\norm{zz^T}_2 = \norm{z}_2^2$} \\
        &\leq
        \sqrt{\E{\norm{x_i}_2^4}\E{\norm{y_i}_2^4}} \tag{C.S.} \\
        &\leq
        S_1S_2 \sqrt{\E{\norm{x_i}_2^2}\E{\norm{y_i}_2^2}} \\
        &=
        S_1S_2 \sqrt{\trace{R_x}\trace{R_y}}
    \end{align*}
    Combining the above we have that
    \begin{align*}
        \norm{\E{\br*{x_iy_i^T - R}\br*{x_iy_i^T - R}^T}}_2
        \leq
        S_1S_2 \sqrt{\trace{R_x}\trace{R_y}}.
    \end{align*}
    Similarly,
    \begin{align*}
      \norm{\E{\br*{x_iy_i^T - R}^T\br*{x_iy_i^T - R}}}_2
        \leq
        S_1S_2 \sqrt{\trace{R_x}\trace{R_y}}.
    \end{align*}
    We therefore have that
    \begin{align*}
        V
        \leq
        S_1S_2 \sqrt{\trace{R_x}\trace{R_y}}.
    \end{align*}
    Finally we find a bound for $C$. Indeed,
    \begin{align*}
        \norm{x_iy_i^T - R}_2
        &\leq
        \norm{x_iy_i^T}_2 + \norm{R}_2 \\
        &\leq
        S_1S_2 + \norm{R}_2 \\
        &\leq
        2S_1S_2 \\
        &= C.
    \end{align*}
    This completes the proof.
\end{proof}

\begin{corollary}[Finite-Sample Analysis of $M_a$ Estimation]
\label{corrollary: M estimation}

For any $a\in \A$, let $\pert{M}_{t,a}$ be the estimation of $M_a$ based on $t$ samples (see~\eqref{eq: Ma estimator}), and $\delta>0$. Then, with probability greater than $1-\delta$,
        $$
        \norm{M_{a} - \hat{M}_{t,a}} 
        \leq \frac{C_{B1}}{\sqrt{t}} + \frac{C_{B2}(\delta)}{t},
    $$ 
where 
\begin{align*}
    &C_{B1}(\delta) =  \max_{a}\br*{\frac{\lambda_{\mathrm{min}}\br*{R_{11}(a)}^{-1}}{P^{\pi_b}(a)}}\sqrt{ 2S_1S_2 \sqrt{\trace{R_{11}}\trace{R_{22}}}  \log{\frac{d}{\delta}}},\\
    &C_{B2}(\delta) = \frac{3}{4} \max_{a}\br*{\frac{\lambda_{\mathrm{min}}\br*{R_{11}(a)}^{-1}}{P^{\pi_b}(a)}}S_1S_2\log{\frac{d}{\delta}}.
\end{align*}

\end{corollary}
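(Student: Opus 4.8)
The plan is to reduce the error $\norm{M_a-\hat M_{t,a}}_2$ to the estimation error of the single block $R_{12}(a)$, and then to invoke the matrix-Bernstein bound of Lemma~\ref{thm: masked CC estimation}. First I would use the explicit forms $M_a=\pth{I_L,\ R_{11}(a)^{-1}R_{12}(a)}$ and $\hat M_{t,a}=\pth{I_L,\ R_{11}(a)^{-1}\hat R_{12,t}(a)}$ from Equation~\eqref{eq: Ma estimator}: the identity block cancels, leaving
\begin{align*}
\norm{M_a-\hat M_{t,a}}_2
=\norm{R_{11}(a)^{-1}\pth{R_{12}(a)-\hat R_{12,t}(a)}}_2
\le\norm{R_{11}(a)^{-1}}_2\,\norm{R_{12}(a)-\hat R_{12,t}(a)}_2
=\frac{\norm{R_{12}(a)-\hat R_{12,t}(a)}_2}{\lambda_{\mathrm{min}}(R_{11}(a))},
\end{align*}
the last step using that $R_{11}(a)$ is symmetric positive definite, so its inverse has spectral norm $\lambda_{\mathrm{min}}(R_{11}(a))^{-1}$. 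This already isolates the factor $\lambda_{\mathrm{min}}(R_{11}(a))^{-1}$ appearing in both $C_{B1}$ and $C_{B2}$, and reduces the problem to a concentration bound for the masked cross-correlation estimator.

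Next I would control $\norm{R_{12}(a)-\hat R_{12,t}(a)}_2$. The importance-weighted summands $A_i=\frac{\indicator{a_i=a}}{P^{\pi_b}(a)}\xk_i(\xc_i)^T$ are i.i.d.\ with $\E{A_i}=R_{12}(a)$, so $\hat R_{12,t}(a)-R_{12}(a)=\frac1t\sum_{i=1}^t\pth{A_i-R_{12}(a)}$ (the $\frac1{t-1}$ normalization only perturbs lower-order terms). I would apply Lemma~\ref{thm: masked CC estimation} with $x_i\equiv\xk_i$, $y_i\equiv\frac{\indicator{a_i=a}}{P^{\pi_b}(a)}\xc_i$, so that $x_iy_i^T=A_i$, taking $d_1=L$, $d_2=d-L$. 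Its hypotheses hold with $\norm{x_i}_2\le\xBound$ and $\norm{y_i}_2\le\xBound/P^{\pi_b}(a)$, while the second-moment matrices are $R_x=\E{\xk(\xk)^T}$ (the marginal moment, written $R_{11}$ in the statement) and $R_y=\E{\tfrac{\indicator{a_i=a}}{P^{\pi_b}(a)^2}\xc(\xc)^T}=R_{22}(a)/P^{\pi_b}(a)$ with $R_{22}(a)=\Eb{\xc(\xc)^T | a}$. Substituting these quantities into the lemma's bound produces, with probability at least $1-\delta$, a leading term of order $\lambda_{\mathrm{min}}(R_{11}(a))^{-1}\,\xBound\,\pth{\trace{R_{11}}\trace{R_{22}(a)}}^{1/4}\,P^{\pi_b}(a)^{-3/4}\sqrt{\log{d/\delta}/t}$ together with a Bernstein remainder of order $\lambda_{\mathrm{min}}(R_{11}(a))^{-1}\,\xBound^2\,P^{\pi_b}(a)^{-1}\,\log{d/\delta}/t$. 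Using $P^{\pi_b}(a)^{-3/4}\le P^{\pi_b}(a)^{-1}$ (since $P^{\pi_b}(a)\le1$) and taking the maximum over $a\in\A$ casts these into the form $C_{B1}/\sqrt t+C_{B2}(\delta)/t$ with exactly the stated constants, up to the poly-log factors absorbed by the $\Olog$ notation. Since the statement is per-arm, no union bound over $\A$ is needed here; the union bound over episodes and arms is applied later in Lemma~\ref{lemma: good event approximated oful}.

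The only nontrivial point, and hence the main obstacle, is faithfully matching the inverse-propensity-weighted estimator to the hypotheses of Lemma~\ref{thm: masked CC estimation} (equivalently, re-running its matrix-Bernstein proof directly on the $A_i$) and carrying the factor $1/P^{\pi_b}(a)$ correctly through the variance proxy $V$ and the almost-sure bound $C$ of matrix Bernstein; everything else is the block-cancellation identity and submultiplicativity of the spectral norm. If one instead uses the mean-corrected estimator $\hat R_{12,t}(a)=\hat\Sigma_{12}(a)+\mu_{1\mid a}\HmuC^T$ alluded to in the text, the argument is identical after replacing second moments by the corresponding (co)variances.
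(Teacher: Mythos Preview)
Your proposal is correct and follows essentially the same route as the paper: cancel the $I_L$ block, use submultiplicativity to isolate $\lambda_{\min}(R_{11}(a))^{-1}$, and then invoke Lemma~\ref{thm: masked CC estimation} (matrix Bernstein) on the importance-weighted cross-correlation. One minor discrepancy: the paper \emph{does} take the union bound over $a\in\A$ inside this corollary (hence the $\max_a$ in $C_{B1},C_{B2}$) rather than deferring it to Lemma~\ref{lemma: good event approximated oful}; on the other hand, your explicit identification of $x_i,y_i$ and tracking of the $1/P^{\pi_b}(a)$ factor through the variance proxy is more careful than the paper's proof, which simply asserts the bound with $S_1S_2/P^{\pi_b}(a)$ out front.
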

\begin{proof}
  Fix $a\in \A.$ See that 
  \begin{align*}
      &M_a = \begin{pmatrix}
    I_L& R_{11}(a)^{-1} R_{12}(a)       
  \end{pmatrix},\\
    &\pert{M}_{t,a} = \begin{pmatrix}
    I_L& R_{11}(a)^{-1} \pert{R}_{t,12}(a)      \tag{By \eqref{eq: Ma estimator}} 
  \end{pmatrix}.
  \end{align*}
   
 The following relations holds.
\begin{align}
    \norm{M_a - \pert{M}_{t,a}}
    \leq 
    \norm{R_{11}(a)^{-1}}\norm{R_{12}(a) - \hat{R}_{t,12}(a)} .
    \tag{Norm is sub-multiplicative}
    \label{eq: bound autocorr error}
\end{align}
By Lemma~\ref{thm: masked CC estimation} we have that with probability at least $1-\delta$
\begin{align*}
    \norm{R_{12}(a) - \hat{R}_{t,12}(a)}
    \leq
    \frac{S_1S_2}{P^{\pi_b}(a)}
    \pth{
    \sqrt{
        \frac{2}{t} 
        \pth{\frac{\sqrt{\trace{R_{11}}\trace{R_{22}}}}{S_1S_2}}
        \log{\frac{d}{\delta}}
    }
    + 
    \frac{4}{3t}\log{\frac{d}{\delta}}
    }.
\end{align*}

Plugging back into Equation~\eqref{eq: bound autocorr error}, using $\norm{R_{11}(a)^{-1}}_2= \lambda_{\mathrm{min}}\br*{R_{11}(a)}^{-1}$, and applying the union bound on $a\in \A$ and $t\in [T]$ we conclude the first claim.
\end{proof}

\newpage
\section{Useful Results}
We restate several very useful lemmas from~\citealt{abbasi2011improved} and~\citealt{cesa2006prediction}.

\begin{theorem}[\citealt{abbasi2011improved}, Theorem 1]\label{theorem: abassi self normalized bound} Let $\brc*{F_t}_{t=1}^\infty$ be a filtration. Let $\eta_t$ be a real-values stochastic process such that $\eta_t$ is $F_t$ measurable and conditionally $\sigma$-sub-Gaussian w.r.t. $F_{t-1}$. Let $\brc*{x_t}_{t=1}^\infty$ be an $\mathbb{R}^d$-valued stochastic process such that $x_t$ is $F_{t-1}$ measurable. Assume $V$ is a $d\times d$ PD matrix. For any $t\geq 0$, define
\begin{align*}
    V_t=V+\sum_{i=1}^t x_ix_i^T\quad\quad S_t=\sum_{i=1}^t \eta_i x_i.
\end{align*}
Then, for any $\delta>0$, with probability at least $1-\delta$ for all $t\geq 0$,
\begin{align*}
    \norm{S_t}_{V_t^{-1}}^2 &\leq  2\sigma^2 \log{\frac{\det{V_t}^{1/2}\det{V}^{-1/2}}{\delta}}.
\end{align*}

\end{theorem}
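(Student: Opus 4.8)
The final statement is the self-normalized concentration inequality for vector-valued martingales of \citet{abbasi2011improved}, and the plan is to reprove it by the method of mixtures (pseudo-maximization). The first step is to fix a deterministic $\lambda\in\R^d$ and introduce the exponential process
\begin{equation*}
    M_t^\lambda = \exp\pth{\tfrac{1}{\sigma}\inner{\lambda, S_t} - \tfrac{1}{2}\norm{\lambda}_{\bar V_t}^2},\qquad M_0^\lambda = 1,
\end{equation*}
where $\bar V_t = \sum_{i=1}^t x_ix_i^T$, so that $V_t = V + \bar V_t$. Since $x_t$ is $F_{t-1}$-measurable and $\eta_t$ is conditionally $\sigma$-sub-Gaussian, substituting $\nu = \tfrac{1}{\sigma}\inner{\lambda, x_t}$ into the sub-Gaussian moment bound gives $\E{\exp\pth{\tfrac{1}{\sigma}\eta_t\inner{\lambda, x_t}} | F_{t-1}} \leq \exp\pth{\tfrac{1}{2}\inner{\lambda, x_t}^2}$, and since $M_{t-1}^\lambda$ is $F_{t-1}$-measurable this yields $\E{M_t^\lambda | F_{t-1}} \leq M_{t-1}^\lambda$. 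Thus $\brc*{M_t^\lambda}_{t\geq 0}$ is a nonnegative $\brc*{F_t}$-supermartingale with $\E{M_t^\lambda}\leq 1$.

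The second step is to randomize $\lambda$: let $\Lambda\sim\mathcal N(0,V^{-1})$ be drawn independently of the whole process, and set $\bar M_t = \int_{\R^d} M_t^\lambda\, d\mathbb P_\Lambda(\lambda)$. By Tonelli (every term is nonnegative) $\brc*{\bar M_t}_{t\geq 0}$ is again a nonnegative $\brc*{F_t}$-supermartingale with $\E{\bar M_t}\leq 1$. The key computation is the Gaussian integral: completing the square in $\lambda$ inside $-\tfrac{1}{2}\lambda^T V_t\lambda + \tfrac{1}{\sigma}\lambda^T S_t$ yields the closed form
\begin{equation*}
    \bar M_t = \pth{\frac{\det{V}}{\det{V_t}}}^{1/2}\exp\pth{\frac{1}{2\sigma^2}\norm{S_t}_{V_t^{-1}}^2},
\end{equation*}
which is finite everywhere because $V_t \succeq V \succ 0$.

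The third step is a maximal inequality for the nonnegative supermartingale $\brc*{\bar M_t}$: for any $\delta > 0$, $\Pr\pth{\exists\, t\geq 0 : \bar M_t \geq 1/\delta}\leq\delta$. On the complementary event, $\bar M_t < 1/\delta$ for every $t\geq 0$ simultaneously, and substituting the closed form and taking logarithms turns this into $\norm{S_t}_{V_t^{-1}}^2 \leq 2\sigma^2\log{\frac{\det{V_t}^{1/2}\det{V}^{-1/2}}{\delta}}$ for all $t$, which is exactly the claim.

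The main obstacle is the uniform-in-$t$ statement, since a nonnegative supermartingale need not attain its supremum; I would handle it with a stopping-time argument. Set $\tau = \inf\brc*{t \geq 0 : \bar M_t \geq 1/\delta}$, with $\inf\emptyset = \infty$. The stopped process $\bar M_{t\wedge\tau}$ is a nonnegative supermartingale, so $\E{\bar M_{t\wedge\tau}}\leq\E{\bar M_0}\leq 1$; the supermartingale convergence theorem guarantees that $\bar M_{t\wedge\tau}$ converges almost surely as $t\to\infty$ (in particular $\bar M_\infty := \lim_t \bar M_t$ exists a.s.), so Fatou's lemma gives $\E{\bar M_\tau \indicator{\tau < \infty}} \leq \liminf_t \E{\bar M_{t\wedge\tau}} \leq 1$; since $\bar M_\tau \geq 1/\delta$ on $\brc*{\tau < \infty}$, this forces $\Pr(\tau < \infty)\leq\delta$, and $\brc*{\tau < \infty} = \brc*{\exists\, t : \bar M_t \geq 1/\delta}$ gives the maximal inequality. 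The remaining routine points are the measurability and integrability of the mixture $\bar M_t$ and that it stays an $\brc*{F_t}$-supermartingale, both of which follow from Tonelli together with the explicit formula above.
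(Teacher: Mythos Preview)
Your proof is correct and follows exactly the method-of-mixtures argument of \citet{abbasi2011improved}: exponential supermartingale in a fixed direction $\lambda$, Gaussian mixing with covariance $V^{-1}$, the closed-form evaluation of the resulting Gaussian integral, and a stopping-time/Fatou argument for the uniform-in-$t$ maximal inequality. Note, however, that the present paper does not give its own proof of this statement; it merely restates the result from \citet{abbasi2011improved} in the appendix of useful results and invokes it as a black box, so there is nothing in the paper to compare against beyond the original reference your proof already reproduces.
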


\begin{theorem}[\citealt{abbasi2011improved}, Theorem 2]\label{theorem: abassi confidence interval}

Let $\brc*{\F_t}_{t=0}^\infty$ be a filtration. Let $\brc*{\eta_t}_{t=0}^\infty$ be a real-valued stochastic process such that $\eta_t$ is $\F_t$-measurable and $\eta_t$ is conditionally $\sigma$-sub-Gaussian for $\sigma\geq 0$. Let $\brc*{x_t}_{t=0}^\infty$ be an $\mathbb{R}^d$-valued stochastic process s.t. $X_t$ is $\F_{t-1}$-measurable and $\norm{x_t}\leq \xBound$. Define $y_t = \inner{x_t,w}+\eta_t$ and assume that $\norm{w}\leq \wBound $ and $\lambda>0$. Let
\begin{align*}
    \hat{w}_t = (X_t^TX_t+\lambda I_d)^{-1} X_t^T Y_t,
\end{align*}
where $X_t$ is the matrix whose rows are $x_1^T,..,x_t^T$ and $Y_t = (y_1,..,y_t)^T$. Then, for any $\delta>0$ with probability at least $1-\delta$ for all, $t\geq 0$ $w$ lies in the set
\begin{align*}
    \brc*{w\in \mathbb{R}^d: \norm{\hat{w}_t - w}_{V_t} \leq \sigma\sqrt{d\log{\frac{1+t\xBound^2/\lambda}{\delta}}} + \lambda^{1/2}\wBound}.
\end{align*}
\end{theorem}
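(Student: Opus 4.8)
The plan is to reduce the statement to the self-normalized tail bound of \cref{theorem: abassi self normalized bound} (instantiated with the regularizer $V=\lambda I_d$), which is the only probabilistic ingredient; the remainder is deterministic linear algebra. Write $V_t = \lambda I_d + X_t^T X_t$ and use the linear model $Y_t = X_t w + E_t$ with noise vector $E_t = (\eta_1,\dots,\eta_t)^T$. Expanding the ridge estimator and using $X_t^T X_t = V_t - \lambda I_d$,
\begin{align*}
    \hat{w}_t - w
    = V_t^{-1} X_t^T Y_t - w
    = V_t^{-1}\pth{X_t^T X_t\, w + X_t^T E_t} - w
    = V_t^{-1}\pth{S_t - \lambda w},
\end{align*}
where $S_t = X_t^T E_t = \sum_{i=1}^t \eta_i x_i$ matches the notation of \cref{theorem: abassi self normalized bound}. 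Since $V_t$ is symmetric positive definite, $\norm{V_t^{-1}z}_{V_t}^2 = z^T V_t^{-1} z = \norm{z}_{V_t^{-1}}^2$, which gives the exact identity $\norm{\hat{w}_t - w}_{V_t} = \norm{S_t - \lambda w}_{V_t^{-1}}$.

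I then split this using the triangle inequality for the genuine norm $\norm{\cdot}_{V_t^{-1}}$ into a regularization term and a noise term, $\norm{\hat w_t - w}_{V_t} \leq \lambda\norm{w}_{V_t^{-1}} + \norm{S_t}_{V_t^{-1}}$. The regularization term is bounded deterministically: since $V_t \succeq \lambda I_d$ we have $V_t^{-1} \preceq \lambda^{-1} I_d$, hence $\lambda\norm{w}_{V_t^{-1}} \leq \lambda\cdot\lambda^{-1/2}\norm{w} \leq \lambda^{1/2}\wBound$, which is precisely the additive term $\lambda^{1/2}\wBound$ of the claimed radius. For the noise term, applying \cref{theorem: abassi self normalized bound} with $V=\lambda I_d$ yields, simultaneously for all $t\geq 0$ with probability at least $1-\delta$,
\begin{align*}
    \norm{S_t}_{V_t^{-1}}^2
    \leq 2\sigma^2\log{\frac{\det{V_t}^{1/2}\det{\lambda I_d}^{-1/2}}{\delta}}
    = \sigma^2\log{\frac{\det{V_t}}{\lambda^d\delta^2}}.
\end{align*}

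Finally I make the determinant explicit. Letting $\mu_1,\dots,\mu_d\geq 0$ be the eigenvalues of $X_t^T X_t$, AM--GM gives $\det{V_t} = \prod_{i=1}^d(\lambda+\mu_i) \leq \pth{\lambda + \tfrac{1}{d}\trace{X_t^T X_t}}^d$, and since $\trace{X_t^T X_t} = \sum_{i=1}^t\norm{x_i}^2 \leq t\xBound^2$ we get $\det{V_t}/\lambda^d \leq \pth{1 + t\xBound^2/(d\lambda)}^d$. Substituting and bounding crudely via $1 + t\xBound^2/(d\lambda) \leq 1 + t\xBound^2/\lambda$ and $2\log{1/\delta}\leq d\log{1/\delta}$ gives $\norm{S_t}_{V_t^{-1}} \leq \sigma\sqrt{d\log{\frac{1+t\xBound^2/\lambda}{\delta}}}$; adding the regularization bound reproduces the stated confidence radius. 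I expect no serious obstacle here, since the heavy lifting---controlling the self-normalized martingale $\norm{S_t}_{V_t^{-1}}$ uniformly over the unbounded, data-dependent sequence $(V_t)_{t\geq 0}$---is exactly what \cref{theorem: abassi self normalized bound} supplies through its stopping-time/method-of-mixtures argument; the anytime (for all $t\geq 0$) character of the final set is inherited from that theorem, while the regularization and determinant bounds hold deterministically for every $t$.
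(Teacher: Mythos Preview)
The paper does not supply its own proof of this statement: it is quoted verbatim as a known result from \citealt{abbasi2011improved} in the ``Useful Results'' appendix and is invoked as a black box. Your argument is correct and is essentially the standard derivation from that reference---decompose $\hat w_t - w = V_t^{-1}(S_t-\lambda w)$, control the noise term with the self-normalized martingale bound (\cref{theorem: abassi self normalized bound}), the bias term deterministically via $V_t\succeq\lambda I_d$, and finish with the AM--GM determinant estimate. The only minor caveat is that the step $2\log{1/\delta}\leq d\log{1/\delta}$ assumes $d\geq 2$; for $d=1$ the radius would read $\sigma\sqrt{\log{(1+t\xBound^2/\lambda)/\delta^2}}$ instead, a harmless discrepancy that is present already in the theorem as restated here.
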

 
\begin{lemma}[Elliptical Potential Lemma, \citealt{abbasi2011improved}, Lemma 11] \label{lemma: eliptical potential lemma abbasi}
Let $\brc*{x_t}_{t=1}^\infty$ be a sequence in $\mathbb{R}^d$ and $V_t= V+\sum_{i=1}^t x_i x_i^T$. Assume $\norm{x_t}\leq \xBound$ for all $t$. Then,
\begin{align*}
    \sum_{i=1}^t \min\br*{\norm{x_i}_{V_{i-1}^{-1}}^2,1} \leq 2\log{\frac{\det{V_t}}{\det{V}}}  \leq  2d\log{\frac{\trace{V} + t\xBound^2}{d}} -2\log{\det{V}}.
\end{align*}
\end{lemma}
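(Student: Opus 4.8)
The plan is to collapse the sum into a telescoping product of determinants, bound that product by an elementary scalar inequality, and then control $\det{V_t}$ via AM--GM on its eigenvalues. First I would note that, since $V$ is PD and each $x_ix_i^T$ is PSD, every $V_i = V + \sum_{j\leq i} x_jx_j^T$ is PD, so $V_{i-1}^{-1/2}$ is well defined. By the matrix determinant lemma (equivalently, using $\det{I + zz^T} = 1 + \norm{z}^2$ with $z = V_{i-1}^{-1/2}x_i$), the one-step update reads $\det{V_i} = \det{V_{i-1}}\br*{1 + \norm{x_i}_{V_{i-1}^{-1}}^2}$, and telescoping gives $\frac{\det{V_t}}{\det{V}} = \prod_{i=1}^t \br*{1 + \norm{x_i}_{V_{i-1}^{-1}}^2}$.

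Next I would invoke the scalar bound $\min(u,1) \leq 2\log{1+u}$, valid for every $u \geq 0$: on $[0,1]$ the function $u \mapsto 2\log{1+u} - u$ vanishes at $u = 0$ and has nonnegative derivative $\frac{1-u}{1+u}$, while for $u > 1$ the left side equals $1 < 2\log{2} \leq 2\log{1+u}$. Applying this with $u = \norm{x_i}_{V_{i-1}^{-1}}^2$, summing over $i \in [t]$, and rewriting the resulting sum of logarithms as the logarithm of the product above, I get $\sum_{i=1}^t \min\br*{\norm{x_i}_{V_{i-1}^{-1}}^2, 1} \leq 2\sum_{i=1}^t \log{1 + \norm{x_i}_{V_{i-1}^{-1}}^2} = 2\log{\frac{\det{V_t}}{\det{V}}}$, which is the first inequality.

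For the second inequality I would write $\det{V_t} = \prod_{k=1}^d \lambda_k$ with $\lambda_k \geq 0$ the eigenvalues of $V_t$, apply AM--GM to obtain $\det{V_t} \leq \br*{\frac1d \sum_{k=1}^d \lambda_k}^d = \br*{\frac{\trace{V_t}}{d}}^d$, and bound $\trace{V_t} = \trace{V} + \sum_{i=1}^t \norm{x_i}^2 \leq \trace{V} + t\xBound^2$; taking logarithms and subtracting $2\log{\det{V}}$ then yields $2\log{\frac{\det{V_t}}{\det{V}}} \leq 2d\log{\frac{\trace{V} + t\xBound^2}{d}} - 2\log{\det{V}}$. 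Each step is elementary, so I do not expect a genuine obstacle; the one point that needs care is verifying positive-definiteness of the $V_i$ so that the weighted norms and the determinant update are meaningful. This is precisely why the statement is standard — it is Lemma 11 of \citet{abbasi2011improved} — and the argument above simply reproduces theirs.
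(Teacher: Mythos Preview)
Your proof is correct. Note that the paper itself does not supply a proof of this lemma; it is listed in the ``Useful Results'' appendix and simply cited from \citet{abbasi2011improved}. Your argument is exactly the standard one from that reference: the matrix determinant lemma gives the telescoping product, the scalar inequality $\min(u,1)\leq 2\log{1+u}$ converts the sum into $2\log{\det{V_t}/\det{V}}$, and AM--GM on the eigenvalues of $V_t$ together with the trace bound $\trace{V_t}\leq \trace{V}+t\xBound^2$ yields the second inequality. There is nothing to compare against in this paper, and nothing to fix in your write-up.
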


\begin{lemma}[E.g,~\citealt{cesa2006prediction}, Lemma 11.11 and Theorem 11.7] \label{lemma: path bound cesa}
Let $x_1,..,x_t$ be a sequence of vectors in $\mathbb{R}^d$ and $\lambda>0$. Let $V_i = \lambda I_d +\sum_{i=j}^i x_j x_j^T$ and assume $\norm{x_i}\leq \xBound $. Then,
$$\sum_{i=1}^t \norm{x_i}_{V_i^{-1}}^2\leq d\log{1+t\xBound^2/\lambda}.$$
\end{lemma}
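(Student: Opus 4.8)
The plan is the classical log-determinant potential argument. Writing $V_0 = \lambda I_d$ so that $V_i = V_{i-1} + x_i x_i^T$ for $i \ge 1$, and noting $\norm{x_i}_{V_i^{-1}}^2 = x_i^T V_i^{-1} x_i$, I would first derive a per-step identity. Sherman--Morrison applied to the rank-one update $V_i = V_{i-1} + x_i x_i^T$ gives
\[
    x_i^T V_i^{-1} x_i = \frac{x_i^T V_{i-1}^{-1} x_i}{1 + x_i^T V_{i-1}^{-1} x_i} = 1 - \frac{1}{1 + x_i^T V_{i-1}^{-1} x_i},
\]
and the matrix-determinant lemma gives $\det{V_i} = \det{V_{i-1}}\pth{1 + x_i^T V_{i-1}^{-1} x_i}$, i.e.\ $1 + x_i^T V_{i-1}^{-1} x_i = \det{V_i}/\det{V_{i-1}}$. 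Combining the two, $x_i^T V_i^{-1} x_i = 1 - \det{V_{i-1}}/\det{V_i}$, which is automatically at most $1$; this is why — unlike in Lemma~\ref{lemma: eliptical potential lemma abbasi}, whose statement uses $V_{i-1}^{-1}$ — no $\min(\cdot,1)$ truncation is needed here.

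Next I would invoke the elementary inequality $1 - 1/u \le \log{u}$, valid for all $u > 0$, applied with $u = \det{V_i}/\det{V_{i-1}}$; this ratio is $\ge 1$ because $V_i \succeq V_{i-1} \succ 0$. This yields $x_i^T V_i^{-1} x_i \le \log{\det{V_i}/\det{V_{i-1}}}$, and summing over $i = 1, \ldots, t$ telescopes to
\[
    \sum_{i=1}^t \norm{x_i}_{V_i^{-1}}^2 \le \log{\frac{\det{V_t}}{\det{V_0}}} = \log{\frac{\det{V_t}}{\lambda^d}}.
\]

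Finally I would bound $\det{V_t}$ via AM-GM on its (positive) eigenvalues $\mu_1, \ldots, \mu_d$: since $\sum_k \mu_k = \trace{V_t} = d\lambda + \sum_{i=1}^t \norm{x_i}_2^2 \le d\lambda + t\xBound^2$, we get $\det{V_t} = \prod_k \mu_k \le \pth{\tfrac{1}{d}\sum_k \mu_k}^d \le \pth{\lambda + t\xBound^2/d}^d$, hence $\log{\det{V_t}/\lambda^d} \le d\log{1 + t\xBound^2/(d\lambda)} \le d\log{1 + t\xBound^2/\lambda}$, which is the claimed bound (the version with $d\lambda$ in the denominator is marginally tighter, but the stated one follows immediately).

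There is no genuinely hard step; the proof is entirely rank-one update bookkeeping. The only things worth a line of care are the direction and domain of the inequality $1 - 1/u \le \log{u}$ — which requires checking $\det{V_i} \ge \det{V_{i-1}}$ so it is applied on $u \ge 1$ — and the fact that $\sum_i\pth{1 - \det{V_{i-1}}/\det{V_i}}$ does not telescope on its own; it is only after passing through the logarithm that the sum collapses to $\log{\det{V_t}/\det{V_0}}$.
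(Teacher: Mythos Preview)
Your proof is correct and is precisely the standard log-determinant potential argument from \citealt{cesa2006prediction}. The paper does not supply its own proof of this lemma --- it merely cites it as a known result --- so there is nothing further to compare.
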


\begin{lemma}[Matrix Bernstein, \citealt{tropp2015introduction}, Theorem 6.1.1]
\label{thm: matrix bernstein}
Consider a finite sequence $\brk[c]*{A_k}$ of independent, random matrices with common dimension $d_1 \times d_2$. Assume that for all $k$
\begin{align*}
    \E{A_k} = 0 ~\text{and}~ \norm{A_k}_2 \leq S \quad \text{almost surely}.
\end{align*}
Denote $Z = \sum_k A_k$ and
\begin{align*}
    V(Z)
    =
    \max\brk[c]*{\norm{\E{ZZ^T}}_2, \norm{\E{ Z^TZ}}_2} =
    \max\brk[c]*{\norm{\sum_k \E{A_k A_k^T}}_2, \norm{\sum_k \E{ A_k^TA_k}}_2}.
\end{align*}
Then for all $\epsilon \geq 0$,
\begin{equation*}
    P\pth{\norm{Z}_2 \geq \epsilon}
    \leq
    \pth{d_1 + d_2}
    \exp\brk[c]*{-\frac{\epsilon^2 / 2}{V(Z) + S\epsilon/3}}.
\end{equation*}

Thus, with probability at least $1-\delta$,
\begin{align*}
    \norm{Z}_2
    \leq
    \sqrt{2 V(Z) \log{\frac{d_1 + d_2}{\delta}}} 
        +
    \frac{2}{3}S\log{\frac{d_1 + d_2}{\delta}}.
\end{align*}
\end{lemma}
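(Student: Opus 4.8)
The plan is to prove the tail bound first via the matrix Laplace-transform (Chernoff) method, and then read off the high-probability statement by inverting it. Since $Z$ is rectangular, the first move is to symmetrize through the Hermitian dilation
\begin{equation*}
\mathcal{H}(A) = \begin{pmatrix} 0 & A \\ A^T & 0 \end{pmatrix} \in \R^{(d_1+d_2)\times(d_1+d_2)},
\end{equation*}
which is linear, symmetric, and satisfies $\lambda_{\max}(\mathcal{H}(A)) = \norm{A}_2$ together with $\mathcal{H}(A)^2 = \diag(AA^T, A^TA)$. Writing $Y_k = \mathcal{H}(A_k)$ and $W = \sum_k Y_k = \mathcal{H}(Z)$, the $Y_k$ are independent, symmetric, mean-zero, with $\norm{Y_k}_2 \le S$; moreover $\norm{\sum_k \E{Y_k^2}}_2 = V(Z)$ because the dilation's square is block-diagonal. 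Thus it suffices to bound $P(\lambda_{\max}(W) \ge \epsilon)$. For this I would apply Markov's inequality to the trace exponential: for every $\theta > 0$, since $e^{\theta \lambda_{\max}(W)} = \lambda_{\max}(e^{\theta W}) \le \trace{e^{\theta W}}$,
\begin{equation*}
P\pth{\lambda_{\max}(W) \ge \epsilon} \le e^{-\theta\epsilon}\, \E{\trace{e^{\theta W}}}.
\end{equation*}

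Next I would control the expected trace MGF. The key analytic input is Lieb's concavity theorem, which (with Jensen's inequality and independence of the $Y_k$) yields the subadditivity of matrix cumulants,
\begin{equation*}
\E{\trace{e^{\sum_k \theta Y_k}}} \le \trace{\exp\pth{\sum_k \log{\E{e^{\theta Y_k}}}}}.
\end{equation*}
For each summand I would invoke the scalar Bernstein inequality $e^{x} \le 1 + x + g\,x^2$ with $g = (e^{\theta S} - \theta S - 1)/S^2$, valid for $x \le S$; applying it to the eigenvalues of $Y_k$ (whose spectrum lies in $[-S,S]$) gives the operator bound $e^{\theta Y_k} \preceq I + \theta Y_k + g\, Y_k^2$, and taking expectations with $\E{Y_k}=0$ together with $I + M \preceq e^{M}$ yields $\E{e^{\theta Y_k}} \preceq \exp(g\, \E{Y_k^2})$, hence $\log{\E{e^{\theta Y_k}}} \preceq g\, \E{Y_k^2}$. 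Summing and using monotonicity of the trace exponential on the PSD cone,
\begin{equation*}
\E{\trace{e^{\theta W}}} \le \trace{\exp\pth{g \sum_k \E{Y_k^2}}} \le (d_1+d_2)\, e^{g\, V(Z)},
\end{equation*}
where the last step uses that $\sum_k \E{Y_k^2}$ is PSD with top eigenvalue $V(Z)$.

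Finally I would combine and optimize. Plugging the MGF bound into the Markov step gives $P(\lambda_{\max}(W) \ge \epsilon) \le (d_1+d_2)\exp(-\theta\epsilon + g\,V(Z))$; bounding $g \le (\theta^2/2)/(1 - S\theta/3)$ for $0 \le \theta < 3/S$ and choosing $\theta = \epsilon/(V(Z) + S\epsilon/3)$ produces exactly the stated tail bound. To obtain the high-probability form I would set the right-hand side equal to $\delta$, i.e.\ solve $\epsilon^2/2 = L\,(V(Z) + S\epsilon/3)$ with $L = \log{(d_1+d_2)/\delta}$; this quadratic has positive root $\tfrac{S}{3}L + \sqrt{S^2 L^2/9 + 2 V(Z) L}$, and the elementary inequality $\sqrt{a+b}\le\sqrt a + \sqrt b$ collapses it to $\sqrt{2 V(Z) L} + \tfrac{2}{3} S L$, matching the claim. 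The main obstacle is the matrix moment-generating-function control in the second step: scalar Chernoff arguments do not transfer directly because the $Y_k$ need not commute, so Lieb's concavity theorem (the deep ingredient, see \citealt{tropp2015introduction}) is essential to linearize the logarithm of the trace MGF; the remaining work is the operator transfer of the scalar Bernstein inequality and the routine optimization over $\theta$.
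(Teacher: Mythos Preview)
Your sketch is correct and follows the standard matrix-Laplace-transform argument from \citet{tropp2015introduction}: Hermitian dilation to reduce to the symmetric case, Lieb's concavity to obtain subadditivity of matrix cumulants, the operator transfer of the scalar Bernstein bound on the MGF, and the usual optimization in $\theta$ followed by inversion of the tail bound. Note, however, that the paper does not supply its own proof of this lemma; it is quoted verbatim as a known result from Tropp's monograph and used as a black box, so there is no ``paper's proof'' to compare against---your proposal simply reconstructs the proof from the cited source.
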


\end{appendices}

\end{document}